\newtheorem{theorem}{Theorem}
\newtheorem{assumption}{Assumption}
\title{Multi-Task Vehicle Routing Solver via Mixture of Specialized Experts under State-Decomposable MDP}
\author{
Yuxin Pan$^{1}$\thanks{This work is done when Yuxin Pan works as an intern in Tencent AI Lab.} \quad Zhiguang Cao$^{2}$ \quad Chengyang Gu$^{3}$ \quad Liu Liu$^{4}$\thanks{Corresponding Authors.} \\
\textbf{Peilin Zhao}$^{4,6}$ \quad \textbf{Yize Chen}$^{5}$$^{\dag}$ \quad \textbf{Fangzhen Lin}$^{1}$$^{\dag}$ \\
$^{1}$The Hong Kong University of Science and Technology \\
$^{2}$Singapore Management University \\
$^{3}$The Hong Kong University of Science and Technology (Guangzhou) \\
$^{4}$Tencent AI Lab \\
$^{5}$University of Alberta \\
$^{6}$Shanghai Jiao Tong University \\
\texttt{yuxin.pan@connect.ust.hk} \\
}
\begin{document}

\maketitle

\begin{abstract}
    Existing neural methods for multi-task vehicle routing problems (VRPs) typically learn unified solvers to handle multiple constraints simultaneously. However, they often underutilize the compositional structure of VRP variants, each derivable from a common set of basis VRP variants. This critical oversight causes unified solvers to miss out the potential benefits of basis solvers, each specialized for a basis VRP variant. To overcome this limitation, we propose a framework that enables unified solvers to perceive the shared-component nature across VRP variants by proactively reusing basis solvers, while mitigating the exponential growth of trained neural solvers.
    Specifically, we introduce a State-Decomposable MDP (SDMDP) that reformulates VRPs by expressing the state space as the Cartesian product of basis state spaces associated with basis VRP variants. More crucially, this formulation inherently yields the optimal basis policy for each basis VRP variant. Furthermore, a Latent Space-based SDMDP extension is developed by incorporating both the optimal basis policies and a learnable mixture function to enable the policy reuse in the latent space. Under mild assumptions, this extension provably recovers the optimal unified policy of SDMDP through the mixture function that computes the state embedding as a mapping from the basis state embeddings generated by optimal basis policies. For practical implementation, we introduce the Mixture-of-Specialized-Experts Solver (MoSES), which realizes basis policies through specialized Low-Rank Adaptation (LoRA) experts, and implements the mixture function via an adaptive gating mechanism. Extensive experiments conducted across VRP variants showcase the superiority of MoSES over prior methods. The source code is available at \href{https://github.com/panyxy/moses_vrp}{https://github.com/panyxy/moses\_vrp}.
\end{abstract}

\section{Introduction} 

Vehicle routing problems (VRPs) are a canonical class of combinatorial optimization problems (COPs) with broad applications spanning transportation~\cite{garaix2010vehicle}, logistics~\cite{cattaruzza2017vehicle}, and manufacturing~\cite{zhang2023review}. While exact methods are computationally prohibitive~\cite{laporte1983branch}, and heuristic methods rely on substantial domain-specific knowledge~\cite{helsgaun2017extension,vidal2012hybrid}, recent studies craft learning-based neural solvers for empirically sound performance with lower makespan and marginal domain expertise~\cite{vinyals2015pointer,nazari2018reinforcement,kool2018attention,kwon2020pomo}. Prevailing approaches primarily target one specific VRP, with tremendous efforts dedicated to out-of-distribution (OOD) generalization~\cite{bi2022learning,zhou2023towards,hou2023generalize,drakulic2024bq,luo2024neural,ye2024glop,zheng2024udc,pan2025hlgp4cvrp,luo2025boosting}. However, practical applications involve multiple VRP variants with diverse node attributes and solution constraints, making the development of specialized neural solver costly due to the need for retraining from scratch.

Recent advances in multi-task neural solvers are conscious of inherent partial similarities present among these variants,
enabling efficient knowledge transfer via shared embeddings. These methods commonly resort to specialized adaptations of a pretrained backbone model~\cite{lin2024cross,correa2025tunensearch}.
Although these methods perform favorably without requiring training from scratch, they struggle to scale due to the combinatorial explosion of VRP variants. This is because arbitrary combinations of orthogonal attributes can introduce new variants, inevitably leading to costly repetitive fine-tuning and adapter proliferation.
As an alternative, unified neural solvers instead are capable of handling numerous VRP variants simultaneously.  
These methods typically unify VRP variants via attribute composition~\cite{liu2024multi}, and propose various architectural innovations~\cite{zhou2024mvmoe,berto2024routefinder,li2024cada}. Although these methods treat VRP variants as combinations of attributes, they fail to fully exploit this compositional structure inherent in VRP variants. This structure implies that attributes of each VRP variant actually derive from a shared set of basis VRP variants, each characterized by a unique attribute. We argue that there exist basis solvers, each tailored to a specific basis VRP variant, may enjoy experience valuable for unified solvers.

To elucidate the possible valuable insights that basis neural solvers could offer for unified solvers, we exemplify the widely-used encoder-decoder based neural solver as a case study. 
The encoder is tasked with yielding static node embeddings for the subsequent step-wise solution decoding. Thus, each basis solver's encoder produces embeddings capturing unique attribute of its corresponding basis VRP variant. Since attributes of each VRP variant stem from shared basis variants, the unified solver's encoder possibly perceive both individual attribute information and inter-attribute connections in its embeddings. 
The decoder gradually builds solutions using static node embeddings and the dynamic context while adhering to constraints. Likewise, two critical properties emerge within each VRP variant: its dynamic context can be broken down into conditionally independent components, each corresponding to a distinct basis variant; and its constraints form a superset of those associated with corresponding basis variants. The internal embeddings of the unified solver's decoder thus may partially coincide with those of the decoders of basis solvers. Therefore, we argue that \emph{the unified solver could benefit from valuable insights of basis solvers}. Moreover, the key insights of a neural solver predominantly reside in its continuous embeddings. This motivates us to reformulate VRPs with the aim of reusing basis solvers in the latent space.

In this paper, to seamlessly bridge between VRP variants and basis VRP variants, we propose a VRPs reformulation through the novel State-Decomposable MDP (SDMDP) framework. To be specific, this framework expresses the state space as the Cartesian product of basis state spaces, each associated with a basis VRP variant. As a result, any state admits a decomposition into conditionally independent basis states. Indeed, as disclosed above, SDMDP is fundamentally grounded in the observation that both static attributes and dynamic contexts of VRP variants originate from their corresponding basis variants. More importantly, SDMDP not only targets towards an optimal unified policy, but also yields optimal basis policies inherently, each tailored to a specific basis variant in cases where a VRP variant comprises only a single attribute. To fully reuse optimal basis policies, we operate under the primary assumptions that any unified policy can generate basis state embeddings and a mixture function exists to map these resulting basis state embeddings to the corresponding state embedding. Built upon this, we further develop a Latent Space-based SDMDP (LS-SDMDP) extension by incorporating the optimal basis policies and the mixture function. In this extension, each basis state inherent in a state is fed to its corresponding optimal basis policy for the basis state embedding. These embeddings are then transformed into the state embedding through the mixture function, which subsequently informs the action selection. Under mild assumptions, LS-SDMDP provably recovers the optimal unified policy for SDMDP. For practical implementation, we introduce the Mixture-of-Specialized-Experts Solver (MoSES), which realizes basis policies through specially designed Low-Rank Adaption (LoRA)~\cite{hu2022lora} experts, each fine-tuned for a specific basis variant from a frozen pretrained backbone model, and implements the mixture function via an adaptive gating mechanism. In addition, we design multiple adaptive gating mechanisms for comparative analysis, and implement MoSES with different backbone networks to show its plug-and-play versatility. Extensive experiments across VRP variants validate the superiority of MoSES against prior methods.

\section{Related Works}

This Section reviews recent advances in task-specific neural VRP solvers, multi-task learning approaches for VRPs, and mixture-of-specialized-experts (MoSE) methods. Please refer to Appendix~\ref{appendix:related_work} for more detailed reviews.

\textbf{Task-Specific Neural VRP solvers.} Neural solvers, individually developed for each specific VRP, fall into three main categories: constructive methods end-to-end infer solutions via autoregressive mechanisms~\cite{vinyals2015pointer,kool2018attention,kwon2020pomo,kim2022sym,manchanda2022generalization,qiu2022dimes,zhou2023towards,bi2022learning,gao2023towards,grinsztajn2023winner,jiang2024ensemble,drakulic2024bq,luo2024neural,luo2025boosting}, iterative methods refine solutions via local search operators until convergence~\cite{lu2019learning,chen2019learning,hottung2020neural,ma2021learning,xin2021neurolkh,ma2023neuopt}, and divide-and-conquer methods decompose problem instances into smaller solvable sub-instances~\cite{fu2021generalize,kim2021learning,cheng2023select,li2021learning,zong2022rbg,hou2023generalize,ye2024glop,zheng2024udc,pan2025hlgp4cvrp}. However, these methods typically require specialized network architectures and retraining from scratch to handle numerous VRP variants, bringing about excessively high costs.

\textbf{Multi-Task Learning for VRPs.}
To cope with practical scenarios involving multiple VRP variants, the efficient transfer learning is leveraged to obtain specialized neural solvers by considering inherent similarities among VRP variants. Current methods predominantly adopt either full-parameter fine-tuning~\cite{correa2025tunensearch} or problem-specific adapter fine-tuning~\cite{lin2024cross,drakulic2024goal}, both built on a pretrained backbone model. However, these methods struggle with the combinatorial explosion of VRP variants. \emph{Notably, although the LoRA adapter is used for problem-specific adaptation in~\cite{lin2024cross}, its potential as part of a unified solver to handle the exponential growth of variants remains unexplored}. In contrast, unified solvers are designed to handle multiple VRP variants simultaneously. These methods commonly unify VRP variants via attribute compositions~\cite{liu2024multi} and employ various architectural innovations such as mixture of experts (MoE)~\cite{zhou2024mvmoe}, modified Transformer~\cite{berto2024routefinder}, large language model (LLM) based encoder~\cite{jiang2024unco}, dual attention model~\cite{li2024cada}, mixture of depths (MoD)~\cite{goh2025shield}, specialized decoders~\cite{wangefficient,li2025toward}, diffusion model~\cite{lei2025boosting}, or mixed-curvature based encoder~\cite{liu2025mixed}. However, the potential benefits of incorporating explicitly specialized basis solvers remain unexplored. \emph{Notably, while MoE is employed in~\cite{zhou2024mvmoe}, it learns an implicit and less interpretable specialization rather than incorporating off-the-shelf basis solvers as experts}.

\textbf{Mixture of Specialized Experts.} Prevailing MoSE methods can be broadly categorized into two paradigms: merging entire models and module composition. Approaches based on merging entire models seek to combine independently trained models to efficiently achieve the performance comparable to model ensembling or multi-task learning~\cite{wortsman2022modelsoups,matena2022mergingmodels,yadav2023tiesmerging,tam2024merging,daheim2024model,ainsworth2023git,stoica2024zipit,rame2023ratatouille,jin2023dataless,yang2024AdaMerging}. 
However, these approaches exhibit inferior OOD generalization, compared to layer-wise aggregation of expert models. Our implementation aligns more closely with the module composition paradigm which supports the finer-grained aggregation. These methods primarily include: selective adapter averaging~\cite{chronopoulou2023adaptersoup,ponti2023combining}, module fusion via arithmetic operations~\cite{chronopoulou2024language,zhang2023composing,ilharco2023editing}, adapter routing based on task similarity~\cite{lv2023parameter,gou2023mixture,wu2023pituning}, and adaptive mixture of LoRA experts (MoLE)~\cite{huang2024lorahub,ye2022eliciting,wu2024mixture,liu2024adamole,caccia2023multihead,dou2024loramoe,pfeiffer2021adapterfusion,muqeeth2024learning,ostapenko2024towards,zadouri2024pushing,gao2024higherlayersneedlora}. However, the potential of MoLE methods for unified VRP solvers remains underexplored, and existing composition methods possibly prove inadequate for multi-task VRP scenarios.

\section{Preliminaries}
\textbf{VRP Variants.} We adopt the vehicle routing environment from~\cite{berto2024routefinder}. A capacitated VRP (CVRP) instance of size $N$ is defined on a graph $\mathcal{G} = \{ \mathcal{V}, \mathcal{E} \}$ with nodes $\mathcal{V}=\{v_{0} \} \bigcup \{ v_{i} \}_{i=1}^{N}$ (depot and customers) and edges $\mathcal{E} = \{e(v_{i}, v_{j})|0 \leq i \neq j \leq N \}$. Each node $v_{i}$ $(i\geq0)$ has coordinates $(x_{i}, y_{i})$, with each customer $v_{j}$ $(j\geq1)$ having demand $q_{j}^{\mathrm{LH}}>0$. Each vehicle has a capacity $Q$. A feasible solution (i.e., tour) $\tau$ consists of subtours, each beginning and ending at the depot while visiting a customer subset, with each customer visited exactly once and total demand of each subtour not exceeding $Q$. The cost function $c(\cdot)$ is the total Euclidean length of the tour. The objective is to find the optimal tour $\tau^{\ast}$ with the minimal cost. CVRP, as a \emph{Basis VRP Variant}, features the capacity constraint \emph{(C)}, serving as the foundation for deriving the remaining \emph{Basis VRP Variants} by adding one of the following constraints. 1) \emph{Open Route (O):} A binary variable $o$ indicates whether the vehicle needs to return to depot $(o=0)$ or not $(o=1)$; 2) \emph{Backhaul (B):} Each customer $v_{j}$ is either a linehaul with $q_{j}^{\mathrm{LH}} > 0$ or a backhaul with $q_{j}^{\mathrm{BH}} < 0$, where linehauls require deliveries and backhauls require pickups. VRPs with backhaul allow traversing both types in a mixed manner, but linehauls must precede backhauls in each subtour~\cite{ropke2006backhauls}. In VRPs without backhaul, only linehaul customers are present; 3) \emph{Duration Limit (L):} Each subtour's cost cannot exceed a threshold $l^{\mathrm{dur}}$; 4) \emph{Time Window (TW):} Each node $v_{i}$ has a time window $[w_{i}^{\mathrm{beg}}, w_{i}^{\mathrm{end}}]$ and a service duration $w_{i}^{\mathrm{dur}}$, requiring service to begin within this window. Vehicles arriving before $w_{i}^{\mathrm{beg}}$ must wait until the window opens, and all vehicles must return to the depot by $w_{0}^{\mathrm{end}}$. Thus, 16 VRP variants emerge from adding arbitrary combinations of the remaining four constraints to CVRP. Please refer to Appendix~\ref{appendix:vrp_variants}~\ref{appendix:problem_instance} for details of the VRP variants.

\textbf{Learning to Solve VRPs.} We adopt the widely-used attention-based neural network~\cite{kool2018attention,berto2024routefinder,li2024cada} to parameterize the VRP policy $\pi_{\theta}$, which generates feasible solutions autoregressively through masked decoding. The encoder generates static node embeddings, which, with the dynamic context of the constructed partial tour $\tau^{(<t)}$, are fed to the
decoder to output the probabilities of valid nodes for the next node $\tau^{(t)}$. The policy factorizes as $\pi_{\theta}(\tau|\mathcal{G}) = \prod_{t=1}^{T}\pi_{\theta}(\tau^{(t)}|\tau^{(<t)}, \mathcal{G})$, where $T$ is the solution horizon. REINFORCE~\cite{williams1992simple} algorithm with reward $-c(\tau)$ is used to optimize the policy.

\textbf{Mixture of LoRA Experts.} LoRA~\cite{hu2022lora} is a parameter-efficient fine-tuning method that adapts pretrained frozen LLMs through low-rank matrix factorization. For a linear layer with weights $W_{0} \in \mathbb{R}^{d_{1} \times d_{2}}$, it introduces trainable matrices $A \in \mathbb{R}^{r \times d_{2}}$ and $B \in \mathbb{R}^{d_{1} \times r}$ (where $r < \min(d_{1},d_{2})$ denotes the LoRA rank), modifying the forward pass as $h^{\mathrm{out}}= W_{0}h^{\mathrm{in}} + \beta BAh^{\mathrm{in}}$, where $h^{\mathrm{in}} \in \mathbb{R}^{d_{2}}$ and $h^{\mathrm{out}} \in \mathbb{R}^{d_{1}}$ are the input and the output, and $\beta\in(0,1]$. To enhance the cross-task generalization, $K$ task-specific LoRA experts $\{ B_{k}A_{k} \}_{k=1}^{K}$ are integrated into the LLM~\cite{liu2024adamole}. A trainable gating function $G(h^{\mathrm{in}}) = \mathrm{softmax}(W^{G}h^{\mathrm{in}})$ with weights $W^{G} \in \mathbb{R}^{K \times  d_{2}}$ computes coefficients $\{\alpha_{k}\}_{k=1}^{K}$ for LoRA experts, yielding the forward pass as $h^{\mathrm{out}}=W_{0}h^{\mathrm{in}} + \sum_{k=1}^{K}\alpha_{k} B_{k}A_{k}h^{\mathrm{in}}$.

\section{Methodology}

In this Section, we first present the State-Decomposable MDP framework to reformulate VRP variants. To efficiently reuse readily available basis neural solvers, we extend it to the Latent Space-based SDMDP to enable the unified neural solver. Finally, we propose the Mixture-of-Specialized-Experts Solver which implements basis neural solvers using specialized LoRA experts.

\subsection{State-Decomposable MDP}
The State-Decomposable Markov Decision Process (SDMDP) framework is described by a 7-tuple $( \mathcal{S}, \mathcal{A}, \mathcal{P}, \mathcal{R}, \mu, \gamma, \bar{\mathcal{S}})$, which extends the standard MDP by introducing a \emph{Full State Space} $\bar{\mathcal{S}}$. This full state space $\bar{\mathcal{S}}$ can be partitioned into $T+1$ disjoint sets over a finite horizon $T$: $\bar{\mathcal{S}} = \bigcup_{t=0}^{T} \bar{\mathcal{S}}_{t}$. For each time step $0 \leq t \leq T$, the full state space $\bar{\mathcal{S}}_{t}$ can be further represented as the Cartesian product of $n+1$ basis state spaces: 
$\bar{\mathcal{S}}_{t} = \prod_{i=0}^{n}\mathcal{S}_{t}^{(i)}$. 
This allows any full state $\bar{s}_{t} \in \bar{\mathcal{S}}_{t}$ to be broken down into $n+1$ conditionally independent basis states: $\bar{s}_{t} = \{s_{t}^{(i)}\}_{i=0}^{n}$,
where $s_{t}^{(i)} \in \mathcal{S}_{t}^{(i)}$, for $i = 0,\ldots,n$. Please note that the full state $\bar{s}_{t}$ is neither observed nor influenced by the agent.

Likewise, the \emph{State Space} $\mathcal{S}$ can be partitioned as: $\mathcal{S} = \bigcup_{t=0}^{T} \mathcal{S}_{t}$. During each episode, prior to the policy rollout, the initial state space $\mathcal{S}_{0}$ is built by randomly sampling $m+1$ basis state spaces, where $0 \leq m \leq n$. This results in 
$\mathcal{S}_{0} = \prod_{i=0}^{m}\mathcal{S}_{0}^{(b_{i})}$,
where $\forall 0 \leq i \neq j \leq m,\;  0 \leq b_{i} \neq b_{j} \leq n$. \emph{Especially, we stipulate that $\mathcal{S}_{0}^{(b_{0})} = \mathcal{S}_{0}^{(0)}$.} Please note that the value of $m$ may vary across different episodes. The initial state distribution $\mu$ is thus defined on $\mathcal{S}_{0}$. The state space $\mathcal{S}_{t}$ ($0 \leq t \leq T$) is defined as the joint space of $m+1$ basis state spaces evolving from the initial sampled basis state spaces, such that 
$\mathcal{S}_{t} = \prod_{i=0}^{m}\mathcal{S}_{t}^{(b_{i})}$.
The state $s_{t} \in \mathcal{S}_{t}$ can be decomposed into $m+1$ conditionally independent basis states: 
$s_{t} = \{ s_{t}^{(b_{i})} \}_{i=0}^{m}$,
where $s_{t}^{(b_{i})} \in \mathcal{S}_{t}^{(b_{i})}$, for $i = 0,\ldots,m$.

The \emph{Action Space} $\mathcal{A}$ is conditioned on the state $s_{t}$, denoted as $\mathcal{A}_{t}=\mathcal{A}(s_{t})$, indicating that the basis states in the state $s_{t}$ jointly define the feasible action space. The \emph{Transition Probability Function} $\mathcal{P}$ returns the probability distribution of the next state $s_{t+1} \in \mathcal{S}_{t+1}$ given the current state-action pair $(s_{t}, a_{t}) \in \mathcal{S}_{t}\times\mathcal{A}_{t}$. Due to the conditional independence of the basis states, the transition probability function $\mathcal{P}$ factorizes as: $\mathcal{P}(s_{t+1}|s_{t}, a_{t}) = \prod_{i=0}^{m} \mathcal{P}(s_{t+1}^{(b_{i})}|s_{t}^{(b_{i})}, a_{t})$. The \emph{Reward Function} $\mathcal{R}$ maps the state-action pair $(s_{t}, a_{t})$ to a scalar reward $R_{t}$. $\gamma\in(0,1]$ is the discount factor. Given the state $s_{t}$, the policy $\pi$ yields the probability distribution over $\mathcal{A}_{t}$.

We abuse $\tau$ to denote the trajectory $(s_{0}, a_{0}, \ldots, s_{T})$. The objective of the SDMDP framework is to identify an \emph{Optimal Unified Policy} $\pi^{\ast} = \arg\max_{\pi} \mathbb{E}_{s\sim\mu} \mathbb{E}_{\tau \sim (\pi, \mathcal{P})}[\sum_{t=0}^{T-1} \gamma^{t} R_{t}|s_{0}=s] = \arg\max_{\pi} \mathbb{E}_{s\sim\mu} [V^{\pi, \mathcal{P}}(s)]$, where $V^{\pi, \mathcal{P}}$ is the value function. This framework defines the \emph{0-th Basis Task} using the initial state distribution over $\mathcal{S}_{0}^{(0)}$, while the \emph{$i$-th Basis Task} $(1 \leq i \leq n)$ extends this with initial state distribution over $\mathcal{S}_{0}^{(0)} \times \mathcal{S}_{0}^{(i)}$. The $i$-th \emph{Optimal Basis Policy} $\pi^{(i)\ast}$ $(0 \leq i \leq n)$ is the policy that maximizes the expected cumulative rewards for the $i$-th basis Task.

\textbf{\emph{Remark.}} To reformulate VRP variants within the SDMDP framework, we begin by aligning basis states with basis VRP variants. At each step $t$, the state $s_{t}$ decomposes into 5 basis states: 0) \emph{CVRP:} $s_{t}^{(0)}$ consists of node coordinates $\{ x_{i}, y_{i} \}_{i=0}^{N}$, linehaul demands $\{ q_{i}^{\mathrm{LH}} \}_{i=1}^{N}$, and the remaining linehaul capacity $Q_{t}^{\mathrm{LH}}$; 1) \emph{Open Route:} $s_{t}^{(1)}$ introduces the binary variable $o$; 2) \emph{Backhaul:} $s_{t}^{(2)}$ integrates backhaul demands $\{ q_{i}^{\mathrm{BH}} \}_{i=1}^{N}$ and the remaining backhaul capacity $Q_{t}^{\mathrm{BH}}$; 3) \emph{Duration Limit:} $s_{t}^{(3)}$ encompasses the duration limit $l^{\mathrm{dur}}$ and the current traveled length $l_{t}^{\mathrm{cur}}$ along the present subtour; 4) \emph{Time Window:} $s_{t}^{(4)}$ includes time windows $\{ w_{i}^{\mathrm{beg}}, w_{i}^{\mathrm{end}} \}_{i=0}^{N}$, service durations $\{ w_{i}^{\mathrm{dur}} \}_{i=0}^{N}$, and the current time $w_{t}^{\mathrm{cur}}$. Each VRP variant can be formed by composing $(m+1)$ ($0 \leq m  \leq 4$) basis state spaces, using SDMDP's initial sampling mechanism prior to the policy rollout. During rollout, each current basis state evolves conditionally independently from its corresponding previous basis state given the action. The action space is defined by a masking mechanism that filters out nodes according to visitation status and VRP constraints. Please refer to Appendix~\ref{appendix:vrp_constraints}~\ref{appendix:vrp_example} for details on VRP constraints and formulation. There exist 5 basis tasks, each associated with a basis VRP variant.

\begin{figure*}[t]
\centering
\subfigure[SDMDP framework]{\label{fig:sdmdp}\includegraphics[width=0.46\textwidth]{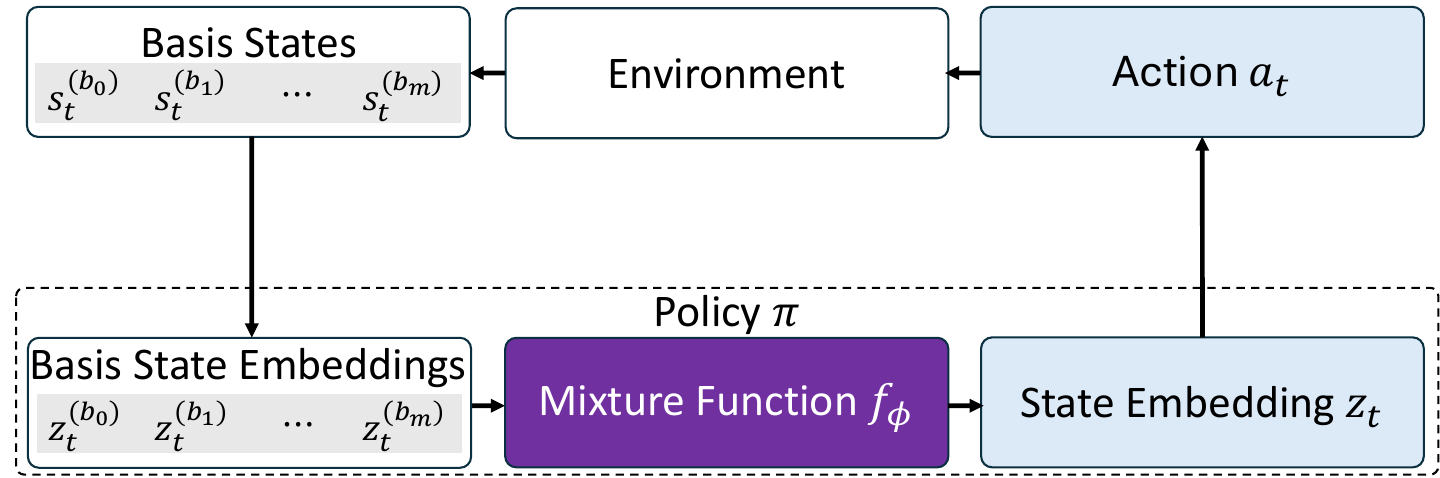}}
\hspace{0.05\textwidth}
\subfigure[LS-SDMDP framework]{\label{fig:ls_sdmdp}\includegraphics[width=0.46\textwidth]{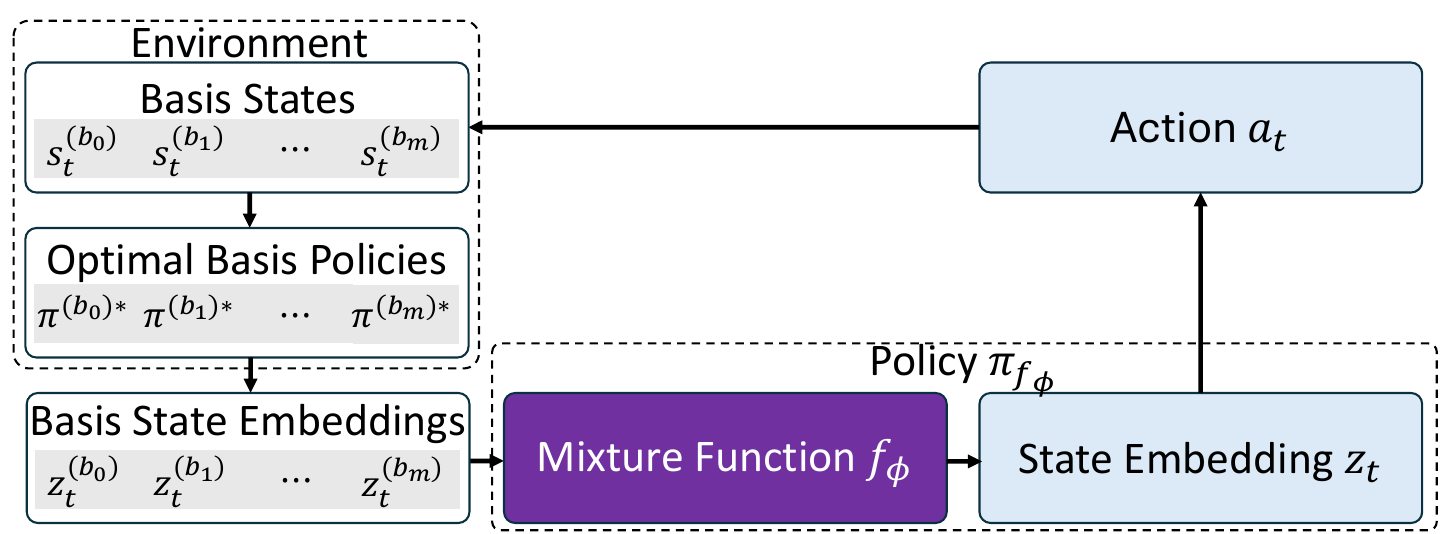}}
\caption{\textbf{Left:} SDMDP employs the policy that incorporates a mixture function. \textbf{Right:} LS-SDMDP integrates both the optimal basis policies and the mixture function. \vspace{-10pt}}
\label{fig:defence}
\end{figure*}

\begin{theorem}
\label{thm:1}
The optimal unified policy $\pi^{\ast}$ and the $i$-th optimal basis policy $\pi^{(i)\ast}$ coincide in their value functions for each state $s_{t}$ associated with the $i$-th basis task: $V^{\pi^{\ast}, \mathcal{P}}(s_{t}) = V^{\pi^{(i)\ast},\mathcal{P}}(s_{t})$. Furthermore, if both the optimal unified policy and the optimal basis policy are unique, then for each state-action pair $(s_{t}, a_{t})$ corresponding to the $i$-th basis task, it holds that $\pi^{\ast}(a_{t}|s_{t}) = \pi^{(i)\ast}(a_{t}|s_{t})$. 
\end{theorem}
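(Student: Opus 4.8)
The plan is to exhibit the $i$-th basis task as a \emph{dynamically closed sub-MDP} of the SDMDP and then apply finite-horizon dynamic programming to it; both claims then follow from uniqueness of the optimal value function (and, for the second claim, of the optimal policy). Fix $i$ and let $\mathcal{M}^{(i)}$ be the state set $\bigcup_{t=0}^{T}\big(\mathcal{S}_{t}^{(0)}\times\mathcal{S}_{t}^{(i)}\big)$ when $i\geq 1$ (and $\bigcup_{t=0}^{T}\mathcal{S}_{t}^{(0)}$ when $i=0$), carrying the action sets, transition kernel and reward function inherited from the SDMDP. First I would verify invariance: in an episode with configuration $\{b_{0},b_{1}\}=\{0,i\}$ (so $m=1$), the factorization $\mathcal{P}(s_{t+1}\mid s_{t},a_{t})=\mathcal{P}(s_{t+1}^{(0)}\mid s_{t}^{(0)},a_{t})\,\mathcal{P}(s_{t+1}^{(i)}\mid s_{t}^{(i)},a_{t})$ forces every reachable $s_{t+1}$ into $\mathcal{S}_{t+1}^{(0)}\times\mathcal{S}_{t+1}^{(i)}$, so the dynamics never leave $\mathcal{M}^{(i)}$. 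Because the per-component kernels, the reward $\mathcal{R}$, the state-conditioned action set $\mathcal{A}(\cdot)$, the horizon $T$ and the discount $\gamma$ depend only on the active basis components — this is exactly the conditional independence built into the SDMDP, together with the alignment with VRP variants in the Remark — the restricted model $\mathcal{M}^{(i)}$ coincides with the MDP that defines the $i$-th basis task; in particular $\pi^{(i)\ast}$ is an optimal policy of $\mathcal{M}^{(i)}$, and any SDMDP policy restricted to $\mathcal{M}^{(i)}$ is a well-defined policy on it.

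Next I would run dynamic programming on $\mathcal{M}^{(i)}$. Since $T$ is finite, the optimal value function $V^{\ast}_{(i)}$ of $\mathcal{M}^{(i)}$ is the unique solution of the backward recursion $V^{\ast}_{(i)}(s_{T})=0$ and $V^{\ast}_{(i)}(s_{t})=\max_{a_{t}\in\mathcal{A}(s_{t})}\big[\mathcal{R}(s_{t},a_{t})+\gamma\,\mathbb{E}_{s_{t+1}\sim\mathcal{P}(\cdot\mid s_{t},a_{t})}V^{\ast}_{(i)}(s_{t+1})\big]$, and every policy choosing a maximizing action at each state attains it; hence $V^{\pi^{(i)\ast},\mathcal{P}}(s_{t})=V^{\ast}_{(i)}(s_{t})$ at every $i$-basis-task state, while any SDMDP policy has value at most $V^{\ast}_{(i)}(s_{t})$ there. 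For $\pi^{\ast}$, I would decompose the unified objective over the configurations $\mathcal{C}$ the initial-state mechanism can sample, $\mathbb{E}_{s\sim\mu}[V^{\pi,\mathcal{P}}(s)]=\sum_{\mathcal{C}}w_{\mathcal{C}}\,\mathbb{E}_{s_{0}\sim\mu_{\mathcal{C}}}[V^{\pi,\mathcal{P}}(s_{0})]$ with $w_{\mathcal{C}}>0$; by the invariance above, the term for $\mathcal{C}^{(i)}=\{0,i\}$ depends on $\pi$ only through its restriction to the states of $\mathcal{M}^{(i)}$, which occur in no other configuration, so the objective separates across configurations. Thus $\pi^{\ast}$ maximizes each term, and since $w_{\mathcal{C}^{(i)}}>0$ its restriction to $\mathcal{M}^{(i)}$ is an optimal policy of $\mathcal{M}^{(i)}$, giving $V^{\pi^{\ast},\mathcal{P}}(s_{t})=V^{\ast}_{(i)}(s_{t})=V^{\pi^{(i)\ast},\mathcal{P}}(s_{t})$, the first claim. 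For the second, if $\pi^{(i)\ast}$ is unique it is the unique optimal policy of $\mathcal{M}^{(i)}$, and since $\pi^{\ast}$ (also assumed unique) restricts to an optimal policy of $\mathcal{M}^{(i)}$, the two agree on every $(s_{t},a_{t})$ with $s_{t}$ an $i$-basis-task state, i.e.\ $\pi^{\ast}(a_{t}\mid s_{t})=\pi^{(i)\ast}(a_{t}\mid s_{t})$.

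The dynamic-programming uniqueness above is routine; the main obstacle is justifying that $\pi^{\ast}$ is \emph{forced} to behave optimally on the $i$-th basis task's states. This rests on two points I would make explicit: (a) $\mathcal{M}^{(i)}$ is dynamically closed, so $\pi^{\ast}$'s behaviour inside it is decoupled from its behaviour on other configurations; and (b) the configuration $\{0,i\}$ carries strictly positive probability under $\mu$ and its initial distribution (together with reachability) covers the states in question, so it genuinely contributes to the unified objective. I would also spell out the implicit modelling assumption that $\mathcal{P},\mathcal{R},\mathcal{A},T$ and $\gamma$ restricted to $\mathcal{M}^{(i)}$ agree with the $i$-th basis task's model, since the argument requires the two MDPs to coincide rather than merely resemble one another.
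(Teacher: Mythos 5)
Your proposal is correct and takes essentially the same route as the paper's proof: both identify the $i$-th basis task's state space as a (dynamically closed) subset of the unified state space, invoke pointwise maximality of the optimal value function on that subset to equate $V^{\pi^{\ast},\mathcal{P}}$ and $V^{\pi^{(i)\ast},\mathcal{P}}$ there, and then use uniqueness to identify the policies. Your write-up merely makes explicit two steps the paper asserts without argument, namely the dynamical closure of the restricted sub-MDP and the fact that the $\mu$-optimal unified policy must be optimal on every positively weighted initial configuration.
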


\begin{proof}
    Please refer to Appendix~\ref{appendix:proof:1} for a detailed proof of Theorem~\ref{thm:1}.
\end{proof}

\begin{assumption}
\label{assup:1}
In the SDMDP framework, any state $s$ is composed of $m+1$ conditionally independent basis states, denoted as $s = \{s^{(b_{i})} \}_{i=0}^{m}$. 
Accordingly, we assume that any policy $\pi$ is capable of extracting the basis state embedding $z^{(b_{i})} \in \mathbb{R}^{d}$ for each $s^{(b_{i})}$, where $i=0, \ldots, m$. Under this assumption, we further posit that there exists a deterministic bijective mixture function $f_{\phi}:\mathcal{S}\times\prod_{i=0}^{m}\mathbb{R}^{d}\rightarrow\mathbb{R}^{d}$, parameterized by $\phi$, which maps the basis state embeddings 
$\{z^{(b_{i})} \}_{i=0}^{m}$ 
to the state embedding $z \in \mathbb{R}^{d}$ for the given state $s$, represented as 
$z=f_{\phi}(z^{(b_{0})}, \ldots, z^{(b_{m})};s)$. 
Thus, the policy defined over the action space can be rewritten as
\begin{equation}
\small
\label{equ:pi_decomp}
    \pi(a|s) = \sum_{z}\pi(a|z)\pi(z|s)=\sum_{z^{(b_{0})}, \dots, z^{(b_{m})}} \pi(a|f_{\phi}(z^{(b_{0})}, \dots, z^{(b_{m})}; s))\prod_{i=0}^{m}\pi(z^{(b_{i})}|s^{(b_{i})})
\end{equation}
\end{assumption}
where $\prod_{i=0}^{m}\pi(z^{(b_{i})}|s^{(b_{i})})=\pi(z^{(b_{0})}, \dots, z^{(b_{m})}|s)$. The second equivalence in Equation~\ref{equ:pi_decomp} holds because $f_{\phi}$ is assumed to be deterministic.

\begin{assumption}
\label{assup:2}
For any state $s$, and for any two policies $\pi$ and $\pi^{\prime}$, we assume that if $\forall a\in \mathcal{A}(s), \pi(a|s) = \pi^{\prime}(a|s)$, then $\forall z\in\mathbb{R}^{d}, \pi(z|s) = \pi^{\prime}(z|s)$, and conversely.
\end{assumption}

\textbf{\emph{Remark.}} Theorem~\ref{thm:1} discloses the partial consensus between the optimal unified policy and each optimal basis policy, laying the foundation of the policy reuse. Motivated by the observation that neural networks primarily encode knowledge in their embeddings, with distinct features captured at different layers, we develop a latent space approach for reusing optimal basis policies. Assumptions~\ref{assup:1}~\ref{assup:2} serve as the prerequisites for Theorem~\ref{thm:2}. Figure~\ref{fig:sdmdp} illustrates the SDMDP framework.

\subsection{Latent Space-based SDMDP}

The Latent Space-based SDMDP (LS-SDMDP) incorporates all elements of SDMDP, and extends SDMDP by introducing $(\pi^{(0)\ast}, \ldots, \pi^{(n)\ast}, f_{\phi})$. During each episode, at each time step $0 \leq t\leq T$, a state $s_{t} \in \mathcal{S}_{t}$ is sampled from either the initial distribution $\mu(s_{0})$ or the transition probability function $\mathcal{P}(s_{t+1}|s_{t}, a_{t})$, both of which are well defined within the SDMDP framework. Following that, each component $s_{t}^{(b_{i})}$ of $s_{t}$ ($0 \leq i \leq m$) is fed to the corresponding optimal basis policy $\pi^{(b_{i})\ast}$ for the embedding $z_{t}^{(b_{i})} \in \mathbb{R}^{d}$. Collectively, these embeddings form a tuple, denoted as $(z_{t}^{(b_{0})}, \ldots, z_{t}^{(b_{m})})$. To infer the state embedding $z_{t}$ for the state $s_{t}$, $f_{\phi}$ takes as input this tuple along with the state, formally expressed as $z_{t} = f_{\phi}(z_{t}^{(b_{0})}, \ldots, z_{t}^{(b_{m})}; s_{t})$. The policy, exclusively designed for LS-SDMDP, directly observes the embedding vector $z_{t}$ and outputs an action distribution, denoted as $\pi_{z}$. For brevity, the integration of $f_{\phi}$ into the policy $\pi_{z}$ is abbreviated as $\pi_{f_{\phi}}$, such that $\pi_{z}(\cdot|z_{t}) = \pi_{f_{\phi}}(\cdot|z_{t}^{(b_{0})}, \ldots, z_{t}^{(b_{m})};s)$. Accordingly, the underlying initial distribution over $z_{0}$ and the transition probability function for $z_{t+1}$ given $s_{t}$ and $a_{t}$ are written as follows 
\begin{equation}
\small
\label{equ:dist_decomp}
\mu_{z}(z_{0}) =\mu(s_{0})\prod_{i=0}^{m}\pi^{(b_{i})\ast}(z_{0}^{(b_{i})}|s_{0}^{(b_{i})}); \quad
\mathcal{P}_{z}(z_{t+1}|s_{t}, a_{t}) = \mathcal{P}(s_{t+1}|s_{t}, a_{t}) \prod_{i=0}^{m}\pi^{(b_{i})\ast}(z_{t+1}^{(b_{i})}|s_{t+1}^{(b_{i})}).
\end{equation}
Equation~\ref{equ:dist_decomp} holds due to the deterministic and bijective nature of $f_{\phi}$. The reward function remains unchanged. The objective is to discover an optimal unified policy $\pi_{f_{\phi}}^{\ast}$ which can maximize the expected discounted cumulative rewards.

\begin{theorem}
\label{thm:2}
Let $J(\pi, \mathcal{P}, \mu)$ and $J(\pi_{f_{\phi}}, \mathcal{P}_{z}, \mu_{z})$ denote the objective functions (expected returns) in SDMDP and LS-SDMDP, respectively. By Theorem~\ref{thm:1} and Assumptions~\ref{assup:1}~\ref{assup:2}, it follows that the values of the objective functions are equal at their respective optimal policies, formally written as $J(\pi^{\ast}, \mathcal{P}, \mu) = J(\pi_{f_{\phi}}^{\ast}, \mathcal{P}_{z}, \mu_{z})$. Moreover, the value functions of SDMDP and LS-SDMDP at their respective optimal policies satisfy the following relationship $V^{\pi^{\ast}, \mathcal{P}}(s) = \mathbb{E}_{z^{(b_{0})} \sim \pi^{(b_{0})\ast}}\cdots \mathbb{E}_{z^{(b_{m})} \sim \pi^{(b_{m})\ast}}V^{\pi_{f_{\phi}}^{\ast}, \mathcal{P}_{z}}(z^{(b_{0})},\ldots,z^{(b_{m})};s)$.
\end{theorem}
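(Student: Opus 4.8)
\textbf{Proof proposal for Theorem~\ref{thm:2}.}

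The plan is to establish a measure-preserving correspondence between trajectories in SDMDP and trajectories in LS-SDMDP, and then transport the objective and value-function identities across this correspondence. First I would set up the pushforward map: given a trajectory $\tau = (s_0, a_0, \ldots, s_T)$ in SDMDP, the construction in Section~4.2 deterministically augments each $s_t$ to the tuple $(z_t^{(b_0)}, \ldots, z_t^{(b_m)})$ by applying the optimal basis policies $\pi^{(b_i)\ast}$, and then $f_\phi$ collapses this tuple (together with $s_t$) to a single embedding $z_t$. Because $f_\phi$ is deterministic and bijective, the embedding $z_t$ and the tuple $(z_t^{(b_0)}, \ldots, z_t^{(b_m)})$ carry exactly the same information (and the state $s_t$ is recoverable from the tuple under the bijection), so a trajectory in the $z$-space and a trajectory in the $s$-space paired with basis-embedding draws are in bijective correspondence. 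The key identities to verify are that $\mu_z$ and $\mathcal{P}_z$ in Equation~\ref{equ:dist_decomp} are precisely the pushforwards of $\mu$ and $\mathcal{P}$ under this augmentation, which is immediate from their definitions, and that the reward is unchanged since it depends only on $(s_t, a_t)$, which are preserved.

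Next I would handle the policy side. By Assumption~\ref{assup:1}, any policy $\pi$ in SDMDP decomposes as in Equation~\ref{equ:pi_decomp}, i.e.\ it factors through the basis embeddings and $f_\phi$; conversely, any policy $\pi_{f_\phi}$ in LS-SDMDP, when composed with the fixed basis-embedding kernels $\prod_i \pi^{(b_i)\ast}(z^{(b_i)}|s^{(b_i)})$ and marginalized over the $z$'s, induces a well-defined policy on the SDMDP action space. Assumption~\ref{assup:2} guarantees that this correspondence is faithful at the level of action distributions: two SDMDP policies agreeing on all action probabilities agree on all $z$-distributions and vice versa, so the map between SDMDP policies and LS-SDMDP policies (modulo the fixed basis kernels) is a bijection on the relevant equivalence classes. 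Under this bijection, for any matched pair the integrand $\sum_{t} \gamma^t R_t$ along corresponding trajectories is identical, and the trajectory measures agree by the pushforward property above; hence $J(\pi, \mathcal{P}, \mu) = J(\pi_{f_\phi}, \mathcal{P}_z, \mu_z)$ for matched policies. Taking the supremum over each side and invoking the bijection yields $J(\pi^\ast, \mathcal{P}, \mu) = J(\pi_{f_\phi}^\ast, \mathcal{P}_z, \mu_z)$; here I would also note that the LS-SDMDP optimum is attained at the image of $\pi^\ast$ under the correspondence, using Assumption~\ref{assup:1} to realize $\pi^\ast$'s decomposition through $f_\phi$ and the optimal basis policies (this is where Theorem~\ref{thm:1} enters, since it certifies that restricting to the optimal basis policies $\pi^{(b_i)\ast}$ for the embedding extraction does not sacrifice optimality on the basis-task components).

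For the value-function relation, I would run the same argument conditionally on the initial state $s$. Fix $s_0 = s$; then $V^{\pi^\ast, \mathcal{P}}(s)$ is the expected return of the matched trajectory distribution started at $s$. Decomposing the initial-state randomness of the LS-SDMDP process according to $\mu_z$ restricted to $s_0 = s$, namely drawing $z_0^{(b_i)} \sim \pi^{(b_i)\ast}(\cdot|s^{(b_i)})$ independently across $i$ and setting $z_0 = f_\phi(z_0^{(b_0)}, \ldots, z_0^{(b_m)}; s)$, the returns match trajectory-by-trajectory, giving $V^{\pi^\ast, \mathcal{P}}(s) = \mathbb{E}_{z^{(b_0)} \sim \pi^{(b_0)\ast}} \cdots \mathbb{E}_{z^{(b_m)} \sim \pi^{(b_m)\ast}} V^{\pi_{f_\phi}^\ast, \mathcal{P}_z}(z^{(b_0)}, \ldots, z^{(b_m)}; s)$ as claimed.

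The main obstacle I anticipate is making the bijection between SDMDP and LS-SDMDP policies rigorous in a way that actually certifies $\pi_{f_\phi}^\ast$ lands at the image of $\pi^\ast$ rather than merely showing the two optima have comparable values. The subtlety is that LS-SDMDP policies observe only $z_t$, so one must check that restricting attention to such policies loses nothing: this requires combining Assumption~\ref{assup:1} (every SDMDP policy factors through the embedding) with the fact that the basis-embedding kernels are held fixed at the $\pi^{(b_i)\ast}$, so that the ``information'' available to $\pi_{f_\phi}$ through $z_t$ is exactly the information $\pi^\ast$ uses. Care is also needed with the sums over $z$ in Equation~\ref{equ:pi_decomp} if $\mathbb{R}^d$ is treated as continuous — these should be read as integrals against the appropriate kernels, and the deterministic-bijective hypothesis on $f_\phi$ is what keeps the change of variables clean. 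I would address this by phrasing the whole argument as an equality of trajectory measures on a common probability space and deriving both the objective identity and the value-function identity as corollaries of that single measure identity.
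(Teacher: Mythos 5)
Your overall strategy matches the paper's in substance: factor every policy through the basis embeddings via Assumption~\ref{assup:1}, use Theorem~\ref{thm:1} and Assumption~\ref{assup:2} to replace the unified policy's embedding conditionals by the fixed optimal basis policies, recognize that the resulting trajectory measure is exactly $(\mu_{z}, \mathcal{P}_{z})$ from Equation~\ref{equ:dist_decomp}, and conclude that the image of $\pi^{\ast}$ must be optimal for LS-SDMDP (otherwise $\pi^{\ast}$ would not be optimal for SDMDP). Your handling of the value-function identity by conditioning on $s_{0}=s$ and matching returns under the pushforward is a legitimate alternative to the paper's route, which instead writes the Bellman equations for both value functions and runs a backward induction from $t=T-1$; the two are interchangeable once the measure identity is in hand.

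The obstacle you flag at the end is, however, a genuine gap rather than a technicality, and it is precisely the point where the paper's own proof must add something beyond the stated hypotheses. What Theorem~\ref{thm:1} plus Assumption~\ref{assup:2} actually deliver is $\pi^{\ast}(z|s)=\pi^{(b_{i})\ast}(z|s)$ only on states $s=(s^{(b_{0})},s^{(b_{i})})$ belonging to the $b_{i}$-th basis task (and only under the uniqueness proviso of Theorem~\ref{thm:1}); via the bijectivity of $f_{\phi}$ this gives equality of the \emph{products} $\pi^{\ast}(z^{(b_{0})}|s^{(b_{0})})\pi^{\ast}(z^{(b_{i})}|s^{(b_{i})})=\pi^{(b_{i})\ast}(z^{(b_{0})}|s^{(b_{0})})\pi^{(b_{i})\ast}(z^{(b_{i})}|s^{(b_{i})})$, not of the individual factors. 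The substitution your correspondence needs — replacing $\pi^{\ast}(z_{t}^{(b_{i})}|s_{t}^{(b_{i})})$ by $\pi^{(b_{i})\ast}(z_{t}^{(b_{i})}|s_{t}^{(b_{i})})$ inside a general $(m+1)$-component state — requires the factor-wise equality, and requires it to transfer from two-component basis-task states to arbitrary states. The paper closes this by explicitly stating, inside the proof, that it \emph{additionally assumes} $\pi^{\ast}(z_{t}^{(b_{i})}|s_{t}^{(b_{i})})=\pi^{(b_{i})\ast}(z_{t}^{(b_{i})}|s_{t}^{(b_{i})})$. So your plan is sound provided you either import that extra assumption or prove the factor-wise identity; as written, the claimed bijection between SDMDP policies and LS-SDMDP policies ``modulo the fixed basis kernels'' does not follow from Assumptions~\ref{assup:1} and~\ref{assup:2} alone, and the sums in Equation~\ref{equ:pi_decomp} should indeed be read as integrals against kernels as you note.
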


\begin{proof}
    Please refer to Appendix~\ref{appendix:proof:2} for a detailed proof of Theorem~\ref{thm:2}.
\end{proof}

\textbf{\emph{Remark.}} Theorem~\ref{thm:2} indicates that the optimal unified policy $\pi^{\ast}$ of SDMDP can be recovered given both the optimal unified policy $\pi_{f_{\phi}}^{\ast}$ of LS-SDMDP and optimal basis policies $(\pi^{(0)\ast}, \ldots, \pi^{(n)\ast})$. Figure~\ref{fig:ls_sdmdp} depicts the LS-SDMDP framework.

\subsection{Mixture-of-Specialized-Experts Solver}
For the practical implementation, we need to realize both the optimal basis policies and the mixture function to enable the policy reuse in the latent space. If we consider the embedding $z^{(b_{i})}$ ($0 \leq i \leq m$) generated by the optimal basis policy $\pi^{(b_{i})\ast}$ to encapsulate all layer-wise embeddings, the mixture function would effectively perform entire-model merging. However, as disclosed in ~\cite{ostapenko2024towards, muqeeth2024learning}, such entire-model merging leads to inferior OOD generalization. We therefore introduce the Mixture-of-Specialized-Experts Solver (MoSES), which implements a layer-wise and token-wise aggregation approach through three stages: 1) pretraining a shared backbone model; 2) fine-tuning specialized experts, and 3) dynamically aggregating experts.

\textbf{Pretraining a shared backbone model.} Since all the other basis VRP variants are derived from CVRP by adding different constraints, we pretrain the shared backbone model exclusively on CVRP instances and freeze its parameters throughout subsequent stages. Please note the backbone model indeed acts as the 0-th optimal basis policy $\pi^{(0)\ast}$.

\textbf{Fine-tuning specialized experts.} To obtain the optimal basis policy $\pi^{(i)\ast}$ ($i>0$), we employ the specialized LoRA expert that performs parameter-efficient fine-tuning on the frozen backbone model. Problem instances of any basis VRP variant (excluding CVRP) are OOD inputs to the frozen backbone model, leading to task-misaligned features in embeddings generated from the backbone model. We thus propose the Gated-LoRA method that uses a dynamic gating mechanism with the trainable weights $W^{g} \in \mathbb{R}^{d_{2}}$ to suppress these irrelevant features from the backbone. Specifically, for the $i$-th optimal basis policy $\pi^{(i)\ast}$ ($i>0$), the forward pass through a frozen backbone linear layer with weights $W_{0}$ augmented with trainable LoRA matrices $B_{i}$ and $A_{i}$ is computed as:
\begin{equation}
    h^{\mathrm{out}} = \mathrm{sigmoid}(\langle W^{g},h^{\mathrm{in}} \rangle)W_{0}h^{\mathrm{in}} + B_{i}A_{i}h^{\mathrm{in}}.
\end{equation}

\textbf{Dynamically aggregating experts.} In this stage, we freeze both the backbone weights $W_{0}$ and the LoRA weights $\{ B_{i}A_{i} \}_{i=1}^{4}$ in the unified policy. The adaptive gating function $G(h^{\mathrm{in}}) = \mathrm{act}(W^{G}h^{\mathrm{in}})$ with weights $W^{G}\in\mathbb{R}^{5 \times d_{2}}$ computes coefficients $\{ \alpha_{i} \}_{0}^{4}$, where $\mathrm{act}(\cdot)$ is the activation function. Let $\tilde{h}^{\mathrm{out}}$ denote the output of the unknown optimal unified solver. The aggregation goal is to determine coefficients $\{ \alpha_{i} \}_{0}^{4}$ satisfying the linear system $\sum_{i=0}^{4}\alpha_{i}W_{i}h^{\mathrm{in}} = \tilde{h}^{\mathrm{out}}$ (where $W_{i} = B_{i}A_{i}$, for $i \geq 1$). However, this linear system may not be solvable. We thus introduce the trainable LoRA weights $\hat{B}\hat{A}$ to model the residual. The forward pass of a linear layer in the unified policy is written as:
\begin{equation}
    h^{\mathrm{out}} = \alpha_{0}W_{0}h^{\mathrm{in}} + \sum_{i=1}^{4}\alpha_{i}B_{i}A_{i}h^{\mathrm{in}} + \hat{B}\hat{A}h^{\mathrm{in}}; \quad \{\alpha_{i}\}_{i=0}^{4} = G(h^{\mathrm{in}})=\mathrm{act}(W^{G}h^{\mathrm{in}}).
\end{equation}
We propose three activation functions for the gating mechanism: 1) $\mathrm{softmax}(\cdot)$ enforces the convex combination; 2) $\mathrm{norm\_softplus}(\cdot)$ 
$L_{1}$-normalizes $\mathrm{softplus}(W^{G}h^{\mathrm{in}})$ to prevent gradient vanishing; 3) $\mathrm{sigmoid}(\cdot)$ expands the coefficient space by relaxing the unit-sum constraint. Additionally, we introduce three routing strategies: 1) \emph{Dense Routing} activates all optimal basis policies ; 2) \emph{Variant-Aware Routing-\uppercase\expandafter{\romannumeral1}} selects Top-$K$ optimal basis policies, where $K$ is the number of basis variants in the current VRP variant. 3) \emph{Variant-Aware Routing-\uppercase\expandafter{\romannumeral2}} selects optimal basis policies corresponding to basis variants present in the current VRP variant;

\section{Experiments}
In this Section, we empirically validate the superiority of MoSES through evaluations on 16 VRP variants with five constraints, supplemented by hyperparameter and ablation studies. All experiments are conducted on NVIDIA Tesla V100-32GB GPUs on and Intel(R) Xeon(R) Platinum 8255C CPU @ 2.50GHz. Please refer to Appendix~\ref{appendix:experiment} for more empirical results, hyperparameter and ablation studies, along with the analysis.

\begin{table*}[!t]
    \caption{Performance comparison of multi-task VRP solvers. The lower, the better ($\downarrow$).}
    \label{tab:main}
    \begin{center}
    \renewcommand\arraystretch{1.05}
    \resizebox{0.98\textwidth}{!}{ 
    \begin{tabular}{ll|cccccc|ll|cccccc}
      \toprule
      \multicolumn{2}{c|}{\multirow{2}{*}{Solver}} & \multicolumn{3}{c}{\textbf{$N=50$}} & \multicolumn{3}{c|}{$N=100$} & \multicolumn{2}{c|}{\multirow{2}{*}{Solver}} &
      \multicolumn{3}{c}{\textbf{$N=50$}} & \multicolumn{3}{c}{$N=100$} \\
      
      \cmidrule(lr){3-5} \cmidrule(lr){6-8} \cmidrule(lr){11-13} \cmidrule(lr){14-16} 
      
       & & Cost & Gap & Time & Cost & Gap & Time & & & Cost & Gap & Time & Cost & Gap & Time \\
      \midrule
      
\multirow{10}*{\rotatebox{90}{CVRP}} & HGS-PyVRP & 10.372 & * & 10.4m & 15.628 & * & 20.8m  & 
\multirow{10}*{\rotatebox{90}{VRPTW}} & HGS-PyVRP & 16.031 & * & 10.4m & 25.423 & * & 20.8m  \\

& OR-Tools & 10.572 & 1.907\% & 10.4m & 16.280 & 4.178\% & 20.8m & 
& OR-Tools & 16.089 & 0.347\% & 10.4m & 25.814 & 1.506\% & 20.8m  \\

& MTPOMO  & 10.518  & 1.408\% & 2s    & 15.933 & 1.986\% & 8s   &  
& MTPOMO  & 16.409  & 2.358\% & 2s    & 26.410 & 3.863\% & 9s   \\

& MVMoE   & 10.501  & 1.241\% & 3s    & 15.888 & 1.692\% & 11s  &  
& MVMoE   & 16.405  & 2.333\% & 3s    & 26.391 & 3.793\% & 11s  \\

& RF-POMO & 10.508  & 1.315\% & 2s    & 15.908 & 1.830\% & 8s   &  
& RF-POMO & 16.366  & 2.089\% & 2s    & 26.335 & 3.570\% & 9s   \\
 
& RF-MoE  & 10.499  & 1.228\% & 3s    & 15.877 & 1.624\% & 11s  &  
& RF-MoE  & 16.390  & 2.239\% & 3s    & 26.319 & 3.506\% & 11s  \\

\cdashline{2-8} \cdashline{10-16}

& RF-TE   & 10.504  & 1.276\% & 2s    & 15.857 & 1.507\% & 8s   &  
& RF-TE   & 16.363  & 2.069\% & 2s    & 26.234 & 3.177\% & 8s   \\

& MoSES(RF) & \textbf{10.465} & \textbf{0.900\%} & 6s   & \textbf{15.808} & \textbf{1.190\%} & 21s &
& MoSES(RF) & \textbf{16.264} & \textbf{1.445\%} & 6s   & \textbf{26.143} & \textbf{2.822\%} & 21s \\

\cdashline{2-8} \cdashline{10-16}

& CaDA    & 10.483 & 1.072\% & 3s     & \textbf{15.831} & \textbf{1.336\%} & 10s   &  
& CaDA    & 16.297 & 1.652\% & 2s     & 26.128 & 2.753\% & 10s   \\

& MoSES(CADA) & \textbf{10.462} & \textbf{0.873\%} & 7s & 15.833 & 1.354\% & 24s &
& MoSES(CaDA) & \textbf{16.262} & \textbf{1.435\%} & 7s & \textbf{26.032} & \textbf{2.383\%} & 25s \\ 

\midrule

\multirow{10}*{\rotatebox{90}{OVRP}} & HGS-PyVRP & 6.507 & * & 10.4m & 9.725 & * & 20.8m  & 
\multirow{10}*{\rotatebox{90}{VRPL}} & HGS-PyVRP & 10.587 & * & 10.4m & 15.766 & * & 20.8m \\

& OR-Tools & 6.553 & 0.686\%  & 10.4m & 9.995 & 2.732\% & 20.8m  & 
& OR-Tools & 10.570 & 2.343\% & 10.4m & 16.466 & 5.302\% & 20.8m     \\

& MTPOMO  & 6.718   & 3.211\%  & 2s    & 10.210  & 4.959\%  & 8s   &  
& MTPOMO  & 10.775  & 1.732\%  & 2s    & 16.151  & 2.445\%  & 8s   \\

& MVMoE   & 6.702  & 2.969\%  & 3s   & 10.176  & 4.615\%   & 11s  &  
& MVMoE   & 10.751 & 1.508\%  & 3s   & 16.099  & 2.117\%   & 11s  \\

& RF-POMO & 6.698  & 2.906\%  & 2s   & 10.181  & 4.671\%   & 8s   &  
& RF-POMO & 10.751 & 1.525\%  & 2s   & 16.106  & 2.166\%   & 8s   \\

& RF-MoE  & 6.697   & 2.879\%  & 3s   & 10.139  & 4.238\%   & 11s  &  
& RF-MoE  & 10.737  & 1.388\%  & 3s   & 16.070  & 1.937\%   & 11s  \\

\cdashline{2-8} \cdashline{10-16}

& RF-TE   & 6.684  & 2.693\%   & 2s   & 10.121 & 4.060\%  & 8s   & 
& RF-TE   & 10.748 & 1.499\%   & 2s   & 16.051 & 1.829\%  & 8s   \\

& MoSES(RF) & \textbf{6.632} & \textbf{1.892\%} & 5s & \textbf{10.064} & \textbf{3.469\%} & 20s &
& MoSES(RF) & \textbf{10.704} & \textbf{1.089\%} & 5s & \textbf{16.005} & \textbf{1.532\%} & 20s \\

\cdashline{2-8} \cdashline{10-16}

& CaDA    & 6.662  & 2.350\% & 2s   & 10.093  & 3.763\%  & 10s   &  
& CaDA    & 10.722 & 1.252\% & 2s   & 16.062  & 1.662\% &  10s   \\

& MoSES(CaDA) & \textbf{6.629} & \textbf{1.857\%} & 7s & \textbf{10.084} & \textbf{3.679\%} & 24s &
& MoSES(CaDA) & \textbf{10.704} & \textbf{1.083\%} & 7s & \textbf{16.024} & \textbf{1.659\%} & 24s \\

\midrule

\multirow{10}*{\rotatebox{90}{VRPB}} & HGS-PyVRP & 9.687 & * & 10.4m & 14.377 & * & 20.8m   & 
\multirow{10}*{\rotatebox{90}{OVRPTW}} & HGS-PyVRP & 10.510 & * & 10.4m & 16.926 & * & 20.8m   \\

& OR-Tools & 9.802 & 1.159\% & 10.4m & 14.933 & 3.853\% & 20.8m & 
& OR-Tools & 10.519 & 0.078\% & 10.4m & 17.027 & 0.583\% & 20.8m  \\

& MTPOMO  & 10.033   & 3.564\%   & 2s   & 15.082   & 4.917\%   & 8s   &  & MTPOMO  & 10.667   & 1.472\%   & 2s   & 17.421   & 2.896\%   & 9s   \\

& MVMoE   & 10.005   & 3.268\%   & 3s   & 15.022   & 4.506\%   & 10s  &  & MVMoE   & 10.669   & 1.495\%   & 3s   & 17.416   & 2.874\%   & 12s  \\

& RF-POMO & 9.996   & 3.173\%  & 2s   & 15.016  & 4.465\%   & 8s   &  
& RF-POMO & 10.657  & 1.376\%  & 2s   & 17.392  & 2.725\%   & 9s   \\

& RF-MoE  & 9.980   & 3.014\%  & 3s   & 14.973   & 4.165\%   & 10s  &  
& RF-MoE  & 10.673  & 1.533\%  & 3s   & 17.387   & 2.698\%   & 12s  \\

\cdashline{2-8} \cdashline{10-16}

& RF-TE   & 9.978   & 2.996\%   & 2s   & 14.942   & 3.950\%  & 8s   &  
& RF-TE   & 10.652  & 1.328\%   & 2s   & 17.326   & 2.341\%  & 9s   \\

& MoSES(RF) & \textbf{9.915}  & \textbf{2.342\%} & 5s & \textbf{14.884} & \textbf{3.546\%} & 19s &
& MoSES(RF) & \textbf{10.613} & \textbf{0.959\%} & 6s & \textbf{17.284} & \textbf{2.101\%} & 21s \\

\cdashline{2-8} \cdashline{10-16}

& CaDA    & 9.945  & 2.654\%  & 2s   & 14.905  & 3.684\%  & 10s   &  
& CaDA    & 10.621 & 1.037\%  & 3s   & 17.253  & 1.906\%  & 11s   \\

& MoSES(CaDA) & \textbf{9.904} & \textbf{2.225\%} & 7s & \textbf{14.901} & \textbf{3.668\%} & 23s &
& MoSES(CaDA) & \textbf{10.611} & \textbf{0.946\%} & 8s & \textbf{17.217} & \textbf{1.702\%} & 26s \\

\midrule

\multirow{10}*{\rotatebox{90}{VRPBL}} & HGS-PyVRP & 10.186 & * & 10.4m & 14.779 & * & 20.8m & 
\multirow{10}*{\rotatebox{90}{VRPBLTW}} &  HGS-PyVRP  & 18.361 & * & 10.4m  & 29.026 & *  & 20.8m     \\

& OR-Tools & 10.331 & 1.390\% & 10.4m & 15.426 & 4.338\% & 20.8m  & 
& OR-Tools & 18.422 & 0.332\% & 10.4m & 29.830 & 2.770\% & 20.8m   \\

& MTPOMO  & 10.672  & 4.699\%  & 2s   & 15.712  & 6.253\%  & 8s   &  
& MTPOMO  & 18.990  & 2.130\%  & 3s   & 30.896  & 3.616\%  & 9s   \\

& MVMoE   & 10.637  & 4.349\%   & 3s   & 15.640  & 5.763\%    & 11s  &  
& MVMoE   & 18.986  & 2.106\%   & 3s   & 30.893  & 3.612\%    & 12s  \\
 
& RF-POMO & 10.592   & 3.937\%   & 2s   & 15.628  & 5.696\%   & 8s   &  
& RF-POMO & 18.937   & 1.853\%   & 2s   & 30.794  & 3.278\%   & 9s   \\

& RF-MoE  & 10.575  & 3.767\%    & 3s   & 15.541  & 5.121\%   & 10s  &  
& RF-MoE  & 18.956  & 1.956\%    & 3s   & 30.807  & 3.321\%   & 12s  \\

\cdashline{2-8} \cdashline{10-16}

& RF-TE   & 10.578  & 3.798\%   & 2s   & 15.528  & 5.038\%   & 8s   &  
& RF-TE   & 18.941  & 1.877\%   & 2s   & 30.688  & 2.923\%   & 9s   \\

& MoSES(RF) & \textbf{10.518} & \textbf{3.185\%} & 6s & \textbf{15.469} & \textbf{4.638\%} & 20s &
& MoSES(RF) & \textbf{18.846} & \textbf{1.370\%} & 6s & \textbf{30.627} & \textbf{2.712\%} & 22s \\

\cdashline{2-8} \cdashline{10-16}

& CaDA   & 10.535 & 3.379\% & 2s  & 15.481 &  4.713\% & 10s   &  
& CaDA   & 18.877 & 1.531\% & 2s  & 30.586 &  2.579\% & 11s   \\

& MoSES(CaDA) & \textbf{10.517} & \textbf{3.193\%} & 7s & \textbf{15.478} & \textbf{4.705\%} & 24s &
& MoSES(CaDA) & \textbf{18.858} & \textbf{1.425\%} & 8s & \textbf{30.510} & \textbf{2.329\%} & 26s \\

\midrule

\multirow{10}*{\rotatebox{90}{VRPBTW}} 
& HGS-PyVRP & 18.292 & * & 10.4m & 29.467 & * & 20.8m & 
\multirow{10}*{\rotatebox{90}{VRPLTW}} & HGS-PyVRP & 16.356 & * & 10.4m & 25.757 & * & 20.8m \\

& OR-Tools & 18.366 & 0.383\% & 10.4m & 29.945 & 1.597\% & 20.8m & 
& OR-Tools & 16.441 & 0.499\% & 10.4m & 26.259 & 1.899\% & 20.8m    \\

& MTPOMO  & 18.639  & 1.876\%  & 2s   & 30.435  & 3.278\%  & 9s   &  
& MTPOMO  & 16.823  & 2.818\%  & 2s   & 26.891  & 4.364\%  & 9s   \\

& MVMoE   & 18.640  & 1.884\%  & 3s   & 30.438  & 3.287\%   & 12s  &  
& MVMoE   & 16.811  & 2.751\%  & 3s   & 26.866  & 4.271\%   & 12s  \\

& RF-POMO & 18.601  & 1.669\%  & 2s   & 30.343  & 2.967\%  & 9s   &  
& RF-POMO & 16.750  & 2.383\%  & 2s   & 26.784  & 3.951\%  & 9s   \\

& RF-MoE  & 18.617  & 1.760\%  & 3s   & 30.339 & 2.947\%  & 12s  &  
& RF-MoE  & 16.776  & 2.547\%  & 3s   & 26.775 & 3.918\%  & 12s  \\

\cdashline{2-8} \cdashline{10-16}

& RF-TE   & 18.600  & 1.675\%  & 2s   & 30.240 & 2.618\%   & 9s   &  
& RF-TE   & 16.763  & 2.460\%  & 2s   & 26.691 & 3.587\%   & 9s   \\

& MoSES(RF) & \textbf{18.499} & \textbf{1.121\%} & 6s & \textbf{30.148} & \textbf{2.303\%} & 21s &
& MoSES(RF) & \textbf{16.657} & \textbf{1.811\%} & 6s & \textbf{26.620} & \textbf{3.320\%} & 21s \\

\cdashline{2-8} \cdashline{10-16}

& CaDA   &  18.534 &  1.302\% & 3s   &  30.131 &  2.242\% & 11s   &  
& CaDA   &  16.694 &  2.038\% & 2s   &  26.592 &  3.204\% & 11s   \\

& MoSES(CaDA) & \textbf{18.495} & \textbf{1.095\%} & 8s & \textbf{30.050} & \textbf{1.969\%} & 25s &
& MoSES(CaDA) & \textbf{16.667} & \textbf{1.864\%} & 8s & \textbf{26.493} & \textbf{2.824\%} & 25s \\

\midrule

\multirow{10}*{\rotatebox{90}{OVRPB}} & HGS-PyVRP & 6.898 & * & 10.4m & 10.335 & * & 20.8m  & 
\multirow{10}*{\rotatebox{90}{OVRPBL}} & HGS-PyVRP & 6.899 & * & 10.4m & 10.335 & * & 20.8m \\

& OR-Tools & 6.928 & 0.412\% & 10.4m & 10.577 & 2.315\% & 20.8m & 
& OR-Tools & 6.927 & 0.386\% & 10.4m & 10.582 & 2.363\% & 20.8m  \\

& MTPOMO  & 7.108  & 3.004\%  & 2s   & 10.878  & 5.224\%  & 8s   &  
& MTPOMO  & 7.112  & 3.056\%  & 2s   & 10.883  & 5.272\%  & 8s   \\

& MVMoE   & 7.090   & 2.743\%   & 3s   & 10.840  & 4.859\%  & 11s  &  
& MVMoE   & 7.098   & 2.850\%   & 3s   & 10.847  & 4.928\%  & 11s  \\

& RF-POMO & 7.086  & 2.689\%   & 2s   & 10.836   & 4.823\%  & 8s   &  
& RF-POMO & 7.087  & 2.695\%   & 2s   & 10.837   & 4.835\%  & 8s   \\

& RF-MoE  & 7.080   & 2.613\%   & 3s   & 10.806   & 4.526\%   & 11s  &  
& RF-MoE  & 7.083   & 2.635\%   & 3s   & 10.807   & 4.540\%   & 11s  \\

\cdashline{2-8} \cdashline{10-16}

& RF-TE   & 7.071  & 2.477\%    & 2s   & 10.772   & 4.212\%   & 8s   &  
& RF-TE   & 7.075  & 2.515\%    & 2s   & 10.779   & 4.268\%   & 8s   \\

& MoSES(RF) & \textbf{7.037} & \textbf{1.979\%} & 6s & \textbf{10.733} & \textbf{3.829\%} & 20s &
& MoSES(RF) & \textbf{7.040} & \textbf{2.014\%} & 6s & \textbf{10.736} & \textbf{3.862\%} & 20s \\

\cdashline{2-8} \cdashline{10-16}
 
& CaDA  &  7.040  &  2.034\% & 2s   &  \textbf{10.724} &  \textbf{3.738\%} & 10s   &  
& CaDA  &  7.042  &  2.045\% & 2s   &  \textbf{10.723} &  \textbf{3.732\%} & 10s   \\

& MoSES(CaDA) & \textbf{7.034} & \textbf{1.942\%} & 7s & 10.726 & 3.765\% & 24s &
& MoSES(CaDA) & \textbf{7.036} & \textbf{1.964\%} & 7s & 10.724 & 3.743\% & 24s \\

\midrule
      
\multirow{10}*{\rotatebox{90}{OVRPBLTW}} & HGS-PyVRP & 11.668 & * & 10.4m & 19.156 & * & 20.8m 
& \multirow{10}*{\rotatebox{90}{OVRPBTW}} & HGS-PyVRP & 11.669 & * & 10.4m & 19.156 & * & 20.8m  \\

& OR-Tools & 11.681 & 0.106\% & 10.4m & 19.305 & 0.767\% & 20.8m & 
& OR-Tools & 11.682 & 0.109\% & 10.4m & 19.303 & 0.757\% & 20.8m  \\

& MTPOMO  & 11.817  & 1.259\%   & 3s   & 19.637 & 2.494\%   & 9s   &  
& MTPOMO  & 11.814  & 1.231\%   & 3s   & 19.635 & 2.484\%   & 9s   \\

& MVMoE   & 11.823  & 1.303\%   & 4s   & 19.641  & 2.516\%  & 12s  &  
& MVMoE   & 11.819  & 1.272\%   & 4s   & 19.639  & 2.505\%  & 13s  \\

& RF-POMO & 11.805  & 1.155\%   & 3s   & 19.608  & 2.344\%  & 10s  &  
& RF-POMO & 11.804  & 1.148\%   & 3s   & 19.608  & 2.343\%  & 10s  \\

& RF-MoE  & 11.823  & 1.307\%  & 4s   & 19.607  & 2.334\%  & 12s  &  
& RF-MoE  & 11.823  & 1.300\%  & 4s   & 19.606  & 2.327\%  & 12s  \\

\cdashline{2-8} \cdashline{10-16}

& RF-TE   & 11.804   & 1.147\%  & 2s   & 19.551  & 2.045\%   & 9s   &  
& RF-TE   & 11.805   & 1.151\%  & 2s   & 19.551  & 2.046\%   & 9s   \\

& MoSES(RF) & \textbf{11.762} & \textbf{0.791\%} & 6s & \textbf{19.508} & \textbf{1.821\%} & 22s &
& MoSES(RF) & \textbf{11.761} & \textbf{0.783\%} & 6s & \textbf{19.509} & \textbf{1.829\%} & 22s \\

\cdashline{2-8} \cdashline{10-16}

& CaDA   &  11.771 &  0.865\% & 2s   &  19.471 &  1.626\% & 11s  &  
& CaDA   &  11.770 &  0.854\% & 2s   &  19.472 &  1.630\% & 11s   \\

& MoSES(CaDA) & \textbf{11.761} & \textbf{0.781\%} & 8s & \textbf{19.440} & \textbf{1.470\%} & 26s &
& MoSES(CaDA) & \textbf{11.760} & \textbf{0.773\%} & 8s & \textbf{19.441} & \textbf{1.475\%} & 26s \\

\midrule
\multirow{10}*{\rotatebox{90}{OVRPL}} & HGS-PyVRP & 6.507 & * & 10.4m & 9.724 & * & 20.8m 
& \multirow{10}*{\rotatebox{90}{OVRPLTW}} & HGS-PyVRP & 10.510 & * & 10.4m & 16.926 & * & 20.8m \\

& OR-Tools & 6.552 & 0.668\% & 10.4m & 10.001 & 2.791\% & 20.8m  & 
& OR-Tools       & 10.497 & 0.114\% & 10.4m     & 17.023 & 0.728\% & 20.8m     \\

& MTPOMO  & 6.719   & 3.229\%   & 2s   & 10.214  & 5.000\%    & 8s   &  
& MTPOMO  & 10.670  & 1.503\%   & 2s   & 17.420  & 2.892\%    & 9s   \\

& MVMoE   & 6.707   & 3.029\%   & 3s   & 10.184  & 4.697\%   & 11s  &  
& MVMoE   & 10.671  & 1.511\%   & 3s   & 17.418  & 2.881\%   & 12s  \\

& RF-POMO & 6.701   & 2.951\%   & 2s   & 10.180   & 4.662\%   & 8s   &  
& RF-POMO & 10.657  & 1.372\%   & 3s   & 17.392   & 2.727\%   & 9s   \\

& RF-MoE  & 6.696    & 2.870\%   & 3s   & 10.141   & 4.253\%   & 11s  &  
& RF-MoE  & 10.673   & 1.532\%   & 3s   & 17.385   & 2.690\%   & 12s  \\

\cdashline{2-8} \cdashline{10-16}

& RF-TE   & 6.685   & 2.713\%  & 2s   & 10.121 & 4.054\%  & 8s   &  
& RF-TE   & 10.652  & 1.330\%  & 2s   & 17.327 & 2.348\%  & 9s   \\

& MoSES(RF) & \textbf{6.634}  & \textbf{1.917\%} & 6s & \textbf{10.063} & \textbf{3.463\%} & 20s &
& MoSES(RF) & \textbf{10.613} & \textbf{0.962\%} & 6s & \textbf{17.281} & \textbf{2.081\%} & 22s \\

\cdashline{2-8} \cdashline{10-16}

& CaDA  &  6.661  &  2.335\% & 2s   &  10.093 &  3.766\% & 11s   &  
& CaDA  &  10.622 &  1.045\% & 3s   &  17.255 &  1.914\% & 11s   \\

& MoSES(CaDA) & \textbf{6.629} & \textbf{1.846\%} & 7s & \textbf{10.081} & \textbf{3.652\%} & 24s & 
& MoSES(CaDA) & \textbf{10.611} & \textbf{0.940\%} & 8s & \textbf{17.219} & \textbf{1.714\%} & 26s \\

\bottomrule
\end{tabular}}
\end{center}
\end{table*}

\textbf{Baselines.} \emph{Traditional Solvers:} We benchmark against the open-source PyVRP~\cite{Wouda2024PyVRP} solver built on HGS-CVRP~\cite{VIDAL2022Hybrid}, and the widely-used Google OR-Tools~\cite{perron2023ortools}. Both solvers process each instance on a single CPU core with time limits of 10s for instances with 50 nodes; and 20s for instances with 100 nodes, while we parallelize their execution across 16 CPU cores for efficiency. \emph{Neural Solvers:} We compare our method with state-of-the-art unified neural solvers for multi-task VRPs. MTPOMO~\cite{liu2024multi} extends POMO~\cite{kwon2020pomo} and unifies VRP variants. MVMoE~\cite{zhou2024mvmoe} introduces mixture-of-experts. RouteFinder~\cite{berto2024routefinder} utilizes the mixed batch training to produce three models: RF-POMO (MTPOMO-based), RF-MoE (MVMoE-based) and RF-TE (modiled Transformer-based). CaDA~\cite{li2024cada} enhances model capacity through dual attention mechanism. Since CaDA lacks publicly available code, we implement it according to the original paper specifications. Our reproduction of CaDA achieves comparable performance to the reported results.

\textbf{MoSES Architecture.} To demonstrate the plug-and-play versatility of MoSES, we implement it upon both RF-TE and CaDA, denoted as MoSES(RF) and MoSES(CaDA) respectively. MoSES(RF) uses $\mathrm{norm\_softplus}(\cdot)$ activation for its adaptive gating function $G(\cdot)$, while MoSES(CaDA) utilizes $\mathrm{sigmoid}(\cdot)$ activation. Both adopt the dense routing strategy, and set LoRA rank as 32 for both frozen modules $\{ B_{i}A_{i} \}_{i=1}^{4}$ and the trainable module $\hat{B}\hat{A}$.

\textbf{Training and Evaluation.} We consider two
problem scales $N=\{50, 100\}$, and adopt the same training settings with prior works~\cite{berto2024routefinder, li2024cada} for baseline models. We optimize MoSES using REINFORCE~\cite{williams1992simple}. The backbone model is first pretrained on randomly generated CVRP instances. Then, the LoRA adapter fine-tuning produces 4 specialized experts, each optimized for a distinct VRP variant: OVRP, VRPB, VRPL, and VRPTW. Finally, the unified solver is trained on all 16 VRP variants. All phases share the training hyperparameters. Each model undergoes 300 training epochs, each containing 100,000 VRP instances generated on the fly. Adam Optimizer is used with a learning rate of $3 \times 10^{-4}$, weight decay of $1 \times 10^{-6}$, and batch size of 256. We decay the learning rate by a factor of 10 at epochs 270 and 295. During evaluation, each neural solver employs greedy multi-start rollouts with $8\times$ augmentations, selecting the best one from the generated solutions per instance. We report average costs and optimality gaps over 1K test instances, and the total time taken for testing. Gaps are calculated w.r.t. the results of the best heuristic solver (i.e., $\ast$ in Table~\ref{tab:main}).

\begin{figure*} [t]
\centering
\subfigure[RF-based]
{\label{fig:curve-1}\includegraphics[width=0.22\textwidth]{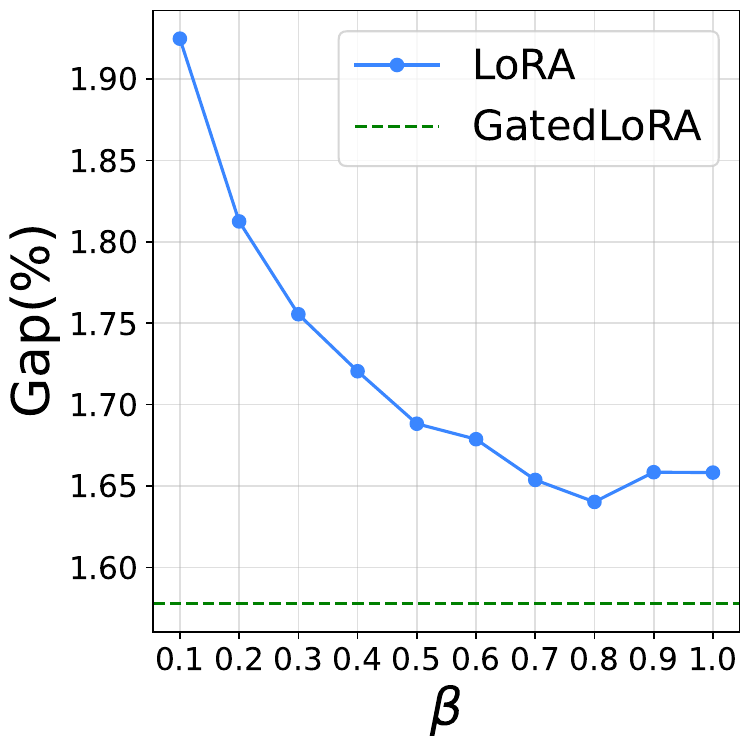}}
\subfigure[CaDA-based]
{\label{fig:curve-2}\includegraphics[width=0.22\textwidth]{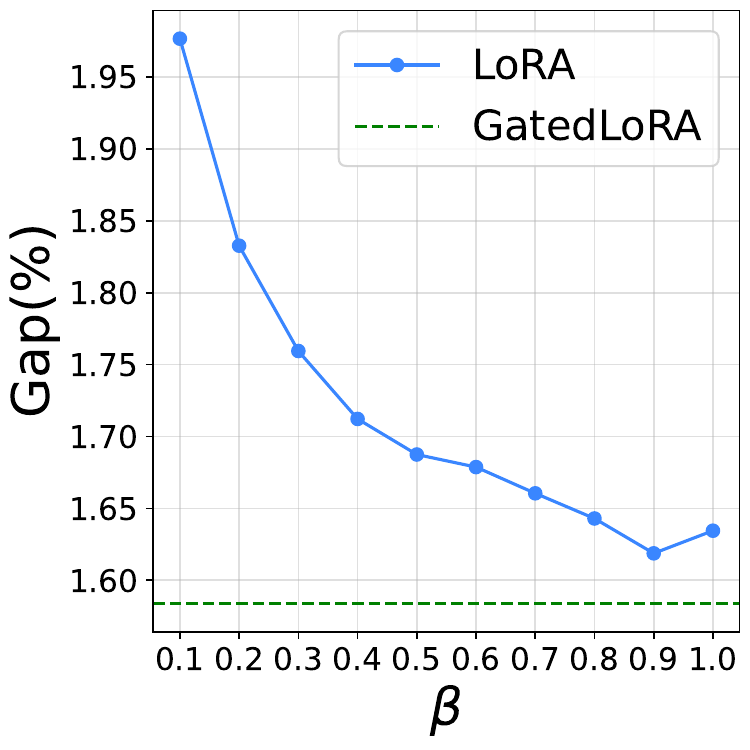}}
\subfigure[$N=50$]
{\label{fig:curve-3}\includegraphics[width=0.22\textwidth]{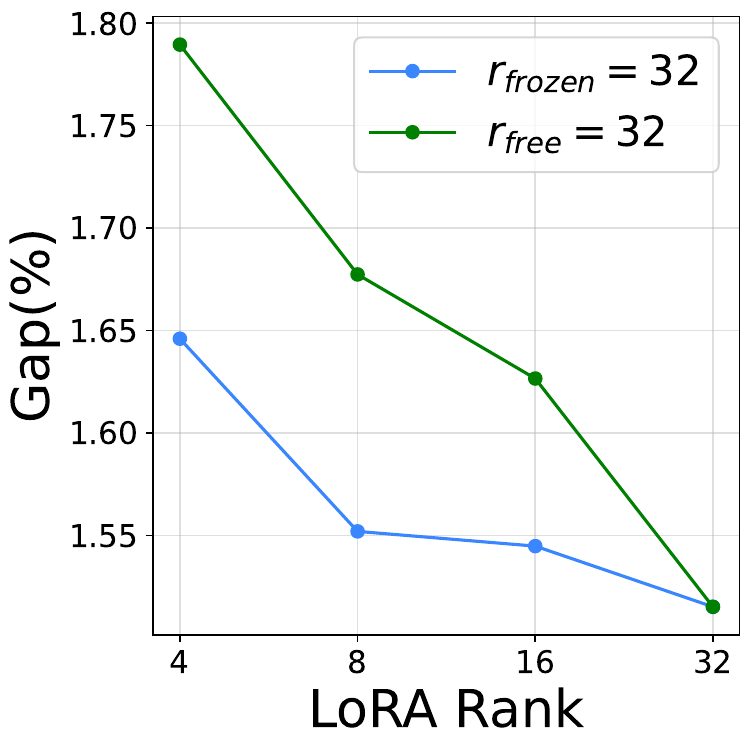}}
\subfigure[$N=100$]
{\label{fig:curve-4}\includegraphics[width=0.22\textwidth]{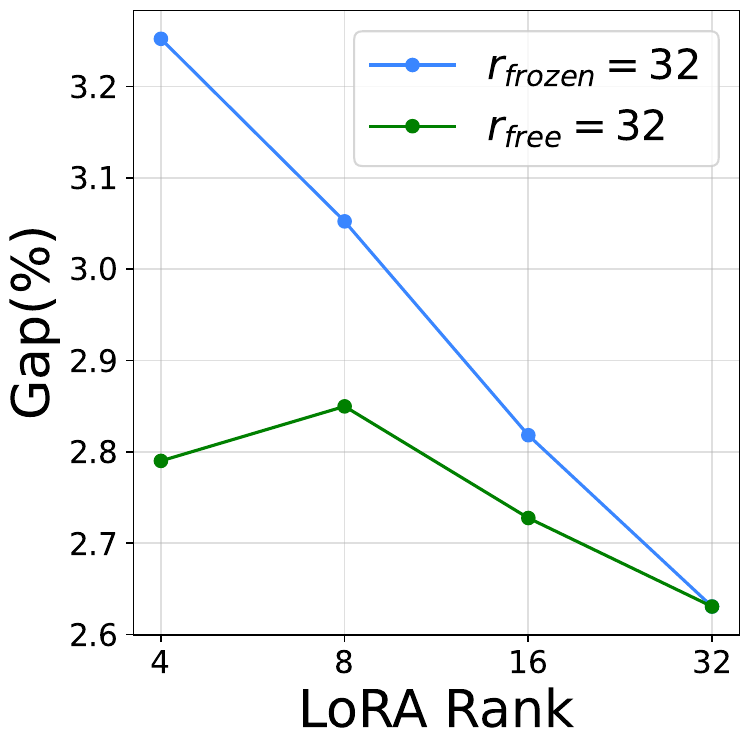}}
\caption{\ref{fig:curve-1}~\ref{fig:curve-2} compare Gated-LoRA against standard LoRA with varying $\beta\in(0,1]$. \ref{fig:curve-3}~\ref{fig:curve-4} investigate the effect of LoRA ranks on the model performance. \vspace{-10pt}}
\label{fig:curves}
\end{figure*}

\subsection{Empirical Results}

We conduct comprehensive benchmarking across all 16 VRP variants, with complete results presented in Table~\ref{tab:main}. MoSES(RF) consistently outperforms over its baseline RF-TE across all 16 VRP variants, achieving both lower solution costs and reduced optimality gaps for problem scales of $N=50$ and $N=100$. In terms of average optimality gap over 16 VRP variants, RF-TE achieves 2.063\% and 3.125\% for $N=50$ and $N=100$, respectively, while MoSES(RF) demonstrates superior performance with gaps of 1.535\% and 2.782\%, indicating relative improvements of 25.6\% and 11.0\%. MoSES(CaDA) demonstrates consistent improvements over its baseline CaDA across all VRP variants at $N=50$. For $N=100$, it shows superiority on 13 out of 16 tasks while maintaining comparable performance on CVRP, OVRPB, and OVRPBL tasks, with marginal decreases of $\leq0.002$ in costs and $\leq0.027\%$ in optimality gaps. MoSES(CaDA) reduces average optimality gaps over 16 VRP tasks from 1.715\% to 1.515\% (11.7\% relative improvement) at $N=50$ and from 2.766\% to 2.631\% (4.9\% relative improvement) at $N=100$, compared to its baseline CaDA. From the perspective of average optimality gap, MoSES(CaDA) is more preferred than MoSES(RF). In terms of the average total time overhead over 16 tasks, MoSES(RF) requires 5.8s and 20.8s for $N=50$ and $N=100$, respectively, compared to 2.0s and 8.4s consumed by RF-TE. MoSES(CaDA) requires 7.4s compared to CaDA’s 2.3s for $N=50$, and 24.8s compared to CaDA’s 10.5s for $N=100$. Since this time overhead represents the total time for 1K instances averaged over 16 tasks, the amortized time per instance remains at the microsecond level for both MoSES(RF) and MoSES(CaDA). Thus, the slightly more computational time of our method, resulting from the layer-wise token-wise routing mechanisms and dynamic aggregation operations, is acceptable given the favorable empirical improvements over prior methods, and may not be a major concern in practical applications.

\subsection{Hyperparameter Studies}

\setlength\intextsep{0pt}
\begin{wrapfigure}{r}{0pt}
\includegraphics[width=0.3\textwidth]{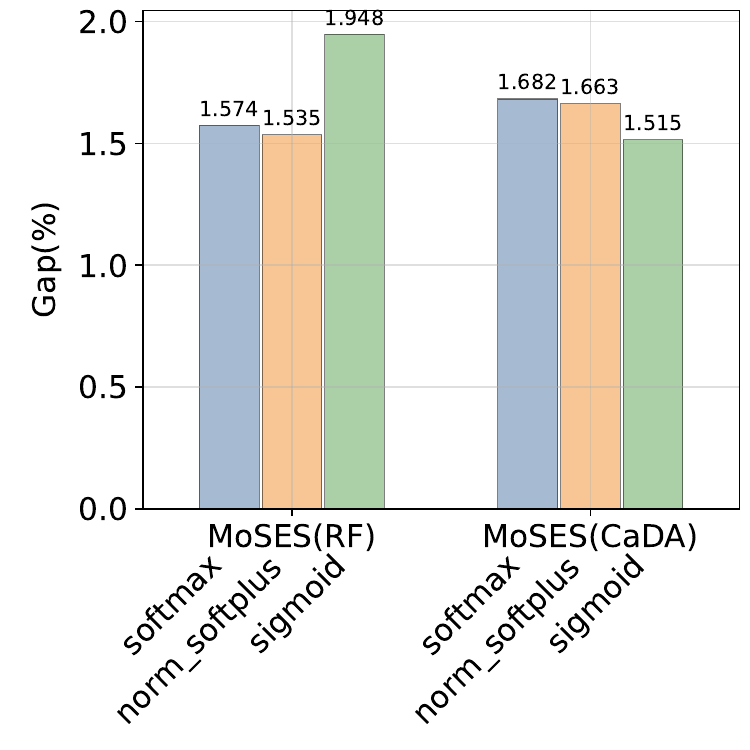}
\caption{Activations.}
\label{fig:act_bar}
\end{wrapfigure}

We also evaluate the impact of LoRA ranks on model performance, using MoSES(CaDA) as a case study across both $N=50$ and $N=100$ problem scales. We allow the ranks for frozen modules $\{ B_{i}A_{i} \}_{i=1}^{4}$ and trainable module $\hat{B}\hat{A}$ to differ, denoted as $r_{\mathrm{frozen}}$ and $r_{\mathrm{free}}$ respectively. Figures~\ref{fig:curve-3}~\ref{fig:curve-4} present the impact of LoRA ranks on the average optimality gap over 16 VRPs, where blue curves fix $r_{\mathrm{frozen}}=32$ while varying $r_{\mathrm{free}}$ from 4 to 32, and green curves fix $r_{\mathrm{free}}=32$ while varying $r_{\mathrm{frozen}}$ from 4 to 32. Figure~\ref{fig:curve-3} reveals that reducing the LoRA rank of trainable module $\hat{B}\hat{A}$ with fixed $r_{\mathrm{frozen}}=32$ incurs smaller performance degradation than reducing the LoRA rank of frozen modules $\{ B_{i}A_{i} \}_{i=1}^{4}$ with fixed $r_{\mathrm{free}}=32$. It suggests that the frozen LoRA experts contribute more significantly to MoSES(CaDA) than the trainable LoRA expert at $N=50$. Figure~\ref{fig:curve-4} demonstrates an inverse relationship at the scale of $N=100$ that the trainable LoRA expert contributes more significantly to MoSES(CaDA) than the frozen LoRA experts.

Before delving into the in-depth analysis of Figure~\ref{fig:curve-3}~\ref{fig:curve-4}, we first clarify the respective roles of the gating function and the trainable LoRA expert in MoSES. The gating function is primarily designed to extract individual insights from basis VRP solvers by identifying which solvers offer the most relevant experience for a given problem instance. In contrast, the trainable LoRA expert is intended to capture a holistic understanding of the given VRP variant, which naturally includes correlations among the basis VRP variants. From Figure~\ref{fig:curve-3}, we observe that for smaller problem instances ($N=50$), reducing the LoRA rank of the frozen LoRA experts leads to a more significant performance drop than reducing the LoRA rank of the trainable LoRA expert. This suggests that for simpler problems, the basis solvers already contain sufficient useful insights, and the unified solver relies less on the holistic understanding provided by the trainable LoRA expert. Conversely, Figure~\ref{fig:curve-4} shows that for larger problem instances ($N=100$), reducing the LoRA rank of the trainable LoRA expert results in a greater performance drop than reducing that of the frozen experts. This indicates that for more complex problems, the unified solver requires a deeper and more holistic understanding of the VRP variant, which the trainable LoRA expert is better suited to provide.

\setlength\intextsep{0pt}
\begin{wrapfigure}{r}{0pt}
\includegraphics[width=0.3\textwidth]{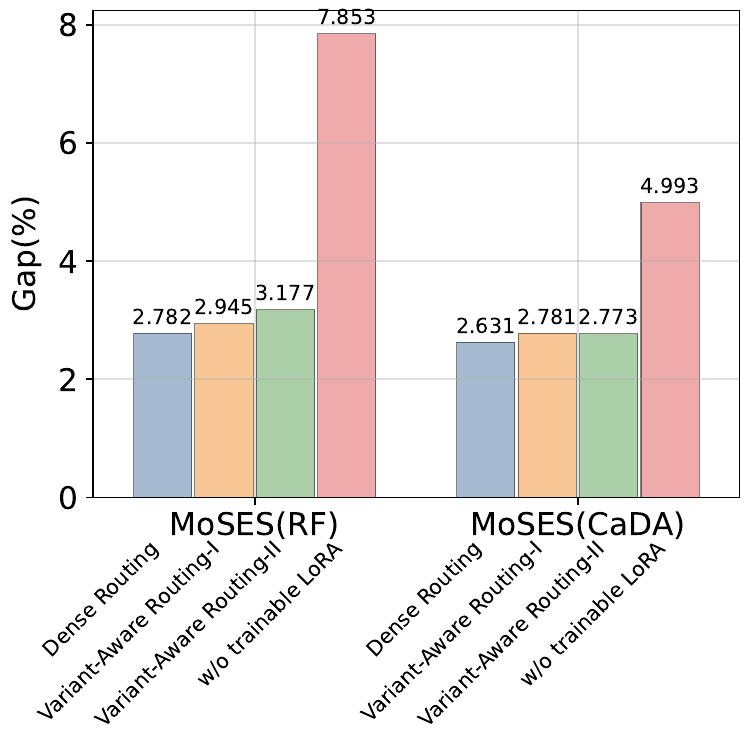}
\caption{Routing Methods.}
\label{fig:route_bar}
\end{wrapfigure}

\subsection{Ablation Studies}
We evaluate our proposed Gated-LoRA against standard LoRA that varies $\beta$ from 0.1 to 1.0, both used in the phase of fine-tuning specialized experts. Figures~\ref{fig:curve-1}~\ref{fig:curve-2} demonstrate Gated-LoRA's consistent superiority by showing lower average optimality gaps over four basis variants (OVRP, VRPB, VRPL, VRPTW) at $N=50$ for both RF-based and CaDA-based backbones. Figure~\ref{fig:act_bar} uncovers that MoSES(RF) prefers $\mathrm{norm\_softplus}(\cdot)$ as the activation function in the gating mechanism, while MoSES(CaDA) prefers $\mathrm{sigmoid}(\cdot)$. CaDA-based basis solvers may exhibit stronger generalization capability than RF-based ones, owing to the two parallel Transformer blocks in each layer. Consequently, MoSES(CaDA), which uses $\mathrm{sigmoid}(\cdot)$ as its activation function, tends to prioritize relevant solvers while assigning moderately higher scores to less relevant ones. This behavior suggests that MoSES(CaDA) recognizes that even task-irrelevant solvers can contribute useful insights for solving the current VRP variant. Figure~\ref{fig:route_bar} shows that dense routing method achieves best in both MoSES(RF) and MoSES(CaDA), and significant performance degradation occurs when the trainable LoRA module $\hat{B}\hat{A}$ is ablated. The dense routing strategy in the gating mechanism does not hinder interpretability, as evidenced by Figure~\ref{appendix:fig:gates_visual} of the Appendix~\ref{appendix:experiment}. Figure~\ref{appendix:fig:gates_visual} presents a behavioral analysis of the gating functions used in our models across 16 VRP variants. In this figure, we plot the scores assigned by the gating function to each basis solver. It is clear that for a given VRP variant, the gating mechanism tends to assign higher scores to basis solvers associated with the corresponding underlying basis VRPs. The reason the gating function does not completely zero out the remaining tasks is that many VRP variants are inherently correlated and solving instances of one task may benefit from insights learned from others. Since our method reuses only the basis VRP solvers, the number of solvers increases linearly, which is more manageable compared to the exponential growth in neural solver.

\section{Conclusions and Limitations}

In this paper, our objective is to design a unified neural solver which is capable of handling multiple VRP variants simultaneously. To achieve this, we propose the State-Decomposable MDP (SDMDP) to reformulate multi-task VRPs, grounded in observation that each VRP variant derives from a shared set of basis VRP variants. Then, the LS-SDMDP extension is developed to reuse basis neural solvers, each specialized for a basis VRP, in the latent space. We finally implement mixture-of-LoRA-experts as the unified solver. While our method demonstrates empirical superiority over prior approaches, it incurs mild computational overhead compared to other neural solvers due to its finer-grained layer-wise and token-wise aggregation and adaptive routing mechanisms. This limitation suggests designing more efficient aggregation techniques which preserve decent performance as future research directions. In addition, extending this framework to broader and general decision-making settings is also appealing.

\begin{ack}
This research is supported by a generous research grant from Xiaoi Robot Technology Limited, and the Singapore Ministry of Education (MOE) Academic Research Fund (AcRF) Tier 1 grant. We appreciate the anonymous reviewers, (S)ACs, and PCs of NeurIPS 2025 for their insightful comments to further improve our paper and their service to the community.

\end{ack}

\bibliographystyle{plain}
\bibliography{bibfile}


\newpage
\section*{NeurIPS Paper Checklist}

The checklist is designed to encourage best practices for responsible machine learning research, addressing issues of reproducibility, transparency, research ethics, and societal impact. Do not remove the checklist: {\bf The papers not including the checklist will be desk rejected.} The checklist should follow the references and follow the (optional) supplemental material.  The checklist does NOT count towards the page
limit. 

Please read the checklist guidelines carefully for information on how to answer these questions. For each question in the checklist:
\begin{itemize}
    \item You should answer \answerYes{}, \answerNo{}, or \answerNA{}.
    \item \answerNA{} means either that the question is Not Applicable for that particular paper or the relevant information is Not Available.
    \item Please provide a short (1–2 sentence) justification right after your answer (even for NA). 
\end{itemize}

{\bf The checklist answers are an integral part of your paper submission.} They are visible to the reviewers, area chairs, senior area chairs, and ethics reviewers. You will be asked to also include it (after eventual revisions) with the final version of your paper, and its final version will be published with the paper.

The reviewers of your paper will be asked to use the checklist as one of the factors in their evaluation. While "\answerYes{}" is generally preferable to "\answerNo{}", it is perfectly acceptable to answer "\answerNo{}" provided a proper justification is given (e.g., "error bars are not reported because it would be too computationally expensive" or "we were unable to find the license for the dataset we used"). In general, answering "\answerNo{}" or "\answerNA{}" is not grounds for rejection. While the questions are phrased in a binary way, we acknowledge that the true answer is often more nuanced, so please just use your best judgment and write a justification to elaborate. All supporting evidence can appear either in the main paper or the supplemental material, provided in appendix. If you answer \answerYes{} to a question, in the justification please point to the section(s) where related material for the question can be found.

IMPORTANT, please:
\begin{itemize}
    \item {\bf Delete this instruction block, but keep the section heading ``NeurIPS Paper Checklist"},
    \item  {\bf Keep the checklist subsection headings, questions/answers and guidelines below.}
    \item {\bf Do not modify the questions and only use the provided macros for your answers}.
\end{itemize}


\begin{enumerate}

\item {\bf Claims}
    \item[] Question: Do the main claims made in the abstract and introduction accurately reflect the paper's contributions and scope?
    \item[] Answer: \answerYes{} 
    \item[] Justification: We present our key contributions and scope in the abstract and introduction.
    \item[] Guidelines:
    \begin{itemize}
        \item The answer NA means that the abstract and introduction do not include the claims made in the paper.
        \item The abstract and/or introduction should clearly state the claims made, including the contributions made in the paper and important assumptions and limitations. A No or NA answer to this question will not be perceived well by the reviewers. 
        \item The claims made should match theoretical and experimental results, and reflect how much the results can be expected to generalize to other settings. 
        \item It is fine to include aspirational goals as motivation as long as it is clear that these goals are not attained by the paper. 
    \end{itemize}

\item {\bf Limitations}
    \item[] Question: Does the paper discuss the limitations of the work performed by the authors?
    \item[] Answer: \answerYes{} 
    \item[] Justification: The limitations of this work are discussed in Section 6.
    \item[] Guidelines:
    \begin{itemize}
        \item The answer NA means that the paper has no limitation while the answer No means that the paper has limitations, but those are not discussed in the paper. 
        \item The authors are encouraged to create a separate "Limitations" section in their paper.
        \item The paper should point out any strong assumptions and how robust the results are to violations of these assumptions (e.g., independence assumptions, noiseless settings, model well-specification, asymptotic approximations only holding locally). The authors should reflect on how these assumptions might be violated in practice and what the implications would be.
        \item The authors should reflect on the scope of the claims made, e.g., if the approach was only tested on a few datasets or with a few runs. In general, empirical results often depend on implicit assumptions, which should be articulated.
        \item The authors should reflect on the factors that influence the performance of the approach. For example, a facial recognition algorithm may perform poorly when image resolution is low or images are taken in low lighting. Or a speech-to-text system might not be used reliably to provide closed captions for online lectures because it fails to handle technical jargon.
        \item The authors should discuss the computational efficiency of the proposed algorithms and how they scale with dataset size.
        \item If applicable, the authors should discuss possible limitations of their approach to address problems of privacy and fairness.
        \item While the authors might fear that complete honesty about limitations might be used by reviewers as grounds for rejection, a worse outcome might be that reviewers discover limitations that aren't acknowledged in the paper. The authors should use their best judgment and recognize that individual actions in favor of transparency play an important role in developing norms that preserve the integrity of the community. Reviewers will be specifically instructed to not penalize honesty concerning limitations.
    \end{itemize}

\item {\bf Theory assumptions and proofs}
    \item[] Question: For each theoretical result, does the paper provide the full set of assumptions and a complete (and correct) proof?
    \item[] Answer: \answerYes{} 
    \item[] Justification: Assumptions are stated in the main text, with full proofs of theorems provided in the Appendix.
    \item[] Guidelines:
    \begin{itemize}
        \item The answer NA means that the paper does not include theoretical results. 
        \item All the theorems, formulas, and proofs in the paper should be numbered and cross-referenced.
        \item All assumptions should be clearly stated or referenced in the statement of any theorems.
        \item The proofs can either appear in the main paper or the supplemental material, but if they appear in the supplemental material, the authors are encouraged to provide a short proof sketch to provide intuition. 
        \item Inversely, any informal proof provided in the core of the paper should be complemented by formal proofs provided in appendix or supplemental material.
        \item Theorems and Lemmas that the proof relies upon should be properly referenced. 
    \end{itemize}

    \item {\bf Experimental result reproducibility}
    \item[] Question: Does the paper fully disclose all the information needed to reproduce the main experimental results of the paper to the extent that it affects the main claims and/or conclusions of the paper (regardless of whether the code and data are provided or not)?
    \item[] Answer: \answerYes{} 
    \item[] Justification: 
    We present adequate information in both the main text and Appendix for the reproducibility.
    \item[] Guidelines:
    \begin{itemize}
        \item The answer NA means that the paper does not include experiments.
        \item If the paper includes experiments, a No answer to this question will not be perceived well by the reviewers: Making the paper reproducible is important, regardless of whether the code and data are provided or not.
        \item If the contribution is a dataset and/or model, the authors should describe the steps taken to make their results reproducible or verifiable. 
        \item Depending on the contribution, reproducibility can be accomplished in various ways. For example, if the contribution is a novel architecture, describing the architecture fully might suffice, or if the contribution is a specific model and empirical evaluation, it may be necessary to either make it possible for others to replicate the model with the same dataset, or provide access to the model. In general. releasing code and data is often one good way to accomplish this, but reproducibility can also be provided via detailed instructions for how to replicate the results, access to a hosted model (e.g., in the case of a large language model), releasing of a model checkpoint, or other means that are appropriate to the research performed.
        \item While NeurIPS does not require releasing code, the conference does require all submissions to provide some reasonable avenue for reproducibility, which may depend on the nature of the contribution. For example
        \begin{enumerate}
            \item If the contribution is primarily a new algorithm, the paper should make it clear how to reproduce that algorithm.
            \item If the contribution is primarily a new model architecture, the paper should describe the architecture clearly and fully.
            \item If the contribution is a new model (e.g., a large language model), then there should either be a way to access this model for reproducing the results or a way to reproduce the model (e.g., with an open-source dataset or instructions for how to construct the dataset).
            \item We recognize that reproducibility may be tricky in some cases, in which case authors are welcome to describe the particular way they provide for reproducibility. In the case of closed-source models, it may be that access to the model is limited in some way (e.g., to registered users), but it should be possible for other researchers to have some path to reproducing or verifying the results.
        \end{enumerate}
    \end{itemize}

\item {\bf Open access to data and code}
    \item[] Question: Does the paper provide open access to the data and code, with sufficient instructions to faithfully reproduce the main experimental results, as described in supplemental material?
    \item[] Answer: \answerYes{} 
    \item[] Justification: The implementation code is included in the supplementary materials. 
    \item[] Guidelines:
    \begin{itemize}
        \item The answer NA means that paper does not include experiments requiring code.
        \item Please see the NeurIPS code and data submission guidelines (\url{https://nips.cc/public/guides/CodeSubmissionPolicy}) for more details.
        \item While we encourage the release of code and data, we understand that this might not be possible, so “No” is an acceptable answer. Papers cannot be rejected simply for not including code, unless this is central to the contribution (e.g., for a new open-source benchmark).
        \item The instructions should contain the exact command and environment needed to run to reproduce the results. See the NeurIPS code and data submission guidelines (\url{https://nips.cc/public/guides/CodeSubmissionPolicy}) for more details.
        \item The authors should provide instructions on data access and preparation, including how to access the raw data, preprocessed data, intermediate data, and generated data, etc.
        \item The authors should provide scripts to reproduce all experimental results for the new proposed method and baselines. If only a subset of experiments are reproducible, they should state which ones are omitted from the script and why.
        \item At submission time, to preserve anonymity, the authors should release anonymized versions (if applicable).
        \item Providing as much information as possible in supplemental material (appended to the paper) is recommended, but including URLs to data and code is permitted.
    \end{itemize}

\item {\bf Experimental setting/details}
    \item[] Question: Does the paper specify all the training and test details (e.g., data splits, hyperparameters, how they were chosen, type of optimizer, etc.) necessary to understand the results?
    \item[] Answer: \answerYes{} 
    \item[] Justification: Experimental details are presented in both the main text and Appendix.
    \item[] Guidelines:
    \begin{itemize}
        \item The answer NA means that the paper does not include experiments.
        \item The experimental setting should be presented in the core of the paper to a level of detail that is necessary to appreciate the results and make sense of them.
        \item The full details can be provided either with the code, in appendix, or as supplemental material.
    \end{itemize}

\item {\bf Experiment statistical significance}
    \item[] Question: Does the paper report error bars suitably and correctly defined or other appropriate information about the statistical significance of the experiments?
    \item[] Answer: \answerNo{} 
    \item[] Justification: Error bars derived from multiple experimental runs do not represent the primary evaluation metric in neural solver research.
    \item[] Guidelines:
    \begin{itemize}
        \item The answer NA means that the paper does not include experiments.
        \item The authors should answer "Yes" if the results are accompanied by error bars, confidence intervals, or statistical significance tests, at least for the experiments that support the main claims of the paper.
        \item The factors of variability that the error bars are capturing should be clearly stated (for example, train/test split, initialization, random drawing of some parameter, or overall run with given experimental conditions).
        \item The method for calculating the error bars should be explained (closed form formula, call to a library function, bootstrap, etc.)
        \item The assumptions made should be given (e.g., Normally distributed errors).
        \item It should be clear whether the error bar is the standard deviation or the standard error of the mean.
        \item It is OK to report 1-sigma error bars, but one should state it. The authors should preferably report a 2-sigma error bar than state that they have a 96\% CI, if the hypothesis of Normality of errors is not verified.
        \item For asymmetric distributions, the authors should be careful not to show in tables or figures symmetric error bars that would yield results that are out of range (e.g. negative error rates).
        \item If error bars are reported in tables or plots, The authors should explain in the text how they were calculated and reference the corresponding figures or tables in the text.
    \end{itemize}

\item {\bf Experiments compute resources}
    \item[] Question: For each experiment, does the paper provide sufficient information on the computer resources (type of compute workers, memory, time of execution) needed to reproduce the experiments?
    \item[] Answer: \answerYes{} 
    \item[] Justification: We report computational resources in the main text.
    \item[] Guidelines:
    \begin{itemize}
        \item The answer NA means that the paper does not include experiments.
        \item The paper should indicate the type of compute workers CPU or GPU, internal cluster, or cloud provider, including relevant memory and storage.
        \item The paper should provide the amount of compute required for each of the individual experimental runs as well as estimate the total compute. 
        \item The paper should disclose whether the full research project required more compute than the experiments reported in the paper (e.g., preliminary or failed experiments that didn't make it into the paper). 
    \end{itemize}
    
\item {\bf Code of ethics}
    \item[] Question: Does the research conducted in the paper conform, in every respect, with the NeurIPS Code of Ethics \url{https://neurips.cc/public/EthicsGuidelines}?
    \item[] Answer: \answerYes{} 
    \item[] Justification: Our research conforms with the NeurIPS Code of Ethics.
    \item[] Guidelines:
    \begin{itemize}
        \item The answer NA means that the authors have not reviewed the NeurIPS Code of Ethics.
        \item If the authors answer No, they should explain the special circumstances that require a deviation from the Code of Ethics.
        \item The authors should make sure to preserve anonymity (e.g., if there is a special consideration due to laws or regulations in their jurisdiction).
    \end{itemize}

\item {\bf Broader impacts}
    \item[] Question: Does the paper discuss both potential positive societal impacts and negative societal impacts of the work performed?
    \item[] Answer: \answerYes{} 
    \item[] Justification: By focusing on practical VRP applications, our work delivers more positive real-world impact.
    \item[] Guidelines:
    \begin{itemize}
        \item The answer NA means that there is no societal impact of the work performed.
        \item If the authors answer NA or No, they should explain why their work has no societal impact or why the paper does not address societal impact.
        \item Examples of negative societal impacts include potential malicious or unintended uses (e.g., disinformation, generating fake profiles, surveillance), fairness considerations (e.g., deployment of technologies that could make decisions that unfairly impact specific groups), privacy considerations, and security considerations.
        \item The conference expects that many papers will be foundational research and not tied to particular applications, let alone deployments. However, if there is a direct path to any negative applications, the authors should point it out. For example, it is legitimate to point out that an improvement in the quality of generative models could be used to generate deepfakes for disinformation. On the other hand, it is not needed to point out that a generic algorithm for optimizing neural networks could enable people to train models that generate Deepfakes faster.
        \item The authors should consider possible harms that could arise when the technology is being used as intended and functioning correctly, harms that could arise when the technology is being used as intended but gives incorrect results, and harms following from (intentional or unintentional) misuse of the technology.
        \item If there are negative societal impacts, the authors could also discuss possible mitigation strategies (e.g., gated release of models, providing defenses in addition to attacks, mechanisms for monitoring misuse, mechanisms to monitor how a system learns from feedback over time, improving the efficiency and accessibility of ML).
    \end{itemize}
    
\item {\bf Safeguards}
    \item[] Question: Does the paper describe safeguards that have been put in place for responsible release of data or models that have a high risk for misuse (e.g., pretrained language models, image generators, or scraped datasets)?
    \item[] Answer: \answerNA{} 
    \item[] Justification: Our paper poses no such risks.
    \item[] Guidelines:
    \begin{itemize}
        \item The answer NA means that the paper poses no such risks.
        \item Released models that have a high risk for misuse or dual-use should be released with necessary safeguards to allow for controlled use of the model, for example by requiring that users adhere to usage guidelines or restrictions to access the model or implementing safety filters. 
        \item Datasets that have been scraped from the Internet could pose safety risks. The authors should describe how they avoided releasing unsafe images.
        \item We recognize that providing effective safeguards is challenging, and many papers do not require this, but we encourage authors to take this into account and make a best faith effort.
    \end{itemize}

\item {\bf Licenses for existing assets}
    \item[] Question: Are the creators or original owners of assets (e.g., code, data, models), used in the paper, properly credited and are the license and terms of use explicitly mentioned and properly respected?
    \item[] Answer: \answerYes{} 
    \item[] Justification: We properly cite the original paper that produced the code package or dataset.
    \item[] Guidelines:
    \begin{itemize}
        \item The answer NA means that the paper does not use existing assets.
        \item The authors should cite the original paper that produced the code package or dataset.
        \item The authors should state which version of the asset is used and, if possible, include a URL.
        \item The name of the license (e.g., CC-BY 4.0) should be included for each asset.
        \item For scraped data from a particular source (e.g., website), the copyright and terms of service of that source should be provided.
        \item If assets are released, the license, copyright information, and terms of use in the package should be provided. For popular datasets, \url{paperswithcode.com/datasets} has curated licenses for some datasets. Their licensing guide can help determine the license of a dataset.
        \item For existing datasets that are re-packaged, both the original license and the license of the derived asset (if it has changed) should be provided.
        \item If this information is not available online, the authors are encouraged to reach out to the asset's creators.
    \end{itemize}

\item {\bf New assets}
    \item[] Question: Are new assets introduced in the paper well documented and is the documentation provided alongside the assets?
    \item[] Answer: \answerYes{} 
    \item[] Justification: We include the implementation code in the supplementary materials. 
    \item[] Guidelines:
    \begin{itemize}
        \item The answer NA means that the paper does not release new assets.
        \item Researchers should communicate the details of the dataset/code/model as part of their submissions via structured templates. This includes details about training, license, limitations, etc. 
        \item The paper should discuss whether and how consent was obtained from people whose asset is used.
        \item At submission time, remember to anonymize your assets (if applicable). You can either create an anonymized URL or include an anonymized zip file.
    \end{itemize}

\item {\bf Crowdsourcing and research with human subjects}
    \item[] Question: For crowdsourcing experiments and research with human subjects, does the paper include the full text of instructions given to participants and screenshots, if applicable, as well as details about compensation (if any)? 
    \item[] Answer: \answerNA{} 
    \item[] Justification: Our paper does not involve crowdsourcing nor research with human subjects.
    \item[] Guidelines:
    \begin{itemize}
        \item The answer NA means that the paper does not involve crowdsourcing nor research with human subjects.
        \item Including this information in the supplemental material is fine, but if the main contribution of the paper involves human subjects, then as much detail as possible should be included in the main paper. 
        \item According to the NeurIPS Code of Ethics, workers involved in data collection, curation, or other labor should be paid at least the minimum wage in the country of the data collector. 
    \end{itemize}

\item {\bf Institutional review board (IRB) approvals or equivalent for research with human subjects}
    \item[] Question: Does the paper describe potential risks incurred by study participants, whether such risks were disclosed to the subjects, and whether Institutional Review Board (IRB) approvals (or an equivalent approval/review based on the requirements of your country or institution) were obtained?
    \item[] Answer: \answerNA{} 
    \item[] Justification: Our paper does not involve crowdsourcing nor research with human subjects.
    \item[] Guidelines:
    \begin{itemize}
        \item The answer NA means that the paper does not involve crowdsourcing nor research with human subjects.
        \item Depending on the country in which research is conducted, IRB approval (or equivalent) may be required for any human subjects research. If you obtained IRB approval, you should clearly state this in the paper. 
        \item We recognize that the procedures for this may vary significantly between institutions and locations, and we expect authors to adhere to the NeurIPS Code of Ethics and the guidelines for their institution. 
        \item For initial submissions, do not include any information that would break anonymity (if applicable), such as the institution conducting the review.
    \end{itemize}

\item {\bf Declaration of LLM usage}
    \item[] Question: Does the paper describe the usage of LLMs if it is an important, original, or non-standard component of the core methods in this research? Note that if the LLM is used only for writing, editing, or formatting purposes and does not impact the core methodology, scientific rigorousness, or originality of the research, declaration is not required.
    \item[] Answer: \answerNA{} 
    \item[] Justification: The core method development in our research does not involve LLMs as any important, original, or non-standard components.
    \item[] Guidelines:
    \begin{itemize}
        \item The answer NA means that the core method development in this research does not involve LLMs as any important, original, or non-standard components.
        \item Please refer to our LLM policy (\url{https://neurips.cc/Conferences/2025/LLM}) for what should or should not be described.
    \end{itemize}

\end{enumerate}


\newpage
\appendix
\section{Environment Details}

\subsection{VRP Variants}
\label{appendix:vrp_variants}

We adopt the same environment configurations for Vehicle Routing Problems (VRPs) as those described in RouteFinder~\cite{berto2024routefinder}. In this setting, all VRP variants extend the Capacitated VRP (CVRP) by incorporating arbitrary combinations of the following four constraints: 1) \emph{Open Route (O):} A binary variable $o$ indicates whether the vehicle needs to return to depot $(o=0)$ or not $(o=1)$; 2) \emph{Backhaul (B):} Customers $\{v_{j}\}_{j=1}^{N}$ are categorized into linehauls with $q_{j}^{\mathrm{LH}} > 0$ or backhauls with $q_{j}^{\mathrm{BH}} < 0$, where linehauls require deliveries and backhauls require pickups. These are mutually exclusive: $q_{j}^{\mathrm{LH}}q_{j}^{\mathrm{BH}} = 0$ for all $j\in\{ 1, \ldots, N \}$. VRPs with backhaul allow traversing both types in a mixed manner, but linehauls must precede backhauls in each subtour~\cite{ropke2006backhauls}. In VRPs without backhaul, only linehaul customers are present; 3) \emph{Duration Limit (L):} The cost of each subtour within a solution cannot exceed a threshold $l^{\mathrm{dur}}$; 4) \emph{Time Window (TW):} Each node $v_{i}$ ($i \in \{ 0,  \ldots, N \}$) is associated with a time window $[w_{i}^{\mathrm{beg}}, w_{i}^{\mathrm{end}}]$ and a service duration $w_{i}^{\mathrm{dur}}$, requiring that service must begin within the specified window. The vehicle arriving prior to $w_{i}^{\mathrm{beg}}$ must wait until the window opens, after which the node is occupied for exactly $w_{i}^{\mathrm{dur}}$ time units. Furthermore, all vehicles are constrained to return to the depot no later than $w_{0}^{\mathrm{end}}$. CVRP serves as a basis VRP variant, from which the remaining four basis VRP variants emerge by individually incorporating one additional constraint: CVRP with open routes (OVRP), CVRP with Backhauls (VRPB), CVRP with Duration Limits (VRPL), and CVRP with Time Windows (VRPTW). The complete combinatorial space of these four constraints results in 16 VRP variants ($2^{4}$ possible combinations): CVRP, OVRP, VRPB, VRPL, VRPTW, OVRPTW, VRPBL, VRPBLTW, VRPBTW, VRPLTW, OVRPB, OVRPBL, OVRPBLTW, OVRPBTW, OVRPL, OVRPLTW.

\subsection{Problem Instance Generation}
\label{appendix:problem_instance}

In this section, we present the procedures for creating the problem instances of various VRP variants for both the training and testing phases. As each VRP variant stems from a common set of attributes, we proceed to outline the generation process for each attribute directly.

\textbf{Locations.} In each problem instance, $N+1$ coordinates are uniformly sampled from a unit square, denoted as $\{ (x_{0}, y_{0}), (x_{1}, y_{1}), \ldots,  (x_{N}, y_{N})\}$, where $x_{i}, y_{i} \sim U(0, 1)$ for $i = 0, \ldots, N$. $(x_{0}, y_{0})$ represent the depot coordinates, while $\{ (x_{i}, y_{i})\}_{i=1}^{N}$ denote the coordinates of the customers. 

\textbf{Vehicle Capacity.} The vehicle capacity $Q$ remains constant for all vehicles within each problem instance and is calculated as follows:

\begin{equation}
Q = 
\begin{cases}
30 + \lfloor \frac{1000}{5} + \frac{N-1000}{33.3} \rfloor
& \text{if } N>1000 \\
30 + \lfloor \frac{N}{5} \rfloor  
& \text{if } 20 < N \leq 1000 \\
30 & \text{otherwise}
\end{cases}
\end{equation}

\textbf{Linehaul and Backhaul Demands.} For the depot node $v_0$, both linehaul and backhaul demands are set to zero. For each customer $v_{j}$ $(j \geq 1)$, the linehaul demand $q_{j}^{\mathrm{LH}}$ is drawn uniformly at random from the integer set $\{ 1, 2, \ldots, 9 \}$, while the backhaul demand $q_{j}^{\mathrm{BH}}$ is sampled uniformly from $\{ -1, -2, \ldots, -9 \}$. A binary decision variable $\hat{q}_{j} \in \{0, 1\}$ is then introduced, with probabilities $\mathbb{P}(\hat{q}_{j} = 0) = 0.8$ and $\mathbb{P}(\hat{q}_{j} = 1) = 0.2$. If $\hat{q}_{j} = 0$, the customer is designated as a linehaul, retaining the sampled value of $q_{j}^{\mathrm{LH}}$ while setting $q_{j}^{\mathrm{BH}} = 0$. Conversely, if $\hat{q}_{j} = 1$, the customer is treated as a backhaul, preserving $q_{j}^{\mathrm{BH}}$ and setting $q_{j}^{\mathrm{LH}} = 0$. It is important to note that both linehaul and backhaul demands are present only in VRP variants that incorporate backhauls. In all other VRP variants, only linehaul demands are considered.

\textbf{Open Routes.} A binary variable $o \in \{0, 1\}$ is introduced to indicate whether the vehicle is required to return to the depot. The proportion of VRP instances with open routes is governed by the probability $\mathbb{P}(o = 1)$, where $o = 1$ signifies that the open route constraint is active.

\textbf{Time Windows.} For the depot node $v_0$, the time window is defined as $[w_{0}^{\mathrm{beg}}, w_{0}^{\mathrm{end}}]$, where $w_{0}^{\mathrm{beg}} = 0$ and $w_{0}^{\mathrm{end}}$ represents the system's end time. The service duration at the depot, denoted by $w_{0}^{\mathrm{dur}}$, is set to zero. For each customer $v_{i}$ ($i \in \{ 1, \ldots, N \}$), a time window $[w_{i}^{\mathrm{beg}}, w_{i}^{\mathrm{end}}]$ and a service duration $w_{i}^{\mathrm{dur}}$ require to be generated. The service duration $w_{i}^{\mathrm{dur}}$ is sampled uniformly from the interval $[I_1, I_2]$. The length of the time window, defined as $w_{i}^{\mathrm{len}} = w_{i}^{\mathrm{end}} - w_{i}^{\mathrm{beg}}$, is drawn uniformly from the interval $[I_2, I_3]$. Let $\mathrm{Dist}(v_0, v_{i})$ denote the distance between the depot $v_0$ and customer $v_{i}$. The upper bound $w_{i}^{\mathrm{UB}}$ for the start time of the time window is computed as:
\begin{equation}
    w_{i}^{\mathrm{UB}} = \frac{w_{0}^{\mathrm{end}} - w_{i}^{\mathrm{dur}} - w_{i}^{\mathrm{len}}}{\mathrm{Dist}(v_{0}, v_{i})} - 1.
\end{equation}
The start time of the time window is then determined by:
\begin{equation}
    w_{i}^{\mathrm{beg}} = (1 + (w_{i}^{\mathrm{UB}} - 1) \cdot u_{i} ) \cdot \mathrm{Dist}(v_{0}, v_{i});
\end{equation}
where $u_{i} \sim U(0, 1)$ is a uniformly distributed random variable. Finally, the end time of the time window is given by:
\begin{equation}
    w_{i}^{\mathrm{end}} = w_{i}^{\mathrm{beg}} + w_{i}^{\mathrm{len}}.
\end{equation}
Additionally, if the time window constraint is not active in a given problem instance, for all nodes $v_{i}$ ($i \geq 0$), the start time $w_{i}^{\mathrm{beg}}$ is set to zero, the end time $w_{i}^{\mathrm{end}}$ is set to $\infty$, and the service duration $w_{i}^{\mathrm{dur}}$ is set to zero.

\textbf{Distance Limit.} The distance limit $l^{\mathrm{dur}}$ is sampled from the uniform distribution $U(2\cdot \max_{i}(\mathrm{Dist}(v_{0}, v_{i})), l_{\max}^{\mathrm{dur}})$,  where $l_{\max}^{\mathrm{dur}}$ is a predefined upper bound. This sampling strategy ensures that every customer node is reachable from the depot. If the distance limit constraint is not enforced, the distance limit $l^{\mathrm{dur}}$ is set to $\infty$.

\subsection{Feasible Action Space}
\label{appendix:vrp_constraints}

To derive the feasible action space in the environment, we need a action mask based on the combined constraints of basis VRP variants. In each time step, we use the following feasibility testing procedures to mask out infeasible actions from the action space. Please note that the vehicle speed is fixed at $1.0$, and the time windows are normalized accordingly. As a result, the travel time between any two nodes is numerically equivalent to the Euclidean distance between them.

1) Each customer needs to be visited exactly once. If the vehicle is currently located at the depot and there are still unserved customers, selecting the depot as the next action is not permitted.

2) The vehicle must arrive at node $v_j$ from node $v_i$ before the end of the service time window: $w_{t}^{\mathrm{cur}}+\mathrm{Dist}(v_{i}, v_{j}) \leq w_{j}^{\mathrm{end}}$, where $w_{t}^{\mathrm{cur}}$ is the current time.

3) The cost of a subtour, when traveling from node $v_i$ to node $v_j$, must not exceed the predefined distance limit: $l_{t}^{\mathrm{cur}} + \mathrm{Dist}(v_{i}, v_{j}) \leq l^{\mathrm{dur}}$, where $l_{t}^{\mathrm{cur}}$ denotes the cumulative distance traveled along the current subtour.

4) If the vehicle is required to return to the depot and intends to visit node $v_j$ from its current location $v_i$, it must satisfy both time and distance feasibility conditions: the vehicle must be able to complete the visit and return to the depot before the system's end time: $\max(w_{t}^{\mathrm{cur}} + \mathrm{Dist}(v_{i}, v_{j}), w_{j}^{\mathrm{beg}}) + w_{j}^{\mathrm{dur}} + \mathrm{Dist}(v_{0}, v_{j}) < w_{0}^{\mathrm{end}}$, and the total length of the resulting subtour, including travel from $v_i$ to $v_j$ and from $v_j$ back to the depot, must not exceed the distance limit: $l_{t}^{\mathrm{cur}} + \mathrm{Dist}(v_{i}, v_{j}) + \mathrm{Dist}(v_{0}, v_{j}) < l^{\mathrm{dur}}$.

5) If the vehicle intends to visit a linehaul customer $v_j$, the corresponding linehaul demand $q_j^{\mathrm{LH}}$ must not exceed the remaining linehaul capacity $Q_t^{\mathrm{LH}}$. Similarly, if the vehicle intends to visit a backhaul customer $v_j$, the backhaul demand $q_j^{\mathrm{BH}}$ must not exceed the remaining backhaul capacity $Q_t^{\mathrm{BH}}$. Additionally, all linehaul customers must be visited before any backhaul customers within each subtour.

\subsection{VRP Formulation}
\label{appendix:vrp_example}

To illustrate the VRP formulation within the SDMDP framework, we provide an example for clarification. Under the SDMDP framework, there are five types of basis state spaces: $\mathcal{S}^{(0)}, \ldots, \mathcal{S}^{(4)}$. Specifically, $\mathcal{S}^{(0)}$ encodes node coordinates, linehaul demands, and remaining linehaul capacity, $\mathcal{S}^{(1)}$ represents a binary variable indicating whether the vehicle needs to return to the depot; $\mathcal{S}^{(2)}$ captures backhaul demands and remaining backhaul capacity; $\mathcal{S}^{(3)}$ includes the duration limit and the current traveled length of the subtour; and $\mathcal{S}^{(4)}$ contains time windows, service times, and the current time. Before each episode begins, the initial state space is constructed by selecting $\mathcal{S}^{(0)}$ and any subset of the remaining basis states. In this example, we choose $\mathcal{S}^{(0)}$, $\mathcal{S}^{(3)}$ and $\mathcal{S}^{(4)}$, which correspond to the VRP with limited duration and time windows (VRPLTW). At each time step, the policy observes the basis states $s^{(0)}$, $s^{(3)}$, and $s^{(4)}$, and a masking mechanism is applied to filter out infeasible nodes from the set of unvisited nodes, specifically, those whose demands exceed the remaining linehaul capacity, those that would violate the duration limit, and those whose arrival time would exceed the end of their time window, from unvisited nodes. This results in a list of feasible nodes from which the next node is selected. Upon selection, the linehaul capacity, traveled length, and current time are updated accordingly, and the selected node is masked out for subsequent steps. Once a subtour is completed, the linehaul capacity is reset to its default value, the traveled length is reset to zero, and the current time is also reset to zero. This reset reflects the assumption in VRP that multiple vehicles can begin their routes concurrently.

\section{Experiments}
\label{appendix:experiment}

\subsection{Out-of-distribution Attribute Generalization}

In this section, we evaluate the out-of-distribution (OOD) generalization capabilities of our proposed methods, MoSES(RF) and MoSES(CaDA), in comparison with existing unified multi-task neural solvers. The evaluation specifically targets generalization to unseen attribute values pertaining to vehicle capacities, time windows, and distance limits. For each evaluation setting, we construct a testing dataset consisting of 1,000 problem instances, each with $N=100$ nodes. Performance is measured using two metrics: the average cost across the 1,000 problem instances, and the total computational time required to solve all instances, with lower values indicating better performance. Notably, all neural solvers evaluated were trained on problem instances with the same number of nodes (i.e., $N = 100$).

\begin{table}[t]
\centering
\caption{Performance comparison of multi-task VRP solvers on OOD CVRP instances.}
\label{appendix:tab:ood_cvrp}
\resizebox{\textwidth}{!}{\begin{tabular}{lcccccccccccccccc}
\toprule

Vehicle Capacity & \multicolumn{2}{c}{30} & \multicolumn{2}{c}{50(\textbf{ID})} & \multicolumn{2}{c}{70} & \multicolumn{2}{c}{90} & \multicolumn{2}{c}{110} & \multicolumn{2}{c}{130} & \multicolumn{2}{c}{150} & \multicolumn{2}{c}{200} \\

\cmidrule(lr){2-3} \cmidrule(lr){4-5} \cmidrule(lr){6-7} \cmidrule(lr){8-9} \cmidrule(lr){10-11} \cmidrule(lr){12-13} \cmidrule(lr){14-15} \cmidrule(lr){16-17}

& Cost & Time & Cost & Time & Cost & Time & Cost & Time & Cost & Time & Cost & Time & Cost & Time & Cost & Time \\
 
\midrule

MTPOMO      & 23.415 & 9s & 15.933 & 8s & 13.144 & 8s & 11.684 & 8s & 10.835 & 8s & 10.359 & 8s & 10.027 & 8s & 9.566 & 8s \\
MVMoE        & 23.162 & 12s & 15.888 & 11s & 13.068 & 11s & 11.534 & 11s & 10.616 & 11s & 10.089 & 11s & 9.727 & 11s & 9.203 & 11s \\
RF-MoE       & 23.251 & 12s & 15.877 & 11s & 13.080 & 11s & 11.580 & 11s & 10.685 & 11s & 10.170 & 11s & 9.802 & 11s & 9.294 & 11s \\
RF-POMO      & 23.229 & 9s & 15.908 & 8s & 13.104 & 8s & 11.607 & 8s & 10.715 & 8s & 10.200 & 8s & 9.846 & 8s & 9.362 & 8s \\

\cdashline{1-17}

RF-TE        & 23.085 & 9s & 15.857 & 8s & 13.023 & 8s & 11.466 & 8s & 10.521 & 8s & 9.956 & 8s & 9.582 & 8s & 9.003 & 8s \\

MoSES(RF)$_{\mathrm{Sftm}}$    & 23.058 & 22s & 15.826 & 21s & 12.999 & 21s & 11.443 & 20s & \textbf{10.485} & 20s & \textbf{9.922} & 20s & \textbf{9.550} & 20s & \textbf{8.968} & 20s \\

MoSES(RF)$_{\mathrm{Sftp}}$ & \textbf{23.035}& 22s & \textbf{15.808} & 21s & \textbf{12.978} & 20s & \textbf{11.431} & 20s & 10.504 & 20s & 10.107 & 20s & 10.188 & 20s & 10.271 & 20s \\

MoSES(RF)$_{\mathrm{Sigm}}$ & 23.049 & 21s & 15.839 & 20s & 13.009 & 19s & 11.447 & 19s & 10.494 & 19s & 9.922 & 19s & 9.550 & 19s & 8.968 & 19s \\

\cdashline{1-17}

CaDA  & 23.083 & 10s & \textbf{15.831} & 10s & 12.999 & 10s & 11.438 & 9s & 10.479 & 9s & 9.906 & 9s & 9.532 & 9s & 8.942 & 9s \\

MoSES(CaDA)$_{\mathrm{Sftm}}$ & 23.066 & 28s & 15.840 & 26s & 13.015 & 26s & 11.457 & 25s & 10.513 & 25s & 9.950 & 25s & 9.589 & 25s & 9.025 & 25s \\

MoSES(CaDA)$_{\mathrm{Sftp}}$ & \textbf{23.031} & 27s & 15.836 & 26s & 13.005 & 26s & 11.441 & 25s & 10.482 & 25s & 9.906 & 25s & 9.537 & 25s & 8.960 & 25s \\

MoSES(CaDA)$_{\mathrm{Sigm}}$  & 23.047 & 25s & 15.833 & 24s & \textbf{12.999} & 24s & \textbf{11.437} & 24s & \textbf{10.473} & 24s & \textbf{9.897} & 23s & \textbf{9.526} & 23s & \textbf{8.937} & 23s \\

\bottomrule
\end{tabular}}
\end{table}

In CVRP, each neural solver is trained on problem instances with a fixed vehicle capacity of $Q=50$. To investigate OOD generalization to unseen vehicle capacities, we generate a separate testing dataset for each capacity value in the set $\{ 30, 50, 70, 90, 110, 130, 150, 200 \}$. The corresponding results are presented in Table~\ref{appendix:tab:ood_cvrp}. Please note that $Q=50$ corresponds to the in-distribution (ID) evaluation setting. Both MoSES(RF) and MoSES(CaDA) adopt the dense routing strategy. We use $\mathrm{Sftm}$, $\mathrm{Sftp}$ and $\mathrm{Sigm}$ to denote the activation functions $\mathrm{softmax}(\cdot)$, $\mathrm{norm}\_\mathrm{softplus}(\cdot)$ and $\mathrm{sigmoid}(\cdot)$, respectively. For MoSES(RF), we observe MoSES(RF)$_{\mathrm{Sftm}}$ consistently outperforms its baseline, RF-TE, in terms of generalization performance on OOD vehicle capacities. However, MoSES(RF)$_{\mathrm{Sftp}}$ demonstrates superior performance on small-scale problem instances ($N \leq 90$), compared to MoSES(RF)$_{\mathrm{Sftm}}$. MoSES(CaDA)$_{\mathrm{Sigm}}$ demonstrates stronger OOD generalization than its baseline, CaDA, across all evaluation settings, with the exception of a marginal performance drop observed in the ID case at $Q=50$. Furthermore, MoSES(CaDA)$_{\mathrm{Sigm}}$ outperforms MoSES(RF)$_{\mathrm{Sftm}}$ across the majority of evaluation scenarios.

\begin{table}[t]
\centering
\caption{Performance comparison of multi-task VRP solvers on OOD VRPL instances.}
\label{appendix:tab:ood_vrpl}
\resizebox{\textwidth}{!}{\begin{tabular}{lcccccccccccccccc}
\toprule

Distance Limit & \multicolumn{2}{c}{2.7} & \multicolumn{2}{c}{2.8(\textbf{ID})} & \multicolumn{2}{c}{2.9} & \multicolumn{2}{c}{3.0} & \multicolumn{2}{c}{3.1} & \multicolumn{2}{c}{3.2} & \multicolumn{2}{c}{3.3} & \multicolumn{2}{c}{3.4} \\

\cmidrule(lr){2-3} \cmidrule(lr){4-5} \cmidrule(lr){6-7} \cmidrule(lr){8-9} \cmidrule(lr){10-11} \cmidrule(lr){12-13} \cmidrule(lr){14-15} \cmidrule(lr){16-17}

& Cost & Time & Cost & Time & Cost & Time & Cost & Time & Cost & Time & Cost & Time & Cost & Time & Cost & Time \\
 
\midrule

MTPOMO    & 16.193 & 9s & 16.151 & 9s & 16.121 & 9s & 16.105 & 9s & 16.090 & 9s & 16.078 & 9s & 16.069 & 9s & 16.060 & 9s\\
MVMoE      & 16.142 & 12s & 16.099 & 12s & 16.073 & 12s & 16.054 & 12s & 16.040 & 12s & 16.030 & 12s & 16.017 & 12s & 16.010 & 12s\\
RF-MoE     & 16.109 & 12s & 16.070 & 12s & 16.046 & 12s & 16.032 & 12s & 16.017 & 12s & 16.005 & 12s & 15.997 & 12s & 15.991 & 12s\\
RF-POMO    & 16.152 & 9s & 16.106 & 9s & 16.079 & 9s & 16.063 & 9s & 16.049 & 9s & 16.041 & 9s & 16.029 & 9s & 16.019 & 9s\\
RF-TE      & 16.091 & 9s & 16.051 & 9s & 16.029 & 9s & 16.012 & 9s & 15.997 & 9s & 15.987 & 9s & 15.975 & 9s & 15.969 & 9s\\
CaDA       & 16.067 & 10s & 16.026 & 10s & 16.002 & 10s & 15.983 & 10s & 15.971 & 10s & 15.960 & 10s & 15.952 & 10s & 15.946 & 10s\\
MoSES(RF)  & \textbf{16.045} & 21s & \textbf{16.005} & 21s & \textbf{15.982} & 21s & \textbf{15.966} & 21s & \textbf{15.952} & 21s & \textbf{15.940} & 21s & \textbf{15.928} & 21s & \textbf{15.923} & 21s\\
MoSES(CaDA)& 16.070 & 25s & 16.024 & 25s & 16.002 & 25s & 15.984 & 25s & 15.971 & 25s & 15.961 & 25s & 15.947 & 25s & 15.942 & 25s\\

\bottomrule
\end{tabular}}
\end{table}

In VRPL, the predefined upper bound on the distance limit, denoted as $l_{\max}^{\mathrm{dur}}$, is set to 2.8 (approximately $2\sqrt{2}$) during training, which serves as the ID evaluation setting. To examine OOD generalization to unseen distance limits, we evaluate performance across a range of values for $l_{\max}^{\mathrm{dur}}$ from the set $\{ 2.7, 2.8, 2.9, 3.0, 3.1, 3.2, 3.3, 3.4 \}$, where $l_{\max}^{\mathrm{dur}}=2.8$ corresponds to the ID case. The results of this evaluation are summarized in Table~\ref{appendix:tab:ood_vrpl}. MoSES(RF) employs the dense routing strategy with the $\mathrm{norm}\_\mathrm{softplus}(\cdot)$ activation function, while MoSES(CaDA) utilizes the same routing strategy with $\mathrm{sigmid}(\cdot)$ as the activation function. Experimental results indicate that MoSES(RF) consistently outperforms all other methods, including MoSES(CaDA), across all evaluation settings. This demonstrates its superior OOD generalization capability when faced with unseen distance limits.

\begin{table}[t]
\centering
\caption{Performance comparison of multi-task VRP solvers on OOD VRPTW instances.}
\label{appendix:tab:ood_vrptw}
\resizebox{0.98\textwidth}{!}{\begin{tabular}{lcccccccccc}
\toprule

Time Window & \multicolumn{2}{c}{[0.05, 0.08, 0.10]} & \multicolumn{2}{c}{[0.15, 0.18, 0.20](\textbf{ID})} & \multicolumn{2}{c}{[0.25, 0.28, 0.30]} & \multicolumn{2}{c}{[0.35, 0.38, 0.40]} & \multicolumn{2}{c}{[0.45, 0.48, 0.50]} \\

\cmidrule(lr){2-3} \cmidrule(lr){4-5} \cmidrule(lr){6-7} \cmidrule(lr){8-9} \cmidrule(lr){10-11}

& Cost & Time & Cost & Time & Cost & Time & Cost & Time & Cost & Time \\
 
\midrule

MTPOMO 
    & 25.549 & 9s & 26.410 & 9s & 28.294 & 9s & 31.339 & 10s & 35.217 & 10s \\
MVMoE 
    & 25.490 & 12s & 26.391 & 12s & 28.258 & 12s & 31.237 & 13s & 35.041 & 13s \\
RF-MoE 
    & 25.459 & 12s & 26.319 & 12s & 28.227 & 12s & 31.338 & 13s & 35.251 & 13s \\
RF-POMO 
    & 25.492 & 9s & 26.335 & 9s & 28.242 & 9s & 31.408 & 10s & 35.315 & 10s \\
RF-TE 
    & 25.377 & 9s & 26.234 & 9s & 28.156 & 9s & 31.268 & 9s & 35.135 & 10s \\
CaDA
    & 25.309 & 11s & 26.128 & 10s & 28.095 & 11s & 31.334 & 11s & 35.329 & 11s \\
MoSES(RF) 
    & 25.271 & 22s & 26.143 & 22s & \textbf{28.027} & 23s & \textbf{30.943} & 24s & \textbf{34.500} & 24s \\
MoSES(CaDA) 
    & \textbf{25.255} & 26s & \textbf{26.032} & 26s & 28.054 & 26s & 31.354 & 27s & 35.223 & 28s \\

\toprule

Time Window & \multicolumn{2}{c}{[0.55, 0.58, 0.60]} & \multicolumn{2}{c}{[0.65, 0.68, 0.70]} & \multicolumn{2}{c}{[0.75, 0.78, 0.80]} & \multicolumn{2}{c}{[0.85, 0.88, 0.90]} & \multicolumn{2}{c}{[0.95, 0.98, 1.00]} \\

\cmidrule(lr){2-3} \cmidrule(lr){4-5} \cmidrule(lr){6-7} \cmidrule(lr){8-9} \cmidrule(lr){10-11}

& Cost & Time & Cost & Time & Cost & Time & Cost & Time & Cost & Time \\

\midrule

MTPOMO 
    & 39.549 & 10s & 44.035 & 11s & 48.215 & 11s & 52.295 & 11s & 55.933 & 11s \\
MVMoE 
    & 39.349 & 14s & 43.781 & 14s & 47.995 & 15s & 52.160 & 16s & 55.903 & 16s \\
RF-MoE 
    & 39.606 & 14s & 44.026 & 14s & 48.082 & 15s & 52.029 & 15s & 55.465 & 15s \\
RF-POMO 
    & 39.668 & 10s & 44.102 & 11s & 48.144 & 11s & 52.090 & 11s & 55.539 & 11s \\
RF-TE 
    & 39.402 & 10s & 43.783 & 10s & 47.812 & 11s & 51.869 & 11s & 55.424 & 11s \\
CaDA 
    & 39.697 & 12s & 44.152 & 12s & 48.231 & 12s & 52.122 & 13s & 55.650 & 13s \\
MoSES(RF)
    & \textbf{38.490} & 25s & \textbf{42.600} & 26s & \textbf{46.770} & 27s & \textbf{51.475} & 28s & 55.973 & 29s \\
MoSES(CaDA) 
    & 39.488 & 29s & 43.864 & 29s & 47.854 & 30s & 51.784 & 30s & \textbf{55.172} & 31s \\

\bottomrule
\end{tabular}}
\end{table}

In VRPTW, each problem instance is primarily defined by both the time window and the service duration. These are generated based on two intervals $[I_{1}, I_{2}]$ and ${I_{2}, I_{3}}$ as described in Section~\ref{appendix:problem_instance}. Let the triplet $[I_{1}, I_{2}, I_{3}]$ represent the full configurations. The setting $[0.15, 0.18, 0.20]$ serves as the in-distribution ID evaluation. To assess OOD generalization to unseen time window configurations, we consider a range of settings from the set $\{ [0.05, 0.08, 0.10], [0.15, 0.18, 0.20], [0.25, 0.28, 0.30], \ldots, [0.95, 0.98, 1.00] \}$, as reported in Table~\ref{appendix:tab:ood_vrptw}. Both MoSES(RF) and MoSES(CaDA) adopt the dense routing strategy. MoSES(RF) uses $\mathrm{norm\_softplus}(\cdot)$ as the activation function, while MoSES(CaDA) employs $\mathrm{sigmoid}(\cdot)$. Experimental results show that MoSES(RF) generally outperforms its baseline, RF-TE, across most evaluation settings, with the exception of a performance drop observed at $[0.95, 0.98, 1.00]$. Similarly, MoSES(CaDA) surpasses its baseline, CaDA, except for a marginal decline in performance at $[0.35, 0.38, 0.40]$. Overall, MoSES(RF) demonstrates superior performance compared to MoSES(CaDA) in the majority of task settings.

\subsection{CVRPLIB Evaluation}

We report the performance comparison of our proposed methods, MoSES(RF) and MoSES(CaDA), against baseline approaches on CVRPLIB instances from the X set, which includes problem sizes ranging from 101 to at most 1,001 nodes, as done in~\cite{zhou2024mvmoe, berto2024routefinder}. Both MoSES(RF) and MoSES(CaDA) use the Variant-Aware Routing-\uppercase\expandafter{\romannumeral1} strategy. MoSES(RF) uses $\mathrm{norm\_softplus}(\cdot)$ as the activation function, while MoSES(CaDA) employs $\mathrm{sigmoid}(\cdot)$. The detailed results are presented in Table~\ref{appendix:tab:cvrplib_x}. To facilitate analysis, we partition the evaluation set based on problem instance size into three subsets: instances with $N < 251$, $251 \leq N < 501$ and $501 < N \leq 1001$. Across all three subsets, both MoSES(RF) and MoSES(CaDA) consistently outperform their respective baselines, RF-TE and CaDA. On the larger instance sets ($251 \leq N < 501$ and $501 < N \leq 1001$), MoSES(RF) demonstrates superior performance compared to MoSES(CaDA). Conversely, on the smaller instance set ($N < 251$), MoSES(CaDA) achieves better results than MoSES(RF). Overall, MoSES(RF) exhibits the best performance among all evaluated methods.

\begin{table}[t]
\centering
\caption{Performance comparison of multi-task VRP solvers on CVRPLIB instances from the X set.}
\label{appendix:tab:cvrplib_x}
\resizebox{\textwidth}{!}{\begin{tabular}{ll|cccccccccccccccccccccccc}
\toprule

\multicolumn{2}{c|}{Set-X} & \multicolumn{3}{c}{MTPOMO} & \multicolumn{3}{c}{MVMoE} & \multicolumn{3}{c}{RF-MoE} & \multicolumn{3}{c}{RF-POMO} & \multicolumn{3}{c}{RF-TE} & \multicolumn{3}{c}{CaDA} & \multicolumn{3}{c}{MoSES(RF)} & \multicolumn{3}{c}{MoSES(CaDA)} \\

\midrule

 Instance & Opt. & Cost & Gap & Time & Cost & Gap & Time & Cost & Gap & Time & Cost & Gap & Time & Cost & Gap & Time & Cost & Gap & Time & Cost & Gap & Time & Cost & Gap & Time \\

\midrule

X-n101-k25 & 27591 & 29470 & 6.810\% & 0.4s & 29076 & 5.382\% & 0.5s & 28934 & 4.868\% & 0.5s & 29090 & 5.433\% & 0.3s & 29048 & 5.281\% & 0.4s & 28944 & 4.904\% & 0.5s & 28895 & 4.726\% & 0.8s & 29110 & 5.505\% & 0.8s \\

X-n106-k14 & 26362 & 28029 & 6.323\% & 0.3s & 27443 & 4.101\% & 0.5s & 27292 & 3.528\% & 0.6s & 27378 & 3.854\% & 0.3s & 27159 & 3.023\% & 0.4s & 27042 & 2.579\% & 0.3s & 27205 & 3.198\% & 0.7s & 27051 & 2.614\% & 0.8s \\

X-n110-k13 & 14971 & 15100 & 0.862\% & 0.3s & 15327 & 2.378\% & 0.5s & 15260 & 1.930\% & 0.5s & 15519 & 3.660\% & 0.3s & 15314 & 2.291\% & 0.4s & 15229 & 1.723\% & 0.3s & 15242 & 1.810\% & 0.8s & 15332 & 2.411\% & 0.8s \\

X-n115-k10 & 12747 & 13433 & 5.382\% & 0.4s & 13475 & 5.711\% & 0.6s & 13638 & 6.990\% & 0.5s & 13263 & 4.048\% & 0.3s & 13338 & 4.636\% & 0.4s & 13060 & 2.455\% & 0.4s & 13313 & 4.440\% & 0.8s & 13085 & 2.652\% & 0.8s \\

X-n120-k6 & 13332 & 14051 & 5.393\% & 0.3s & 13782 & 3.375\% & 0.6s & 13908 & 4.320\% & 0.6s & 14061 & 5.468\% & 0.4s & 13765 & 3.248\% & 0.4s & 13678 & 2.595\% & 0.3s & 13781 & 3.368\% & 0.8s & 13619 & 2.153\% & 0.8s \\

X-n125-k30 & 55539 & 59015 & 6.259\% & 0.4s & 58200 & 4.791\% & 0.7s & 58587 & 5.488\% & 0.6s & 58770 & 5.818\% & 0.4s & 58570 & 5.457\% & 0.4s & 57748 & 3.977\% & 0.4s & 58220 & 4.827\% & 0.9s & 57620 & 3.747\% & 1.0s \\

X-n129-k18 & 28940 & 30176 & 4.271\% & 0.4s & 29334 & 1.361\% & 0.6s & 30039 & 3.798\% & 0.6s & 29645 & 2.436\% & 0.4s & 29457 & 1.786\% & 0.5s & 29500 & 1.935\% & 0.4s & 29558 & 2.135\% & 0.9s & 29620 & 2.350\% & 0.9s \\

X-n134-k13 & 10916 & 11707 & 7.246\% & 0.4s & 11462 & 5.002\% & 0.6s & 11439 & 4.791\% & 0.6s & 11463 & 5.011\% & 0.4s & 11624 & 6.486\% & 0.4s & 11652 & 6.742\% & 0.4s & 11584 & 6.119\% & 0.8s & 11573 & 6.019\% & 1.0s \\

X-n139-k10 & 13590 & 14058 & 3.444\% & 0.4s & 14099 & 3.745\% & 0.6s & 13917 & 2.406\% & 0.6s & 13945 & 2.612\% & 0.4s & 13812 & 1.634\% & 0.4s & 13940 & 2.575\% & 0.4s & 13908 & 2.340\% & 0.8s & 13877 & 2.112\% & 0.9s \\

X-n143-k7 & 15700 & 16626 & 5.898\% & 0.4s & 16349 & 4.134\% & 0.6s & 16655 & 6.083\% & 0.6s & 16603 & 5.752\% & 0.5s & 16257 & 3.548\% & 0.4s & 16189 & 3.115\% & 0.4s & 16024 & 2.064\% & 0.9s & 15980 & 1.783\% & 0.9s \\

X-n148-k46 & 43448 & 46648 & 7.365\% & 0.5s & 45893 & 5.627\% & 0.8s & 46542 & 7.121\% & 0.8s & 46082 & 6.062\% & 0.5s & 45026 & 3.632\% & 0.6s & 45606 & 4.967\% & 0.6s & 45408 & 4.511\% & 1.0s & 45600 & 4.953\% & 1.0s \\

X-n153-k22 & 21220 & 23514 & 10.811\% & 0.5s & 23661 & 11.503\% & 0.7s & 23906 & 12.658\% & 0.7s & 22991 & 8.346\% & 0.5s & 23478 & 10.641\% & 0.6s & 23142 & 9.057\% & 0.5s & 23347 & 10.024\% & 1.0s & 23310 & 9.849\% & 1.0s \\

X-n157-k13 & 16876 & 17922 & 6.198\% & 0.5s & 17439 & 3.336\% & 0.7s & 17801 & 5.481\% & 0.8s & 17536 & 3.911\% & 0.5s & 17315 & 2.601\% & 0.5s & 17295 & 2.483\% & 0.5s & 17227 & 2.080\% & 1.0s & 17317 & 2.613\% & 1.0s \\

X-n162-k11 & 14138 & 14616 & 3.381\% & 0.5s & 14705 & 4.010\% & 0.7s & 14524 & 2.730\% & 0.7s & 14663 & 3.713\% & 0.5s & 14664 & 3.720\% & 0.5s & 14704 & 4.003\% & 0.5s & 14683 & 3.855\% & 1.0s & 14677 & 3.812\% & 1.0s \\

X-n167-k10 & 20557 & 21662 & 5.375\% & 0.5s & 21504 & 4.607\% & 0.7s & 21481 & 4.495\% & 0.7s & 21410 & 4.149\% & 0.5s & 21425 & 4.222\% & 0.5s & 21078 & 2.534\% & 0.5s & 21368 & 3.945\% & 1.0s & 21384 & 4.023\% & 1.0s \\

X-n172-k51 & 45607 & 48960 & 7.352\% & 0.6s & 47883 & 4.990\% & 0.9s & 49726 & 9.032\% & 1.0s & 48412 & 6.150\% & 0.6s & 48162 & 5.602\% & 0.7s & 48198 & 5.681\% & 0.6s & 48136 & 5.545\% & 1.0s & 48145 & 5.565\% & 1.0s \\

X-n176-k26 & 47812 & 51989 & 8.736\% & 0.5s & 52117 & 9.004\% & 0.8s & 53626 & 12.160\% & 0.9s & 52347 & 9.485\% & 0.6s & 51501 & 7.716\% & 0.6s & 51120 & 6.919\% & 0.6s & 52001 & 8.761\% & 1.0s & 51612 & 7.948\% & 1.0s \\

X-n181-k23 & 25569 & 26572 & 3.923\% & 0.6s & 26456 & 3.469\% & 0.8s & 29154 & 14.021\% & 0.9s & 26544 & 3.813\% & 0.6s & 26097 & 2.065\% & 0.6s & 26262 & 2.710\% & 0.6s & 26181 & 2.394\% & 1.0s & 26143 & 2.245\% & 1.0s \\

X-n186-k15 & 24145 & 25236 & 4.519\% & 0.5s & 25151 & 4.166\% & 0.8s & 25140 & 4.121\% & 0.9s & 25238 & 4.527\% & 0.5s & 25153 & 4.175\% & 0.6s & 25345 & 4.970\% & 0.6s & 25115 & 4.017\% & 1.0s & 25246 & 4.560\% & 1.0s \\

X-n190-k8 & 16980 & 18369 & 8.180\% & 0.5s & 19078 & 12.356\% & 0.9s & 18217 & 7.285\% & 0.9s & 18696 & 10.106\% & 0.6s & 17871 & 5.247\% & 0.6s & 17882 & 5.312\% & 0.6s & 17929 & 5.589\% & 1.0s & 17569 & 3.469\% & 1.0s \\

X-n195-k51 & 44225 & 48310 & 9.237\% & 0.7s & 46974 & 6.216\% & 1.0s & 48965 & 10.718\% & 1.0s & 47479 & 7.358\% & 0.7s & 47396 & 7.170\% & 0.7s & 46723 & 5.648\% & 0.7s & 46541 & 5.237\% & 1.0s & 47479 & 7.358\% & 1.0s \\

X-n200-k36 & 58578 & 62041 & 5.912\% & 0.6s & 61627 & 5.205\% & 0.9s & 61696 & 5.323\% & 0.9s & 61662 & 5.265\% & 0.6s & 61139 & 4.372\% & 0.7s & 61010 & 4.152\% & 0.8s & 61088 & 4.285\% & 1.0s & 61089 & 4.287\% & 2.0s \\

X-n204-k19 & 19565 & 20652 & 5.556\% & 0.6s & 20584 & 5.208\% & 0.9s & 20466 & 4.605\% & 1.0s & 20730 & 5.955\% & 0.6s & 20531 & 4.937\% & 0.6s & 20735 & 5.980\% & 0.6s & 20620 & 5.392\% & 1.0s & 20420 & 4.370\% & 1.0s \\

X-n209-k16 & 30656 & 32333 & 5.470\% & 0.6s & 32358 & 5.552\% & 0.9s & 32145 & 4.857\% & 0.9s & 32585 & 6.292\% & 0.6s & 31876 & 3.980\% & 0.6s & 32184 & 4.984\% & 0.6s & 31775 & 3.650\% & 1.0s & 32053 & 4.557\% & 1.0s \\

X-n214-k11 & 10856 & 11699 & 7.765\% & 0.6s & 11597 & 6.826\% & 0.9s & 11534 & 6.245\% & 0.9s & 11638 & 7.203\% & 0.6s & 11668 & 7.480\% & 0.6s & 11748 & 8.217\% & 0.6s & 11635 & 7.176\% & 1.0s & 11716 & 7.922\% & 1.0s \\

X-n219-k73 & 117595 & 121980 & 3.729\% & 0.8s & 124434 & 5.816\% & 1.0s & 121627 & 3.429\% & 1.0s & 123500 & 5.021\% & 0.8s & 120344 & 2.338\% & 0.8s & 120011 & 2.055\% & 0.8s & 119497 & 1.617\% & 2.0s & 119710 & 1.799\% & 2.0s \\

X-n223-k34 & 40437 & 43381 & 7.280\% & 0.7s & 42694 & 5.582\% & 1.0s & 43097 & 6.578\% & 1.0s & 42601 & 5.352\% & 0.7s & 42251 & 4.486\% & 0.7s & 42273 & 4.540\% & 0.7s & 42312 & 4.637\% & 1.0s & 42128 & 4.182\% & 2.0s \\

X-n228-k23 & 25742 & 28523 & 10.803\% & 0.7s & 28033 & 8.900\% & 1.0s & 29590 & 14.948\% & 1.0s & 28212 & 9.595\% & 0.8s & 28699 & 11.487\% & 0.8s & 27821 & 8.076\% & 0.7s & 27701 & 7.610\% & 1.0s & 27724 & 7.699\% & 1.0s \\

X-n233-k16 & 19230 & 20644 & 7.353\% & 0.7s & 20656 & 7.415\% & 1.0s & 20507 & 6.641\% & 1.0s & 20427 & 6.225\% & 0.7s & 20761 & 7.962\% & 0.7s & 20285 & 5.486\% & 0.9s & 20552 & 6.875\% & 1.0s & 20623 & 7.244\% & 1.0s \\

X-n237-k14 & 27042 & 30047 & 11.112\% & 0.7s & 29772 & 10.095\% & 1.0s & 29514 & 9.141\% & 1.0s & 30084 & 11.249\% & 0.7s & 29595 & 9.441\% & 0.7s & 30282 & 11.981\% & 0.7s & 29720 & 9.903\% & 1.0s & 29518 & 9.156\% & 1.0s \\

X-n242-k48 & 82751 & 88179 & 6.559\% & 0.8s & 87497 & 5.735\% & 1.0s & 87832 & 6.140\% & 1.0s & 87029 & 5.170\% & 0.8s & 85704 & 3.569\% & 0.9s & 85813 & 3.700\% & 0.8s & 85420 & 3.225\% & 2.0s & 85643 & 3.495\% & 2.0s \\

X-n247-k50 & 37274 & 41610 & 11.633\% & 0.8s & 40973 & 9.924\% & 1.0s & 43153 & 15.772\% & 1.0s & 41120 & 10.318\% & 0.8s & 40642 & 9.036\% & 0.9s & 39918 & 7.093\% & 0.8s & 40131 & 7.665\% & 2.0s & 40736 & 9.288\% & 2.0s \\

\midrule

\multicolumn{2}{c|}{Avg. Gap ($N<251$)} & \multicolumn{3}{c}{6.566\%} & \multicolumn{3}{c}{5.829\%} & \multicolumn{3}{c}{6.754\%} & \multicolumn{3}{c}{5.905\%} & \multicolumn{3}{c}{5.061\%} & \multicolumn{3}{c}{4.772\%} & \multicolumn{3}{c}{4.789\%} & \multicolumn{3}{c}{\textbf{4.724\%}} \\

\midrule

X-n251-k28 & 38684 & 41211 & 6.532\% & 0.7s & 41330 & 6.840\% & 1.0s & 40691 & 5.188\% & 1.0s & 40811 & 5.498\% & 0.8s & 40127 & 3.730\% & 0.8s & 40359 & 4.330\% & 0.8s & 40630 & 5.031\% & 2.0s & 40290 & 4.152\% & 2.0s \\

X-n256-k16 & 18839 & 20400 & 8.286\% & 0.7s & 20559 & 9.130\% & 1.0s & 20015 & 6.242\% & 1.0s & 20238 & 7.426\% & 0.7s & 19994 & 6.131\% & 0.8s & 20372 & 8.137\% & 0.7s & 20034 & 6.343\% & 1.0s & 20068 & 6.524\% & 2.0s \\

X-n261-k13 & 26558 & 28741 & 8.220\% & 0.7s & 28524 & 7.403\% & 1.0s & 28203 & 6.194\% & 1.0s & 28525 & 7.406\% & 0.8s & 28510 & 7.350\% & 0.8s & 28833 & 8.566\% & 1.0s & 28447 & 7.113\% & 2.0s & 28577 & 7.602\% & 2.0s \\

X-n266-k58 & 75478 & 84617 & 12.108\% & 0.9s & 82048 & 8.705\% & 1.0s & 81135 & 7.495\% & 1.0s & 81053 & 7.386\% & 0.9s & 79832 & 5.769\% & 0.9s & 80115 & 6.144\% & 0.9s & 79820 & 5.753\% & 2.0s & 80036 & 6.039\% & 2.0s \\

X-n270-k35 & 35291 & 38146 & 8.090\% & 0.9s & 38333 & 8.620\% & 1.0s & 37401 & 5.979\% & 1.0s & 38051 & 7.821\% & 0.8s & 37382 & 5.925\% & 0.9s & 37674 & 6.752\% & 0.8s & 37420 & 6.033\% & 2.0s & 36923 & 4.624\% & 2.0s \\

X-n275-k28 & 21245 & 24688 & 16.206\% & 0.8s & 25021 & 17.774\% & 1.0s & 25241 & 18.809\% & 1.0s & 24321 & 14.479\% & 0.8s & 24187 & 13.848\% & 0.9s & 24482 & 15.237\% & 0.8s & 24292 & 14.342\% & 2.0s & 24312 & 14.436\% & 2.0s \\

X-n280-k17 & 33503 & 36677 & 9.474\% & 0.8s & 36636 & 9.351\% & 1.0s & 36538 & 9.059\% & 1.0s & 35558 & 6.134\% & 0.9s & 36653 & 9.402\% & 0.9s & 36081 & 7.695\% & 0.8s & 35988 & 7.417\% & 2.0s & 35494 & 5.943\% & 2.0s \\

X-n284-k15 & 20226 & 22474 & 11.114\% & 0.8s & 22583 & 11.653\% & 1.0s & 21857 & 8.064\% & 1.0s & 21976 & 8.652\% & 0.8s & 22154 & 9.532\% & 0.8s & 22295 & 10.229\% & 0.8s & 22035 & 8.944\% & 2.0s & 22071 & 9.122\% & 2.0s \\

X-n289-k60 & 95151 & 104159 & 9.467\% & 0.9s & 102202 & 7.410\% & 2.0s & 102267 & 7.479\% & 2.0s & 101494 & 6.666\% & 1.0s & 100418 & 5.535\% & 1.0s & 99739 & 4.822\% & 1.0s & 100733 & 5.866\% & 2.0s & 100080 & 5.180\% & 2.0s \\

X-n294-k50 & 47161 & 52769 & 11.891\% & 0.9s & 50886 & 7.898\% & 2.0s & 51924 & 10.099\% & 1.0s & 51033 & 8.210\% & 0.9s & 50637 & 7.370\% & 1.0s & 49929 & 5.869\% & 1.0s & 50538 & 7.161\% & 2.0s & 49877 & 5.759\% & 2.0s \\

X-n298-k31 & 34231 & 37652 & 9.994\% & 0.9s & 37344 & 9.094\% & 1.0s & 36808 & 7.528\% & 1.0s & 36785 & 7.461\% & 0.9s & 37163 & 8.565\% & 0.9s & 36993 & 8.069\% & 1.0s & 36876 & 7.727\% & 2.0s & 37068 & 8.288\% & 2.0s \\

X-n303-k21 & 21736 & 23556 & 8.373\% & 0.9s & 23263 & 7.025\% & 1.0s & 23027 & 5.939\% & 1.0s & 23097 & 6.262\% & 0.9s & 23442 & 7.849\% & 0.9s & 23748 & 9.257\% & 0.9s & 23453 & 7.899\% & 2.0s & 23548 & 8.336\% & 2.0s \\

X-n308-k13 & 25859 & 28736 & 11.126\% & 0.9s & 28518 & 10.283\% & 1.0s & 29079 & 12.452\% & 1.0s & 28030 & 8.396\% & 0.9s & 28326 & 9.540\% & 0.9s & 28913 & 11.810\% & 1.0s & 28138 & 8.813\% & 2.0s & 28440 & 9.981\% & 2.0s \\

X-n313-k71 & 94043 & 102253 & 8.730\% & 1.0s & 100620 & 6.994\% & 2.0s & 100714 & 7.094\% & 2.0s & 100083 & 6.423\% & 1.0s & 99564 & 5.871\% & 1.0s & 98899 & 5.164\% & 1.0s & 98738 & 4.992\% & 2.0s & 98931 & 5.198\% & 2.0s \\

X-n317-k53 & 78355 & 82587 & 5.401\% & 1.0s & 83632 & 6.735\% & 2.0s & 87360 & 11.493\% & 2.0s & 81981 & 4.628\% & 1.0s & 80690 & 2.980\% & 1.0s & 80542 & 2.791\% & 1.0s & 80709 & 3.004\% & 2.0s & 80472 & 2.702\% & 2.0s \\

X-n322-k28 & 29834 & 32593 & 9.248\% & 1.0s & 33497 & 12.278\% & 1.0s & 32143 & 7.739\% & 1.0s & 32403 & 8.611\% & 0.9s & 32658 & 9.466\% & 1.0s & 33206 & 11.303\% & 1.0s & 32648 & 9.432\% & 2.0s & 32541 & 9.074\% & 2.0s \\

X-n327-k20 & 27532 & 30646 & 11.310\% & 1.0s & 30603 & 11.154\% & 1.0s & 29649 & 7.689\% & 1.0s & 29638 & 7.649\% & 0.9s & 29784 & 8.180\% & 1.0s & 30953 & 12.426\% & 1.0s & 29793 & 8.212\% & 2.0s & 30089 & 9.287\% & 2.0s \\

X-n331-k15 & 31102 & 34734 & 11.678\% & 0.9s & 33636 & 8.147\% & 1.0s & 34431 & 10.703\% & 2.0s & 33597 & 8.022\% & 1.0s & 34048 & 9.472\% & 1.0s & 34578 & 11.176\% & 1.0s & 33526 & 7.794\% & 2.0s & 34014 & 9.363\% & 2.0s \\

X-n336-k84 & 139111 & 152846 & 9.873\% & 1.0s & 149229 & 7.273\% & 2.0s & 150468 & 8.164\% & 2.0s & 147371 & 5.938\% & 1.0s & 146620 & 5.398\% & 1.0s & 146707 & 5.460\% & 1.0s & 147177 & 5.798\% & 3.0s & 146465 & 5.286\% & 2.0s \\

X-n344-k43 & 42050 & 46619 & 10.866\% & 1.0s & 46947 & 11.646\% & 2.0s & 45143 & 7.356\% & 2.0s & 46098 & 9.627\% & 1.0s & 44914 & 6.811\% & 1.0s & 45571 & 8.373\% & 1.0s & 45232 & 7.567\% & 2.0s & 44746 & 6.411\% & 2.0s \\

X-n351-k40 & 25896 & 29243 & 12.925\% & 1.0s & 28373 & 9.565\% & 2.0s & 28728 & 10.936\% & 2.0s & 28628 & 10.550\% & 1.0s & 28236 & 9.036\% & 1.0s & 28059 & 8.353\% & 1.0s & 28124 & 8.604\% & 2.0s & 28130 & 8.627\% & 2.0s \\

X-n359-k29 & 51505 & 55778 & 8.296\% & 1.0s & 56165 & 9.048\% & 2.0s & 54690 & 6.184\% & 2.0s & 55013 & 6.811\% & 1.0s & 55122 & 7.023\% & 1.0s & 55183 & 7.141\% & 1.0s & 55231 & 7.234\% & 2.0s & 55158 & 7.093\% & 2.0s \\

X-n367-k17 & 22814 & 26132 & 14.544\% & 1.0s & 25588 & 12.159\% & 2.0s & 26470 & 16.025\% & 2.0s & 25150 & 10.239\% & 1.0s & 25522 & 11.870\% & 1.0s & 25534 & 11.923\% & 1.0s & 24728 & 8.390\% & 2.0s & 25489 & 11.725\% & 2.0s \\

X-n376-k94 & 147713 & 156857 & 6.190\% & 1.0s & 156546 & 5.980\% & 2.0s & 156077 & 5.662\% & 2.0s & 158456 & 7.273\% & 1.0s & 151975 & 2.885\% & 1.0s & 151390 & 2.489\% & 1.0s & 151521 & 2.578\% & 3.0s & 151614 & 2.641\% & 3.0s \\

X-n384-k52 & 65940 & 73705 & 11.776\% & 1.0s & 73570 & 11.571\% & 2.0s & 70853 & 7.451\% & 2.0s & 71089 & 7.809\% & 1.0s & 70471 & 6.871\% & 1.0s & 70611 & 7.084\% & 1.0s & 70775 & 7.332\% & 2.0s & 70479 & 6.884\% & 2.0s \\

X-n393-k38 & 38260 & 43533 & 13.782\% & 1.0s & 44638 & 16.670\% & 2.0s & 41843 & 9.365\% & 2.0s & 42161 & 10.196\% & 1.0s & 41552 & 8.604\% & 1.0s & 42934 & 12.216\% & 1.0s & 41924 & 9.577\% & 2.0s & 42192 & 10.277\% & 3.0s \\

X-n401-k29 & 66154 & 71565 & 8.179\% & 1.0s & 71787 & 8.515\% & 2.0s & 69492 & 5.046\% & 2.0s & 70480 & 6.539\% & 1.0s & 69430 & 4.952\% & 1.0s & 69875 & 5.625\% & 1.0s & 69241 & 4.666\% & 3.0s & 69991 & 5.800\% & 2.0s \\

X-n411-k19 & 19712 & 23869 & 21.089\% & 1.0s & 23139 & 17.385\% & 2.0s & 24162 & 22.575\% & 2.0s & 22203 & 12.637\% & 1.0s & 22849 & 15.914\% & 1.0s & 23521 & 19.323\% & 1.0s & 22489 & 14.088\% & 2.0s & 22768 & 15.503\% & 2.0s \\

X-n420-k130 & 107798 & 122761 & 13.881\% & 2.0s & 116362 & 7.944\% & 2.0s & 120841 & 12.099\% & 2.0s & 118046 & 9.507\% & 2.0s & 117418 & 8.924\% & 2.0s & 115012 & 6.692\% & 2.0s & 115838 & 7.458\% & 3.0s & 116853 & 8.400\% & 3.0s \\

X-n429-k61 & 65449 & 74261 & 13.464\% & 1.0s & 74158 & 13.307\% & 2.0s & 71017 & 8.507\% & 2.0s & 71070 & 8.588\% & 1.0s & 70164 & 7.204\% & 2.0s & 70969 & 8.434\% & 1.0s & 70639 & 7.930\% & 3.0s & 70617 & 7.896\% & 3.0s \\

X-n439-k37 & 36391 & 41165 & 13.119\% & 1.0s & 42161 & 15.856\% & 2.0s & 38998 & 7.164\% & 2.0s & 39947 & 9.772\% & 1.0s & 39752 & 9.236\% & 1.0s & 41149 & 13.075\% & 1.0s & 39799 & 9.365\% & 3.0s & 39697 & 9.085\% & 3.0s \\

X-n449-k29 & 55233 & 60162 & 8.924\% & 1.0s & 60015 & 8.658\% & 2.0s & 59919 & 8.484\% & 2.0s & 59925 & 8.495\% & 1.0s & 60634 & 9.779\% & 1.0s & 61144 & 10.702\% & 1.0s & 60340 & 9.246\% & 3.0s & 60723 & 9.940\% & 3.0s \\

X-n459-k26 & 24139 & 29543 & 22.387\% & 1.0s & 29100 & 20.552\% & 2.0s & 26995 & 11.831\% & 2.0s & 27224 & 12.780\% & 1.0s & 27347 & 13.290\% & 2.0s & 28267 & 17.101\% & 1.0s & 27107 & 12.295\% & 3.0s & 27510 & 13.965\% & 3.0s \\

X-n469-k138 & 221824 & 252031 & 13.618\% & 2.0s & 245581 & 10.710\% & 3.0s & 242533 & 9.336\% & 3.0s & 242197 & 9.184\% & 2.0s & 238904 & 7.700\% & 2.0s & 237548 & 7.089\% & 2.0s & 236859 & 6.778\% & 4.0s & 237001 & 6.842\% & 3.0s \\

X-n480-k70 & 89449 & 101314 & 13.265\% & 2.0s & 100121 & 11.931\% & 2.0s & 96042 & 7.371\% & 3.0s & 96484 & 7.865\% & 2.0s & 95032 & 6.242\% & 2.0s & 95466 & 6.727\% & 2.0s & 95101 & 6.319\% & 4.0s & 95211 & 6.442\% & 3.0s \\

X-n491-k59 & 66483 & 77536 & 16.625\% & 2.0s & 75226 & 13.151\% & 2.0s & 72443 & 8.965\% & 3.0s & 72142 & 8.512\% & 2.0s & 72618 & 9.228\% & 2.0s & 71702 & 7.850\% & 2.0s & 72383 & 8.874\% & 3.0s & 71730 & 7.892\% & 3.0s \\

\midrule

\multicolumn{2}{c|}{Avg. Gap ($251 \leq N<501$)} & \multicolumn{3}{c}{11.529\%} & \multicolumn{3}{c}{10.616\%} & \multicolumn{3}{c}{9.217\%} & \multicolumn{3}{c}{8.399\%} & \multicolumn{3}{c}{8.107\%} & \multicolumn{3}{c}{8.889\%} & \multicolumn{3}{c}{\textbf{7.741\%}} & \multicolumn{3}{c}{7.948\%} \\

\midrule

X-n502-k39 & 69226 & 75711 & 9.368\% & 2.0s & 77033 & 11.278\% & 3.0s & 73557 & 6.256\% & 3.0s & 74317 & 7.354\% & 2.0s & 71908 & 3.874\% & 2.0s & 72655 & 4.953\% & 2.0s & 72023 & 4.040\% & 3.0s & 71682 & 3.548\% & 3.0s \\

X-n513-k21 & 24201 & 34910 & 44.250\% & 2.0s & 32858 & 35.771\% & 2.0s & 27867 & 15.148\% & 2.0s & 27871 & 15.165\% & 2.0s & 28542 & 17.937\% & 2.0s & 29422 & 21.573\% & 2.0s & 27907 & 15.313\% & 3.0s & 29139 & 20.404\% & 3.0s \\

X-n524-k153 & 154593 & 176491 & 14.165\% & 2.0s & 171734 & 11.088\% & 3.0s & 178794 & 15.655\% & 3.0s & 172181 & 11.377\% & 2.0s & 174150 & 12.651\% & 2.0s & 168181 & 8.790\% & 2.0s & 171777 & 11.116\% & 4.0s & 172580 & 11.635\% & 4.0s \\

X-n536-k96 & 94846 & 109897 & 15.869\% & 2.0s & 106031 & 11.793\% & 3.0s & 103862 & 9.506\% & 3.0s & 103854 & 9.498\% & 2.0s & 103242 & 8.852\% & 2.0s & 102355 & 7.917\% & 2.0s & 102432 & 7.998\% & 4.0s & 101712 & 7.239\% & 4.0s \\

X-n548-k50 & 86700 & 110984 & 28.009\% & 2.0s & 104240 & 20.231\% & 3.0s & 101294 & 16.833\% & 3.0s & 101549 & 17.127\% & 2.0s & 100850 & 16.321\% & 2.0s & 102318 & 18.014\% & 2.0s & 100550 & 15.975\% & 4.0s & 101918 & 17.552\% & 4.0s \\

X-n561-k42 & 42717 & 55936 & 30.946\% & 2.0s & 53110 & 24.330\% & 3.0s & 47544 & 11.300\% & 3.0s & 47835 & 11.981\% & 2.0s & 49133 & 15.020\% & 2.0s & 50287 & 17.721\% & 2.0s & 48805 & 14.252\% & 4.0s & 49363 & 15.558\% & 4.0s \\

X-n573-k30 & 50673 & 60884 & 20.151\% & 2.0s & 62033 & 22.418\% & 3.0s & 59670 & 17.755\% & 3.0s & 57388 & 13.252\% & 2.0s & 56048 & 10.607\% & 2.0s & 55353 & 9.236\% & 2.0s & 54322 & 7.201\% & 4.0s & 55058 & 8.654\% & 4.0s \\

X-n586-k159 & 190316 & 226245 & 18.879\% & 3.0s & 212545 & 11.680\% & 4.0s & 209373 & 10.013\% & 4.0s & 210049 & 10.369\% & 3.0s & 205654 & 8.059\% & 3.0s & 204649 & 7.531\% & 3.0s & 204194 & 7.292\% & 5.0s & 204848 & 7.636\% & 5.0s \\

X-n599-k92 & 108451 & 131035 & 20.824\% & 3.0s & 126654 & 16.785\% & 4.0s & 118761 & 9.507\% & 3.0s & 120022 & 10.669\% & 3.0s & 116840 & 7.735\% & 3.0s & 117784 & 8.606\% & 3.0s & 117681 & 8.511\% & 4.0s & 116938 & 7.826\% & 4.0s \\

X-n613-k62 & 59535 & 77555 & 30.268\% & 3.0s & 73633 & 23.680\% & 3.0s & 67477 & 13.340\% & 3.0s & 66818 & 12.233\% & 2.0s & 67545 & 13.454\% & 3.0s & 69069 & 16.014\% & 3.0s & 67832 & 13.936\% & 4.0s & 67730 & 13.765\% & 4.0s \\

X-n627-k43 & 62164 & 76776 & 23.506\% & 3.0s & 70744 & 13.802\% & 3.0s & 68747 & 10.590\% & 4.0s & 69716 & 12.149\% & 3.0s & 67523 & 8.621\% & 3.0s & 69361 & 11.577\% & 3.0s & 68036 & 9.446\% & 4.0s & 67896 & 9.221\% & 4.0s \\

X-n641-k35 & 63684 & 83138 & 30.548\% & 3.0s & 71986 & 13.036\% & 4.0s & 70691 & 11.003\% & 4.0s & 71120 & 11.676\% & 3.0s & 70631 & 10.909\% & 3.0s & 73624 & 15.608\% & 3.0s & 70687 & 10.996\% & 4.0s & 71974 & 13.017\% & 4.0s \\

X-n655-k131 & 106780 & 120771 & 13.103\% & 3.0s & 118758 & 11.217\% & 4.0s & 119665 & 12.067\% & 4.0s & 117339 & 9.889\% & 3.0s & 112289 & 5.159\% & 3.0s & 110657 & 3.631\% & 4.0s & 111563 & 4.479\% & 5.0s & 110267 & 3.266\% & 5.0s \\

X-n670-k130 & 146332 & 183183 & 25.183\% & 3.0s & 168210 & 14.951\% & 4.0s & 180539 & 23.376\% & 4.0s & 166596 & 13.848\% & 3.0s & 168829 & 15.374\% & 3.0s & 161571 & 10.414\% & 3.0s & 167248 & 14.294\% & 5.0s & 163051 & 11.425\% & 5.0s \\

X-n685-k75 & 68205 & 92701 & 35.915\% & 3.0s & 82607 & 21.116\% & 4.0s & 78039 & 14.418\% & 4.0s & 77265 & 13.283\% & 3.0s & 77890 & 14.200\% & 3.0s & 78473 & 15.055\% & 4.0s & 77618 & 13.801\% & 5.0s & 77132 & 13.088\% & 5.0s \\

X-n701-k44 & 81923 & 92723 & 13.183\% & 3.0s & 89704 & 9.498\% & 4.0s & 89743 & 9.546\% & 4.0s & 90006 & 9.867\% & 3.0s & 90580 & 10.567\% & 3.0s & 92198 & 12.542\% & 3.0s & 90359 & 10.297\% & 5.0s & 90703 & 10.717\% & 5.0s \\

X-n716-k35 & 43373 & 59383 & 36.912\% & 3.0s & 52170 & 20.282\% & 4.0s & 49166 & 13.356\% & 4.0s & 49524 & 14.182\% & 3.0s & 49480 & 14.080\% & 3.0s & 50605 & 16.674\% & 3.0s & 49005 & 12.985\% & 5.0s & 49405 & 13.907\% & 5.0s \\

X-n733-k159 & 136187 & 175848 & 29.122\% & 4.0s & 156268 & 14.745\% & 5.0s & 158156 & 16.131\% & 5.0s & 154339 & 13.329\% & 4.0s & 148581 & 9.101\% & 4.0s & 146080 & 7.264\% & 4.0s & 149852 & 10.034\% & 6.0s & 147334 & 8.185\% & 6.0s \\

X-n749-k98 & 77269 & 102208 & 32.276\% & 4.0s & 92403 & 19.586\% & 5.0s & 88483 & 14.513\% & 5.0s & 87621 & 13.397\% & 4.0s & 85046 & 10.065\% & 4.0s & 85325 & 10.426\% & 4.0s & 85594 & 10.774\% & 6.0s & 84712 & 9.633\% & 6.0s \\

X-n766-k71 & 114417 & 132968 & 16.213\% & 4.0s & 130101 & 13.708\% & 5.0s & 133549 & 16.721\% & 6.0s & 126445 & 10.512\% & 4.0s & 129866 & 13.502\% & 4.0s & 127752 & 11.655\% & 4.0s & 126865 & 10.880\% & 6.0s & 126387 & 10.462\% & 7.0s \\

X-n783-k48 & 72386 & 108577 & 49.997\% & 4.0s & 96432 & 33.219\% & 5.0s & 82299 & 13.695\% & 5.0s & 82041 & 13.338\% & 4.0s & 82839 & 14.441\% & 4.0s & 87562 & 20.965\% & 5.0s & 82324 & 13.729\% & 6.0s & 83864 & 15.857\% & 6.0s \\

X-n801-k40 & 73311 & 92125 & 25.663\% & 4.0s & 87187 & 18.928\% & 6.0s & 89100 & 21.537\% & 6.0s & 88259 & 20.390\% & 5.0s & 86121 & 17.474\% & 4.0s & 94076 & 28.325\% & 4.0s & 85696 & 16.894\% & 6.0s & 89478 & 22.053\% & 6.0s \\

X-n819-k171 & 158121 & 192102 & 21.491\% & 5.0s & 178856 & 13.113\% & 7.0s & 175286 & 10.856\% & 6.0s & 177119 & 12.015\% & 5.0s & 174446 & 10.324\% & 5.0s & 172387 & 9.022\% & 5.0s & 171520 & 8.474\% & 8.0s & 171676 & 8.573\% & 7.0s \\

X-n837-k142 & 193737 & 231002 & 19.235\% & 5.0s & 230226 & 18.834\% & 7.0s & 213765 & 10.338\% & 7.0s & 215009 & 10.980\% & 5.0s & 208669 & 7.707\% & 5.0s & 209540 & 8.157\% & 5.0s & 208667 & 7.706\% & 8.0s & 209031 & 7.894\% & 7.0s \\

X-n856-k95 & 88965 & 117243 & 31.786\% & 5.0s & 105763 & 18.882\% & 6.0s & 109164 & 22.704\% & 7.0s & 99273 & 11.587\% & 5.0s & 98164 & 10.340\% & 5.0s & 102312 & 15.003\% & 5.0s & 99233 & 11.542\% & 7.0s & 98914 & 11.183\% & 7.0s \\

X-n876-k59 & 99299 & 114212 & 15.018\% & 5.0s & 114175 & 14.981\% & 7.0s & 110476 & 11.256\% & 7.0s & 112919 & 13.716\% & 5.0s & 107477 & 8.236\% & 5.0s & 109693 & 10.467\% & 5.0s & 107589 & 8.349\% & 7.0s & 110843 & 11.625\% & 7.0s \\

X-n895-k37 & 53860 & 106062 & 96.922\% & 6.0s & 70363 & 30.641\% & 6.0s & 64648 & 20.030\% & 6.0s & 64343 & 19.463\% & 5.0s & 64225 & 19.244\% & 5.0s & 73280 & 36.056\% & 6.0s & 62460 & 15.967\% & 7.0s & 67830 & 25.938\% & 8.0s \\

X-n916-k207 & 329179 & 387367 & 17.677\% & 7.0s & 374899 & 13.889\% & 8.0s & 361709 & 9.882\% & 9.0s & 360505 & 9.516\% & 7.0s & 353039 & 7.248\% & 7.0s & 351887 & 6.898\% & 7.0s & 353222 & 7.304\% & 9.0s & 352488 & 7.081\% & 10.0s \\

X-n936-k151 & 132715 & 200816 & 51.314\% & 7.0s & 161700 & 21.840\% & 8.0s & 182393 & 37.432\% & 8.0s & 158680 & 19.564\% & 7.0s & 162903 & 22.746\% & 7.0s & 154847 & 16.676\% & 7.0s & 157310 & 18.532\% & 9.0s & 155618 & 17.257\% & 9.0s \\

X-n957-k87 & 85465 & 126220 & 47.686\% & 7.0s & 124190 & 45.311\% & 8.0s & 106292 & 24.369\% & 8.0s & 104024 & 21.715\% & 7.0s & 103089 & 20.621\% & 7.0s & 108664 & 27.144\% & 7.0s & 103134 & 20.674\% & 9.0s & 106903 & 25.084\% & 9.0s \\

X-n979-k58 & 118976 & 138987 & 16.819\% & 7.0s & 132651 & 11.494\% & 9.0s & 133186 & 11.944\% & 8.0s & 133188 & 11.945\% & 8.0s & 129633 & 8.957\% & 7.0s & 133201 & 11.956\% & 7.0s & 129535 & 8.875\% & 9.0s & 132728 & 11.559\% & 9.0s \\

X-n1001-k43 & 72355 & 132976 & 83.783\% & 7.0s & 89175 & 23.246\% & 9.0s & 85919 & 18.746\% & 8.0s & 84377 & 16.615\% & 7.0s & 85852 & 18.654\% & 7.0s & 92974 & 28.497\% & 7.0s & 84390 & 16.633\% & 9.0s & 93476 & 29.191\% & 10.0s \\

\midrule

\multicolumn{2}{c|}{Avg. Gap ($501 < N \leq 1001$)} & \multicolumn{3}{c}{30.190\%} & \multicolumn{3}{c}{18.918\%} & \multicolumn{3}{c}{14.994\%} & \multicolumn{3}{c}{13.188\%} & \multicolumn{3}{c}{12.253\%} & \multicolumn{3}{c}{14.199\%} & \multicolumn{3}{c}{\textbf{11.509\%}} & \multicolumn{3}{c}{12.814\%} \\

\midrule

\multicolumn{2}{c|}{Avg. Gap} & \multicolumn{3}{c}{15.863\%} & \multicolumn{3}{c}{11.693\%} & \multicolumn{3}{c}{10.253\%} & \multicolumn{3}{c}{9.108\%} & \multicolumn{3}{c}{8.428\%} & \multicolumn{3}{c}{9.230\%} & \multicolumn{3}{c}{\textbf{7.973\%}} & \multicolumn{3}{c}{8.441\%} \\

\bottomrule
\end{tabular}}
\end{table}

\subsection{Hyperparameter Studies}

Figure~\ref{appendix:fig:lora_rank} presents an analysis of how varying LoRA ranks influence the performance of MoSES(RF) and MoSES(CaDA) under the settings of $N=50$ and $N=100$. We allow the ranks for the frozen modules $\{ B_{i}A_{i} \}_{i=1}^{4}$ and the trainable module $\hat{B}\hat{A}$ to differ, denoted as $r_{\mathrm{frozen}}$ and $r_{\mathrm{free}}$ respectively. In Figure~\ref{appendix:fig:lora_rank}, the blue curves represent experiments where $r_{\mathrm{frozen}}=32$ is fixed while $r_{\mathrm{free}}$ varies from 4 to 32. Conversely, the green curves fix $r_{\mathrm{free}}=32$ while varying $r_{\mathrm{frozen}}$ over the same range. The y-axis indicates the average optimality gap across 16 VRP variants. For MoSES(RF) under the setting of $N=50$, it is observed that reducing the LoRA rank of the trainable module $\hat{B}\hat{A}$ while keeping $r_{\mathrm{frozen}}=32$ incurs less performance degradation compared to reducing the LoRA rank of the frozen modules $\{ B_{i}A_{i} \}_{i=1}^{4}$ with $r_{\mathrm{free}}=32$, as shown in Figure~\ref{appendix:fig:lora_rank_rf_50}. However, under the setting of $N=100$, the performance decline caused by reducing $r_{\mathrm{frozen}}$ with $r_{\mathrm{free}}=32$ is generally smaller than that caused by reducing $r_{\mathrm{free}}$ with $r_{\mathrm{frozen}}=32$ (see Figure~\ref{appendix:fig:lora_rank_rf_100}). This suggests that for smaller-scale problem instances ($N=50$), MoSES(RF) relies more heavily on the frozen modules $\{ B_{i}A_{i} \}_{i=1}^{4}$ inherited from the basis solvers. In contrast, for larger-scale problem instances ($N=100$), the trainable module $\hat{B}\hat{A}$ becomes at least as critical as the frozen modules in contributing to overall performance. Likewise, similar trends are observed for MoSES(CaDA) under the setting of $N=50$,, as illustrated in Figure~\ref{appendix:fig:lora_rank_cada_50}. Under the setting of $N=100$, the performance gap between reducing the LoRA rank $r_{\mathrm{frozen}}$ of the frozen modules (with fixed trainable module rank $r_{\mathrm{free}}$) and reducing the LoRA rank $r_{\mathrm{free}}$ of the trainable module (with fixed frozen module rank $r_{\mathrm{frozen}}$) becomes more pronounced (see Figure~\ref{appendix:fig:lora_rank_cada_100}). This suggests that at the smaller scale ($N=50$), the frozen LoRA experts contribute more significantly to the performance of MoSES(CaDA) than the trainable LoRA expert. In contrast, at the larger scale ($N=100$), the trainable LoRA expert plays a more critical role in driving performance than the frozen LoRA experts. We observe similar performance trends across each individual VRP variant, as shown in Figures~\ref{appendix:fig:lora_rank_rf_taskwise} and~\ref{appendix:fig:lora_rank_cada_taskwise}.

\begin{figure*}[t]
\centering
\subfigure[MoSES(RF), $N=50$]
{\label{appendix:fig:lora_rank_rf_50}\includegraphics[width=0.33\textwidth]{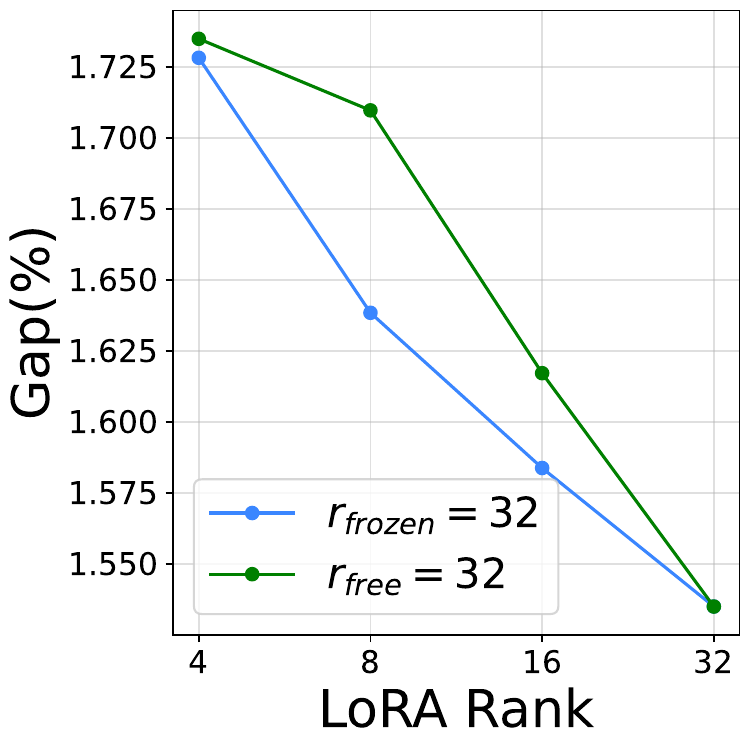}}
\hspace{0.05\textwidth}
\subfigure[MoSES(RF), $N=100$]
{\label{appendix:fig:lora_rank_rf_100}\includegraphics[width=0.33\textwidth]{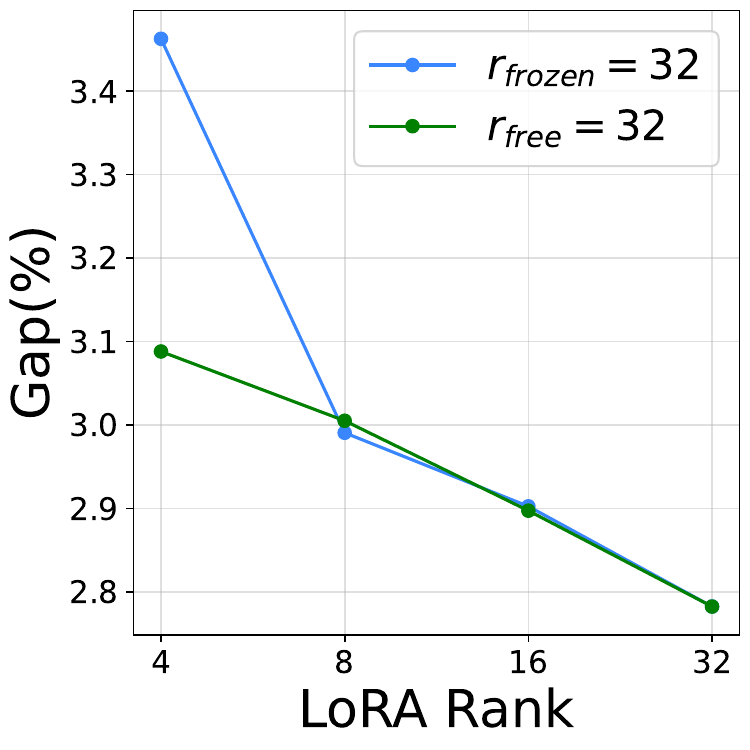}}
\subfigure[MoSES(CaDA), $N=50$]
{\label{appendix:fig:lora_rank_cada_50}\includegraphics[width=0.33\textwidth]{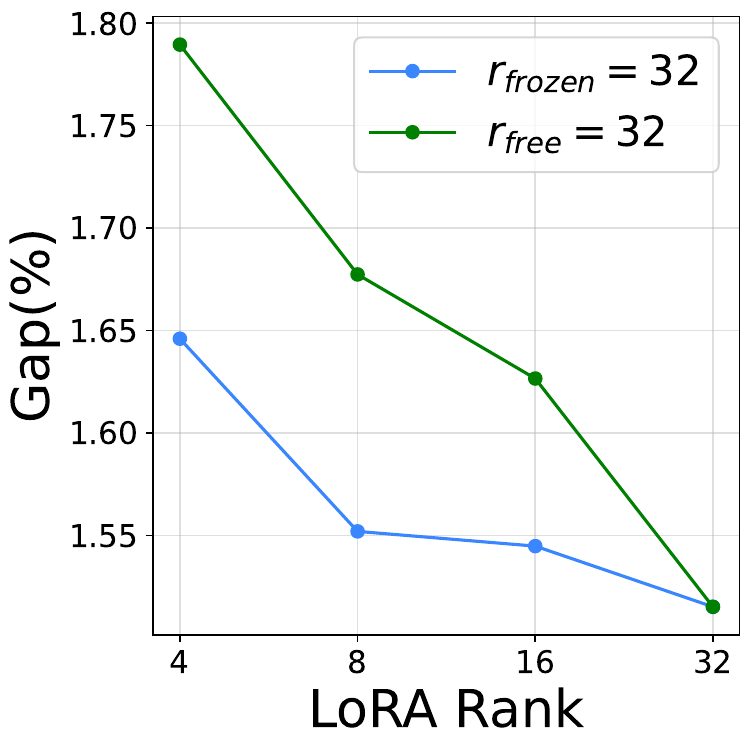}}
\hspace{0.05\textwidth}
\subfigure[MoSES(CaDA), $N=100$]
{\label{appendix:fig:lora_rank_cada_100}\includegraphics[width=0.33\textwidth]{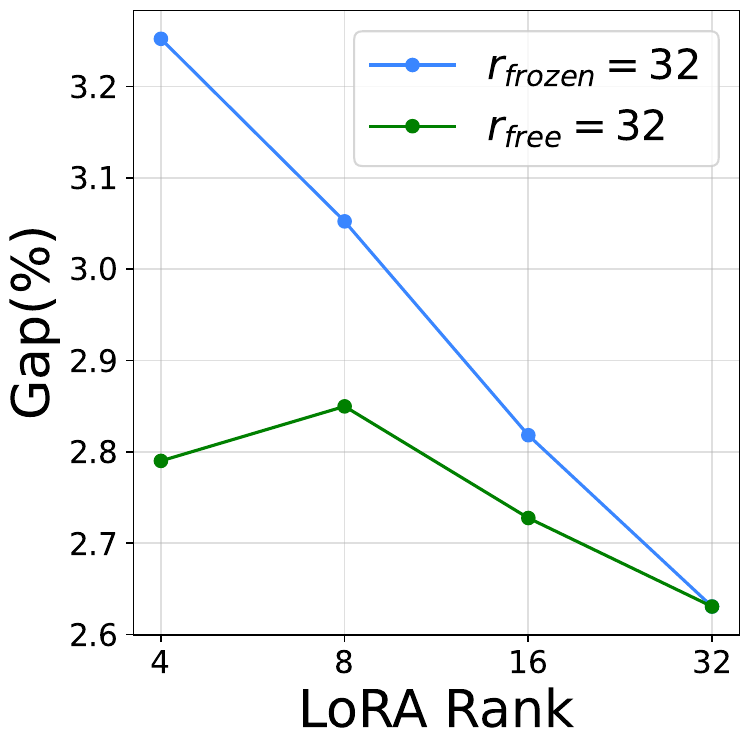}}
\caption{This figure investigates the effect of LoRA ranks on MoSES(RF) and MoSES(CaDA) under both $N=50$ and $N=100$ settings.}
\label{appendix:fig:lora_rank}
\end{figure*}

\subsection{Ablation studies}

We evaluate our proposed Gated-LoRA against standard LoRA that varies $\beta$ from 0.1 to 1.0, as shown in Figure~\ref{appendix:fig:gated_lora}. Since both methods are applied during the fine-tuning phase to produce specialized basis solvers for the basis VRP variants (OVRP, VRPB, VRPL, VRPTW), Figure~\ref{appendix:fig:gated_lora} reports the average optimality gap and average cost across these four variants. When built on the RF-based backbone, the standard LoRA expert achieves its best performance at $\beta=0.8$ in terms of both optimality gap and cost (blue curves in Figures~\ref{appendix:fig:gated_lora_rf_gap} and~\ref{appendix:fig:gated_lora_rf_cost}). However, the noticeable gap between the lowest point of the blue curves and the green dashed line indicates that our proposed Gated-LoRA module outperforms standard LoRA. Similarly, when built on the CaDA-based backbone, the standard LoRA expert performs best at $\beta=0.9$, yet the Gated-LoRA module still achieves superior performance. These experimental results support our hypothesis that problem instances from any basis VRP variant (excluding CVRP) are OOD inputs to the frozen backbone model, resulting in task-misaligned features in the embeddings generated by the backbone.

\begin{figure*}[t]
\centering
\subfigure[RF-based Backbone]
{\label{appendix:fig:gated_lora_rf_gap}\includegraphics[width=0.33\textwidth]{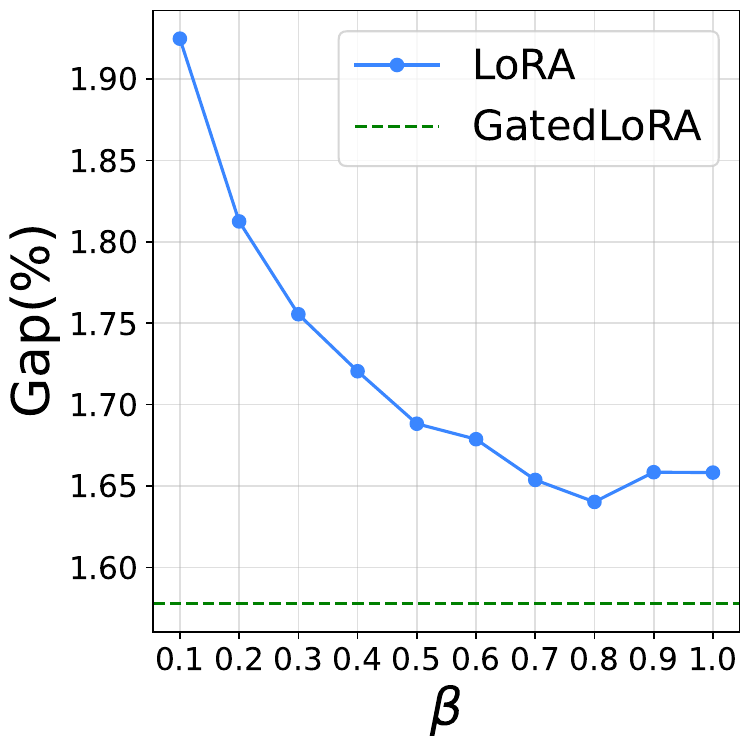}}
\hspace{0.05\textwidth}
\subfigure[RF-based Backbone]
{\label{appendix:fig:gated_lora_rf_cost}\includegraphics[width=0.33\textwidth]{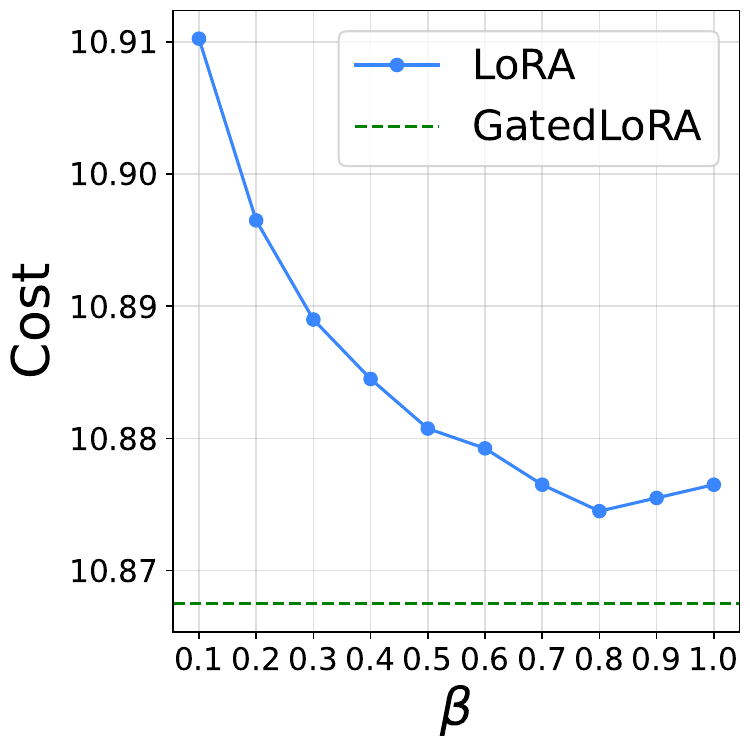}}
\subfigure[CaDA-based Backbone]
{\label{appendix:fig:gated_lora_cada_gap}\includegraphics[width=0.33\textwidth]{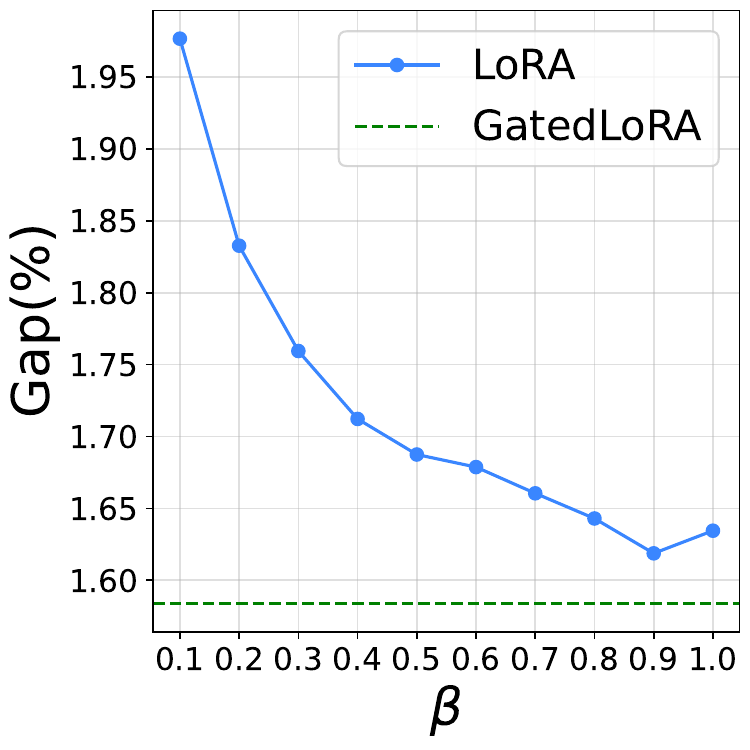}}
\hspace{0.05\textwidth}
\subfigure[CaDA-based Backbone]
{\label{appendix:fig:gated_lora_cada_cost}\includegraphics[width=0.33\textwidth]{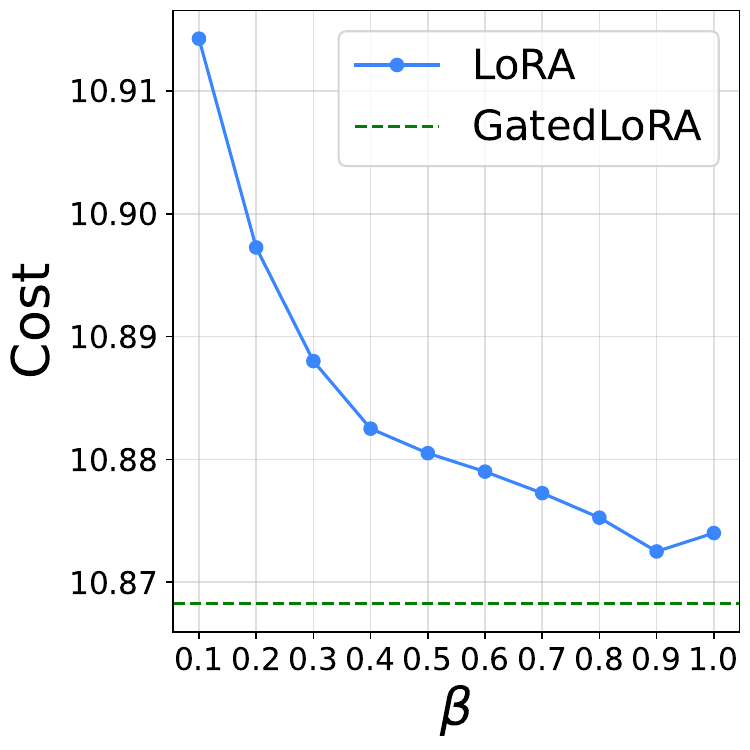}}
\caption{This figure compares the performance of Gated-LoRA with standard LoRA across varying values of $\beta\in(0,1]$, under both RF-based and CaDA-based backbones.}
\label{appendix:fig:gated_lora}
\end{figure*}

Figure~\ref{appendix:fig:act_func} presents a performance comparison of different activation functions used in the adaptive gating mechanism of the mixture of specialized experts, including $\mathrm{softmax}(\cdot)$, $\mathrm{norm\_softplus}(\cdot)$, and $\mathrm{sigmoid}(\cdot)$. In both the $N=50$ and $N=100$ settings, MoSES(RF) demonstrates superior performance when employing the $\mathrm{norm_softplus}(\cdot)$ function, while performance degrades when using $\mathrm{sigmoid}(\cdot)$. This suggests that MoSES(RF) benefits from a convex combination of the pretrained, frozen LoRA experts. In contrast, MoSES(CaDA) achieves its best performance with the $\mathrm{sigmoid}(\cdot)$ function, indicating a preference for selectively reusing or suppressing specific LoRA experts without being constrained by a unit-sum requirement. Figure~\ref{appendix:fig:act_func_taskwise} presents a performance comparison of different activation functions across each VRP variant. As shown in Figure~\ref{appendix:fig:route_method}, the dense routing method yields the best performance for both MoSES(RF) and MoSES(CaDA), while a significant performance degradation is observed when the trainable LoRA module $\hat{B}\hat{A}$ is ablated. This suggests that, to achieve better in-distribution performance, the unified neural solver should leverage the knowledge encoded in the pretrained specialized LoRA experts, which capture the ability to solve basis VRP variants. At the same time, the new knowledge learned through the trainable LoRA expert is also crucial for enhancing performance. Figure~\ref{appendix:fig:route_method_taskwise} presents a comparative result of different routing strategies across each VRP variant.

To investigate the impact of incorporating explicit task descriptors into the gating mechanism on performance, we modified the input of the gating mechanism to explicitly include constraint flags. This modification is implemented on MoSES(RF) under the $N=50$ setting. As shown in Table~\ref{appendix:tab:task_flag}, we observe that including task descriptors in the gating mechanism reduces the optimality gap for some tasks, while increasing it for others. Overall, the performance impact is minimal. We speculate that the gating mechanism is capable of implicitly identifying different VRP variants based solely on the problem instance. As a result, the additional task descriptors do not provide a performance gain.

\begin{table}[t]
\centering
\caption{Impact of incorporating task descriptors into the gating mechanism on performance.}
\label{appendix:tab:task_flag}
\begin{tabular}{lcc}
\toprule

Task & w/o Task Descriptors & w/ Task Descriptors \\
\midrule
CVRP        & 0.900\% & 0.880\% \\
VRPTW       & 1.445\% & 1.452\% \\
OVRP        & 1.892\% & 1.940\% \\
VRPL        & 1.089\% & 1.058\% \\
VRPB        & 2.342\% & 2.339\% \\
OVRPTW      & 0.959\% & 0.950\% \\
VRPBL       & 3.185\% & 3.187\% \\
VRPBLTW     & 1.370\% & 1.382\% \\
VRPBTW      & 1.121\% & 1.109\% \\
VRPLTW      & 1.811\% & 1.820\% \\
OVRPB       & 1.979\% & 1.995\% \\
OVRPBL      & 2.014\% & 2.013\% \\
OVRPBLTW    & 0.791\% & 0.786\% \\
OVRPBTW     & 0.783\% & 0.789\% \\
OVRPL       & 1.917\% & 1.941\% \\
OVRPLTW     & 0.962\% & 0.952\% \\
\midrule
\textbf{Average}     & \textbf{1.535\%} & 1.537\% \\

\bottomrule
\end{tabular}
\end{table}

\begin{table}[t]
\centering
\caption{Performance Comparison of LoRA Expert vs. Linear Expert.}
\label{appendix:tab:linear_expert}
\begin{tabular}{lcc}
\toprule

Task & LoRA Expert & Linear Expert \\
\midrule
CVRP & 0.900\% & 0.878\% \\
VRPTW & 1.445\% & 1.427\% \\
OVRP & 1.892\% & 1.808\% \\
VRPL & 1.089\% & 1.061\% \\
VRPB & 2.342\% & 2.217\% \\
OVRPTW & 0.959\% & 0.938\% \\
VRPBL & 3.185\% & 2.960\% \\
VRPBLTW & 1.370\% & 1.431\% \\
VRPBTW & 1.121\% & 1.145\% \\
VRPLTW & 1.811\% & 1.845\% \\
OVRPB & 1.979\% & 1.764\% \\
OVRPBL & 2.014\% & 1.775\% \\
OVRPBLTW & 0.791\% & 0.783\% \\
OVRPBTW & 0.783\% & 0.778\% \\
OVRPL & 1.917\% & 1.819\% \\
OVRPLTW & 0.962\% & 0.943\% \\
\midrule
\textbf{Average} & 1.535\% & \textbf{1.473\%} \\

\bottomrule
\end{tabular}
\end{table}

\begin{table}[t]
\centering
\caption{Impact of data augmentation on performance.}
\label{appendix:tab:data_aug}
\begin{tabular}{lcc}
\toprule

Method & Avg. Gap & Avg. Time \\
\midrule
RF w/ 8$\times$ dihedral & 2.063\% & 2.0s \\
RF w/ 32$\times$ symmetric & 3.261\% & 6.1s \\
MoSES(RF) w/ 8x dihedral & 1.535\% & 5.8s \\

\bottomrule
\end{tabular}
\end{table}

\begin{table}[t]
\centering
\caption{Impact of poorly trained basis VRP solver on performance.}
\label{appendix:tab:robust}
\begin{tabular}{lcc}
\toprule

 & Tasks w/ OVRP & Tasks w/o OVRP \\
\midrule
Fully Trained OVRP Solver & 1.412\% & 1.658\% \\
Poorly Trained OVRP Solver & 1.524\% & 1.654\% \\

\midrule
\midrule

 & Tasks w/ VRPL & Tasks w/o VRPL \\
\midrule
Fully Trained VRPL Solver & 1.642\% & 1.428\% \\
Poorly Trained VRPL Solver & 1.660\% & 1.453\% \\

\midrule
\midrule

 & Tasks w/ VRPB & Tasks w/o VRPB \\
\midrule
Fully Trained VRPB Solver & 1.698\% & 1.372\% \\
Poorly Trained VRPB Solver & 1.722\% & 1.393\% \\

\midrule
\midrule

 & Tasks w/ VRPTW & Tasks w/o VRPTW \\
\midrule
Fully Trained VRPTW Solver & 1.155\% & 1.915\% \\
Poorly Trained VRPTW Solver & 1.378\% & 1.969\% \\

\bottomrule
\end{tabular}
\end{table}

To investigate linear task-specific experts as an alternative, we designed an experiment where the basis VRP solvers for OVRP, VRPB, VRPL, and VRPTW are trained using linear adapters. These linear adapters are then integrated into the unified solver to enable reuse. We implemented this experiment on the MoSES(RF) model under the setting of $N=50$. The experimental results are presented in Table~\ref{appendix:tab:linear_expert}. We observe that the proposed linear expert contributes to performance improvement. 

Since our method introduces additional time overhead due to the dynamic gating mechanism compared to its direct baseline, we apply the 32$\times$ data augmentation technique proposed in~\cite{bdeir2022attention} to the baseline method to examine whether increasing the inference time of the baseline method to match the runtime of our method can also yield a performance improvement. We conduct this experiment based on the RF model in the $N=50$ setting. As shown in Table~\ref{appendix:tab:data_aug}, we observe that RF with 32$\times$ data augmentation does not result in a performance improvement, despite the increased inference time. Therefore, we conclude that the additional inference time introduced by our method is justified by the performance gains it delivers.

To evaluate the robustness of our method against poorly trained basis solvers, we conducted experiments using MoSES(RF) under the $N=50$ setting. In our method, each basis solver typically employs a LoRA rank of 32. We observed that reducing the LoRA rank to 4 significantly degrades performance, so we used basis solvers with a LoRA rank of 4 to simulate poorly trained solvers. Specifically, we trained four such poorly trained solvers, each corresponding to one of the basis VRP variants. We then replaced one of the original high-performing solvers in MoSES(RF) with a corresponding poor solver and retrained the gating mechanism accordingly. In Table~\ref{appendix:tab:robust}, `Fully Trained OVRP Solver' and `Poorly Trained OVRP Solver' refer to cases where MoSES(RF) uses a fully trained basis solver and a poorly trained basis solver for OVRP, respectively, while the remaining basis VRP solvers remain unchanged. We compare their performance separately on VRP variants with and without the open route constraint. Thus, `Tasks w/ OVRP' and `Tasks w/o OVRP' represent the average optimality gap across 8 VRP variants with and without the open route constraint, respectively. The same notation is also applied to the remaining constraints in Table~\ref{appendix:tab:robust}. From the results, we observe that the optimality gap is more significantly affected for tasks that include the corresponding constraints when the related basis solver is poorly trained.

\subsection{Visualizing Adaptive Gating Mechanisms}

In Figure~\ref{appendix:fig:gates_visual}, we present a visualization of the weights assigned to five basis solvers, each corresponding to a specific constraint: Capacity (C), Open Route (O), Distance Limit (L), Backhaul (B), and Time Window (TW), for both MoSES(RF) and MoSES(CaDA) under the setting $N=100$. Each weight is calculated by averaging across layers, time steps, and problem instances, based on evaluations on 1,000 problem instances per model.

We observe that, since each VRP variant is derived from CVRP, the CVRP solver consistently receives a dominant weight allocation in both MoSES(RF) and MoSES(CaDA). Furthermore, when a VRP variant incorporates the Time Window constraint, the corresponding VRPTW solver receives significantly higher weights compared to other solvers, with the exception of the CVRP solver. Overall, basis solvers associated with the constraints present in the current VRP variant tend to be preferred. Additionally, solvers corresponding to irrelevant basis variants may still receive non-negligible weights, likely due to partial similarities among the basis VRP variants. Due to the use of the $\mathrm{sigmoid}(\cdot)$ activation function in MoSES(CaDA), the model tends to assign weights with greater magnitudes, thereby capturing more informative signals from the basis solvers.

\begin{figure*}[h]
\centering
\subfigure[$N=50$]
{\label{appendix:fig:act_50}\includegraphics[width=0.45\textwidth]{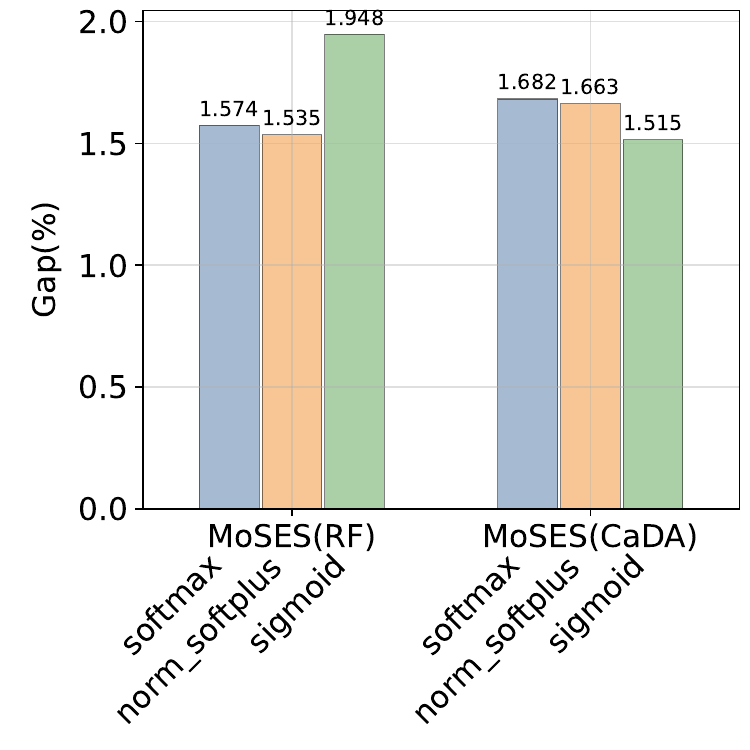}}
\hspace{0.05\textwidth}
\subfigure[$N=100$]
{\label{appendix:fig:act_100}\includegraphics[width=0.45\textwidth]{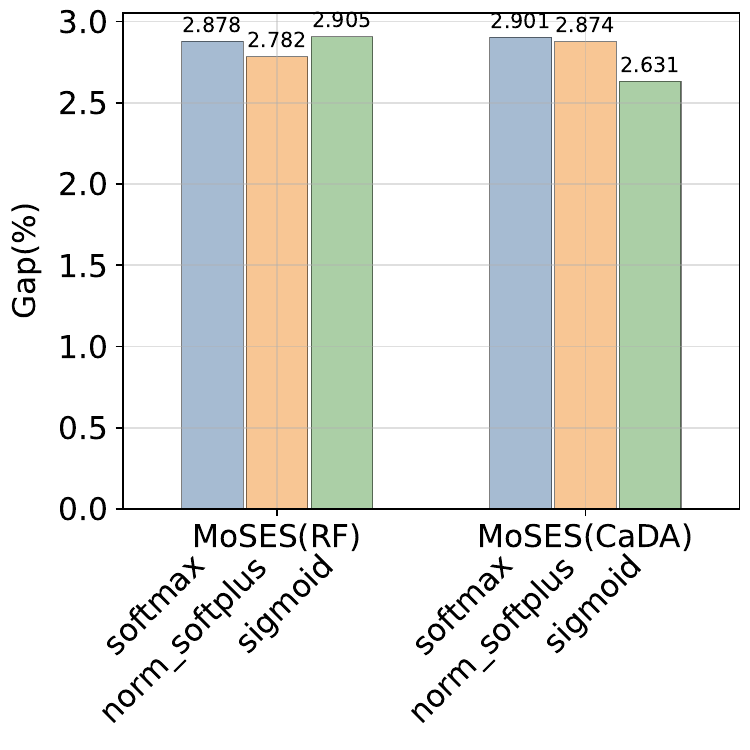}}
\caption{This figure compares the performance of activation functions used in the adaptive gating mechanism of the mixture of specialized experts for both MoSES(RF) and MoSES(CaDA) under the settings of $N=50$ and $N=100$.}
\label{appendix:fig:act_func}
\end{figure*}

\begin{figure*}[h]
\centering
\subfigure[$N=50$]
{\label{appendix:fig:route_50}\includegraphics[width=0.45\textwidth]{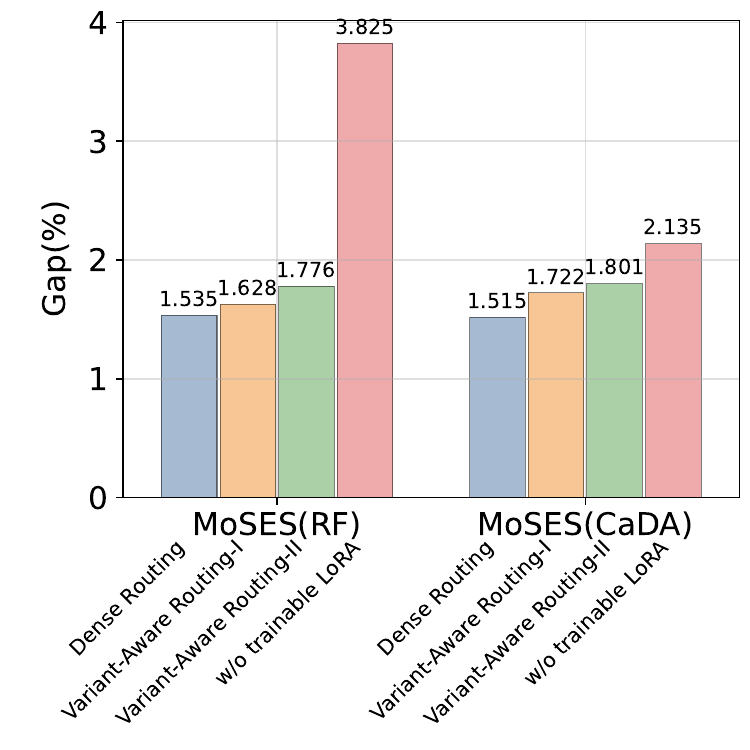}}
\hspace{0.05\textwidth}
\subfigure[$N=100$]
{\label{appendix:fig:route_100}\includegraphics[width=0.45\textwidth]{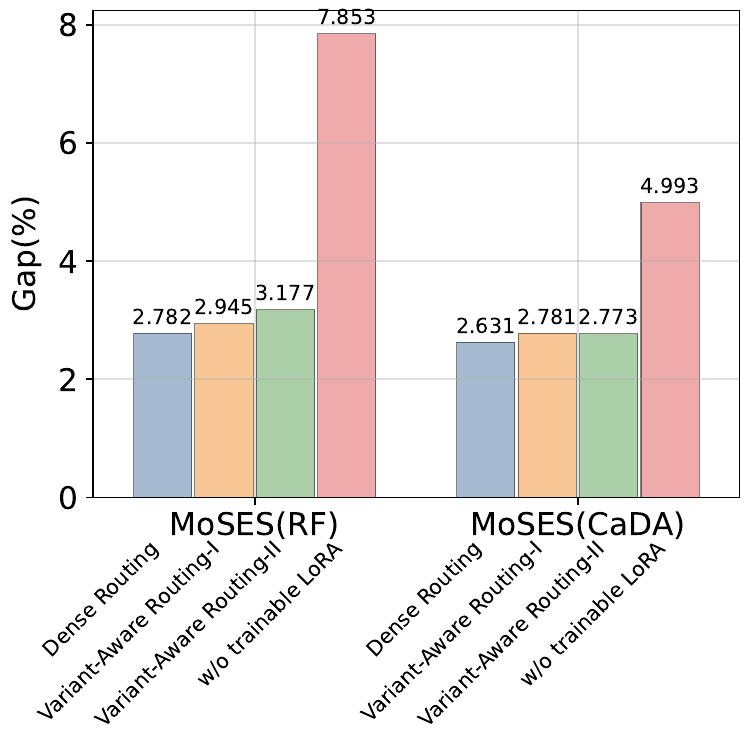}}
\caption{This figure compares the performance of routing strategies used in the adaptive gating mechanism of the mixture of specialized experts for both MoSES(RF) and MoSES(CaDA) under the settings of $N=50$ and $N=100$.}
\label{appendix:fig:route_method}
\end{figure*}

\begin{figure*}[t]
\centering
\subfigure[MoSES(RF), $N=50$]
{\label{appendix:fig:lora_rank_rf_50_taskwise}\includegraphics[width=0.9\textwidth]{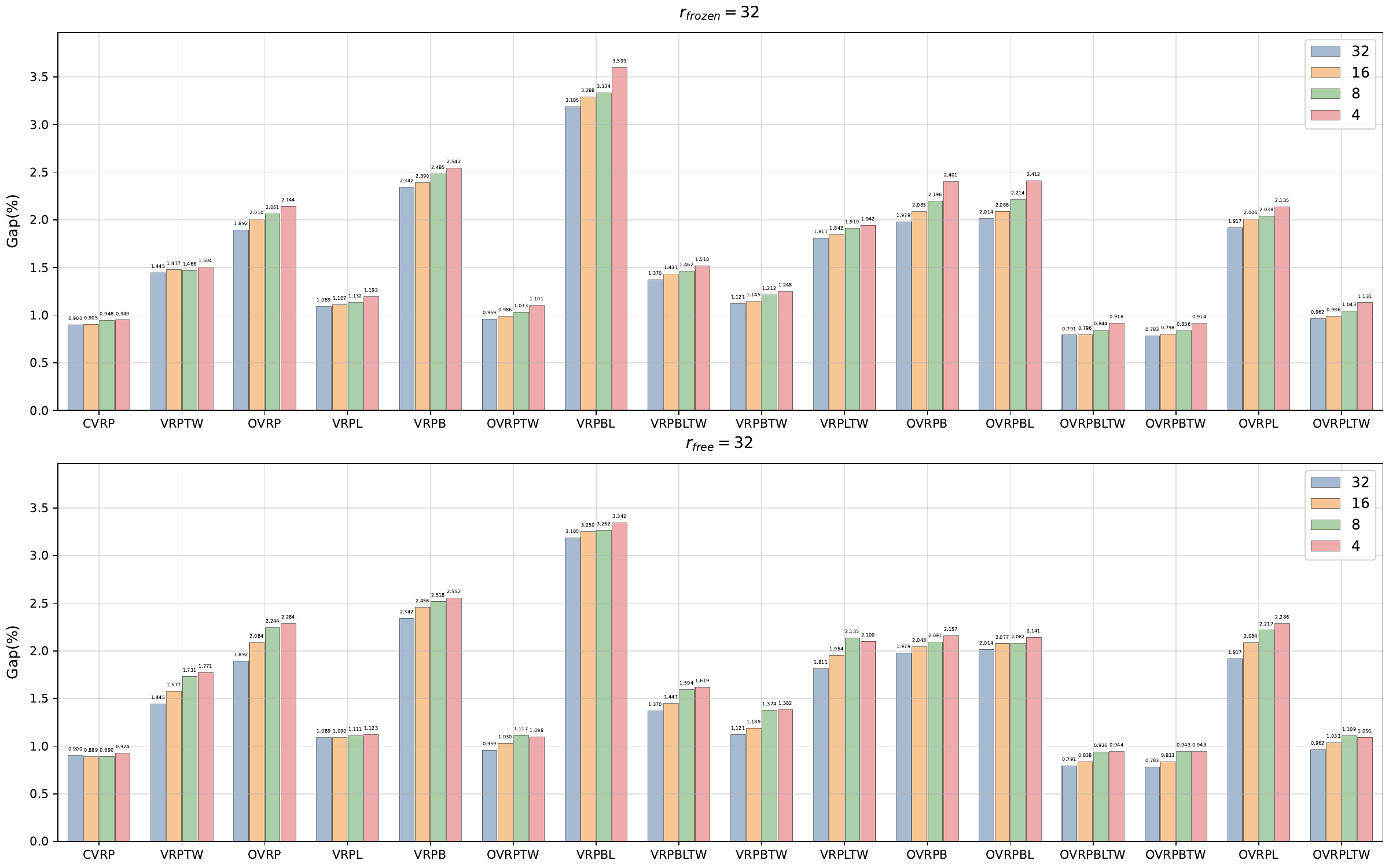}}
\subfigure[MoSES(RF), $N=100$]
{\label{appendix:fig:lora_rank_rf_100_taskwise}\includegraphics[width=0.9\textwidth]{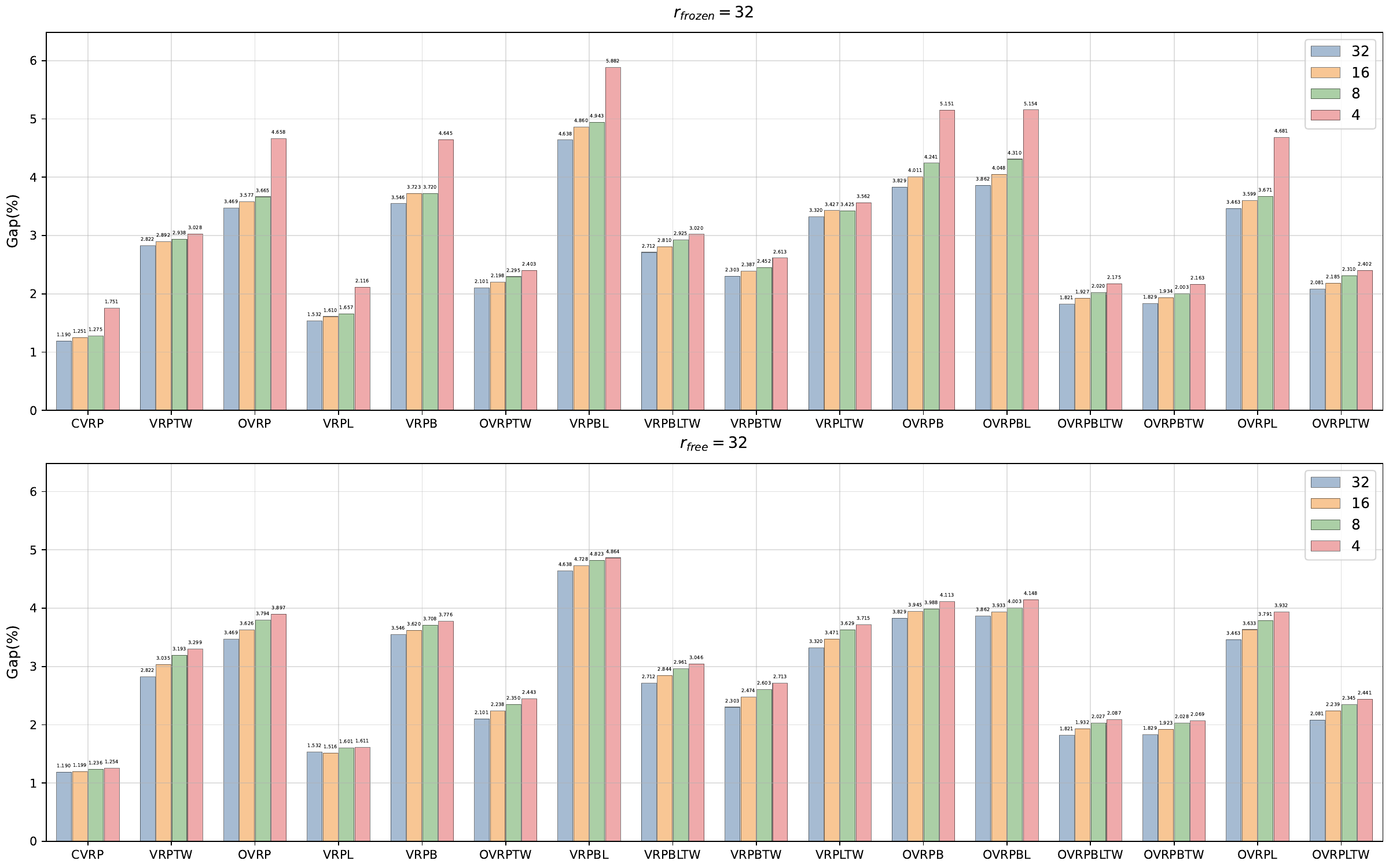}}
\caption{This figure illustrates the performance trends of MoSES(RF) across each VRP variant under both $N=50$ and $N=100$ settings, as either $r_{\mathrm{frozen}}$ or $r_{\mathrm{free}}$ is varied while keeping the other fixed.}
\label{appendix:fig:lora_rank_rf_taskwise}
\end{figure*}

\begin{figure*}[t]
\centering
\subfigure[MoSES(CaDA), $N=50$]
{\label{appendix:fig:lora_rank_cada_50_taskwise}\includegraphics[width=0.9\textwidth]{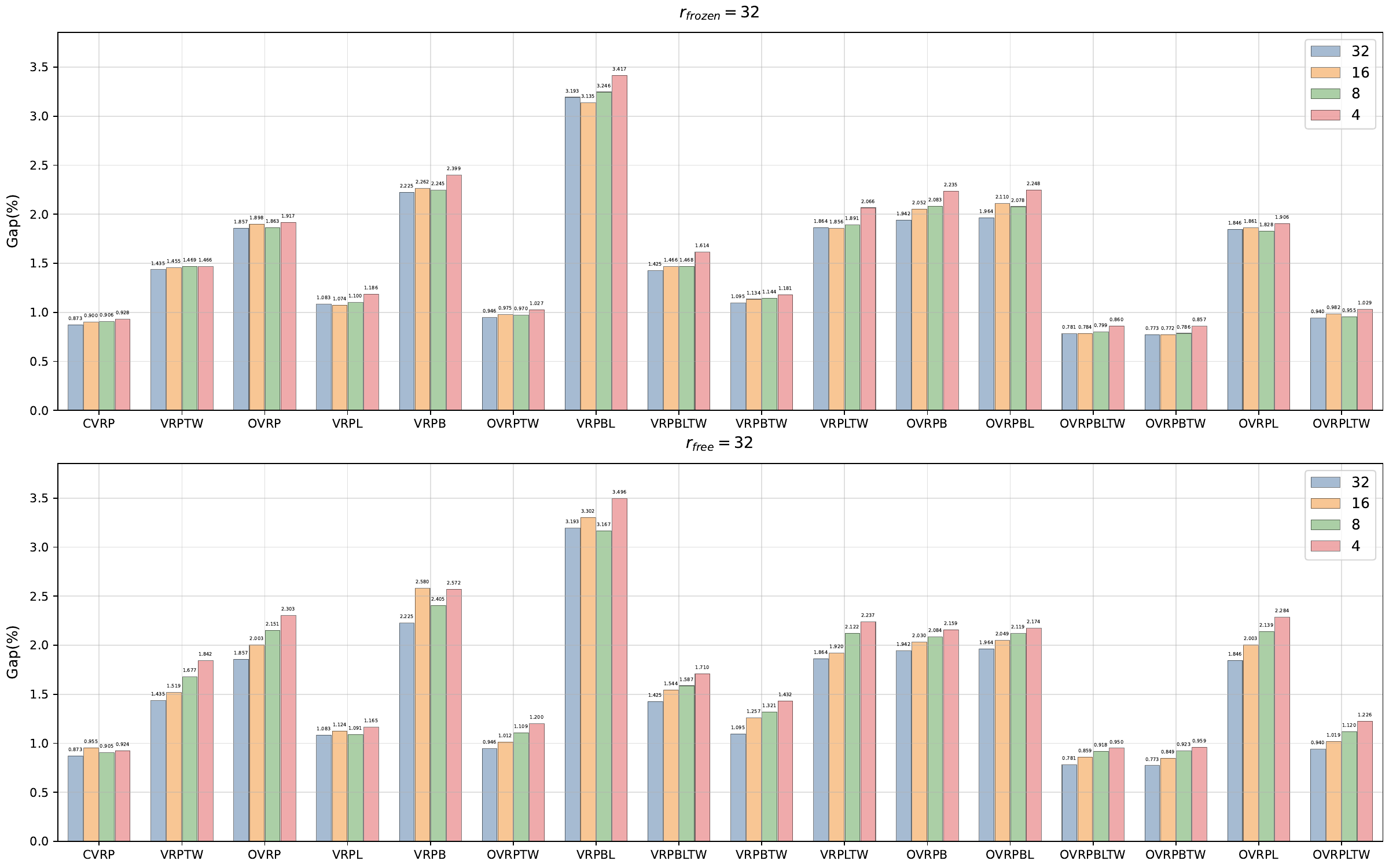}}
\subfigure[MoSES(CaDA), $N=100$]
{\label{appendix:fig:lora_rank_cada_100_taskwise}\includegraphics[width=0.9\textwidth]{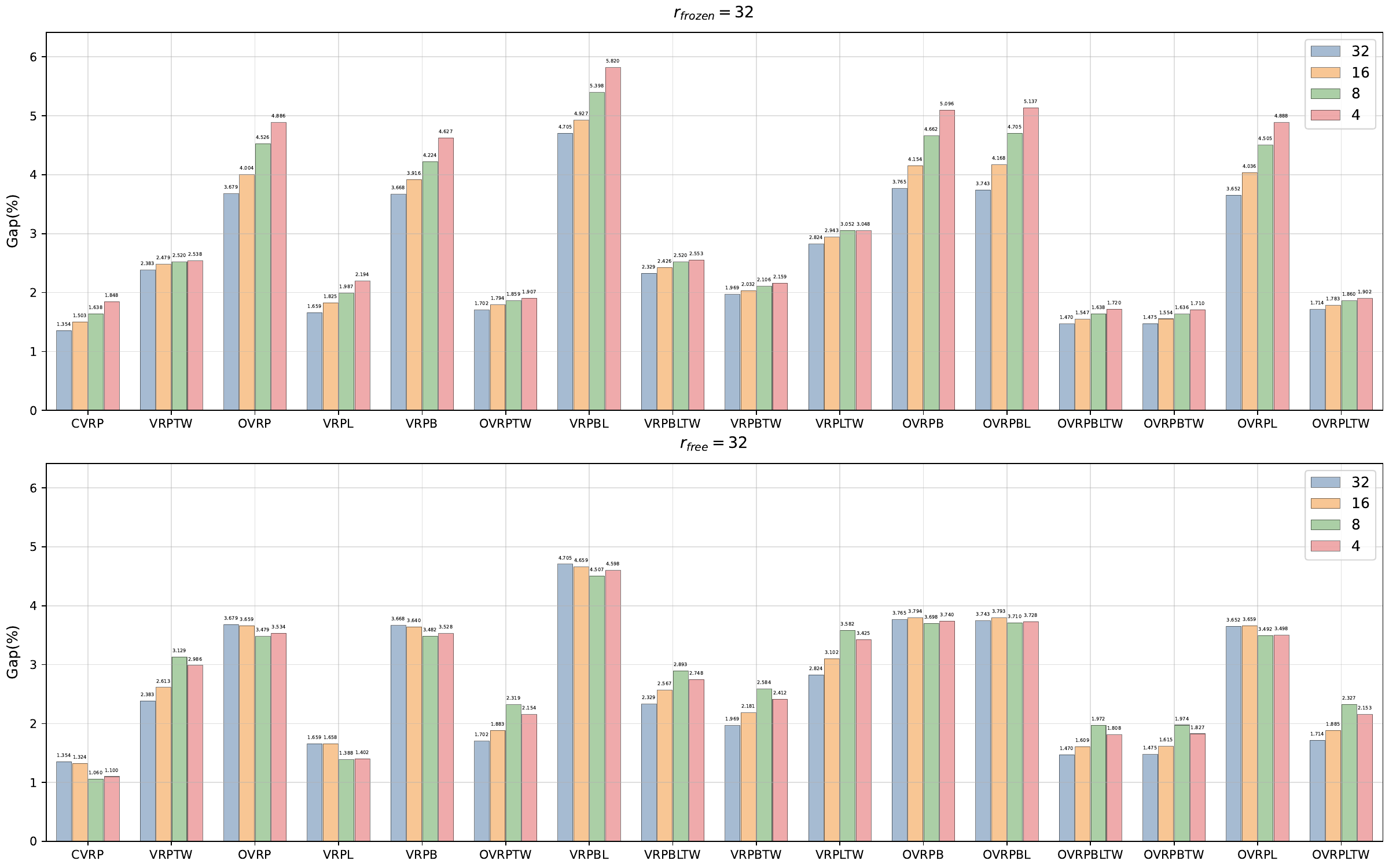}}
\caption{This figure illustrates the performance trends of MoSES(CaDA) across each VRP variant under both $N=50$ and $N=100$ settings, as either $r_{\mathrm{frozen}}$ or $r_{\mathrm{free}}$ is varied while the other is held constant.}
\label{appendix:fig:lora_rank_cada_taskwise}
\end{figure*}

\begin{figure*}[t]
\centering
\subfigure[$N=50$]
{\label{appendix:fig:act_50_taskwise}\includegraphics[width=0.9\textwidth]{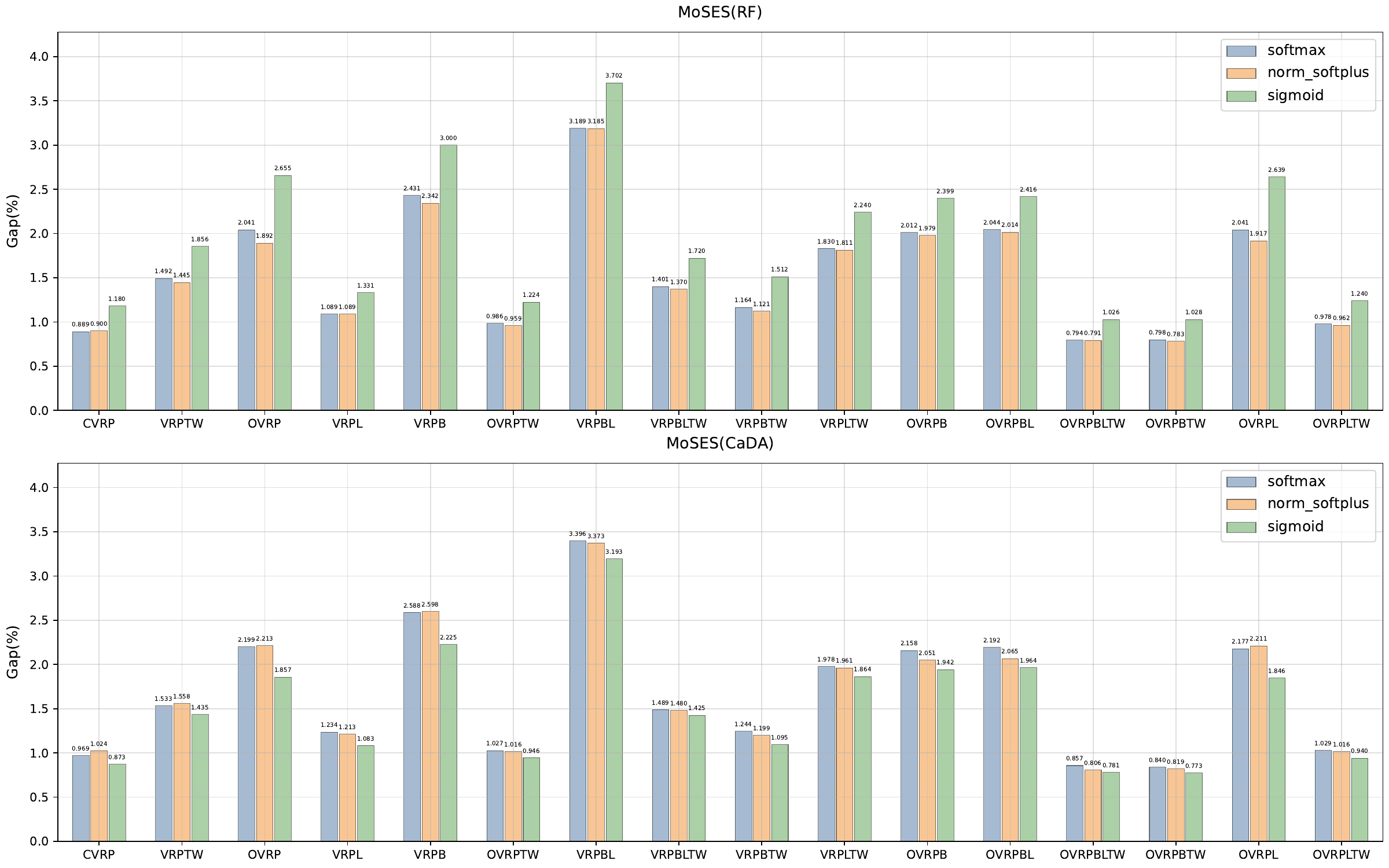}}
\subfigure[$N=100$]
{\label{appendix:fig:act_100_taskwise}\includegraphics[width=0.9\textwidth]{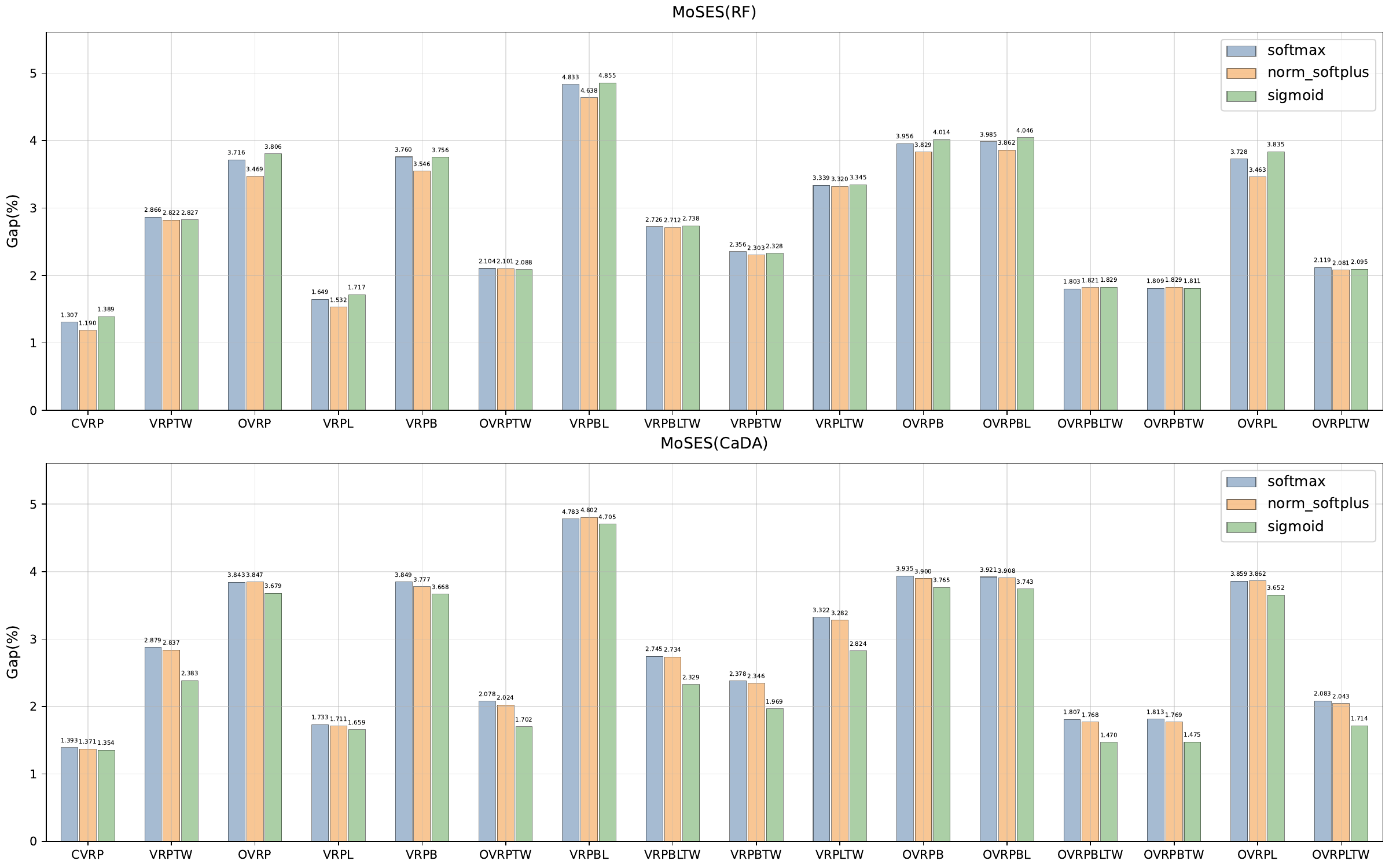}}
\caption{This figure illustrates the performance comparisons among different activation functions used in the adaptive gating mechanism of both MoSES(RF) and MoSES(CaDA) across various VRP variants under the $N=50$ and $N=100$ settings.}
\label{appendix:fig:act_func_taskwise}
\end{figure*}

\begin{figure*}[t]
\centering
\subfigure[$N=50$]
{\label{appendix:fig:route_50_taskwise}\includegraphics[width=0.9\textwidth]{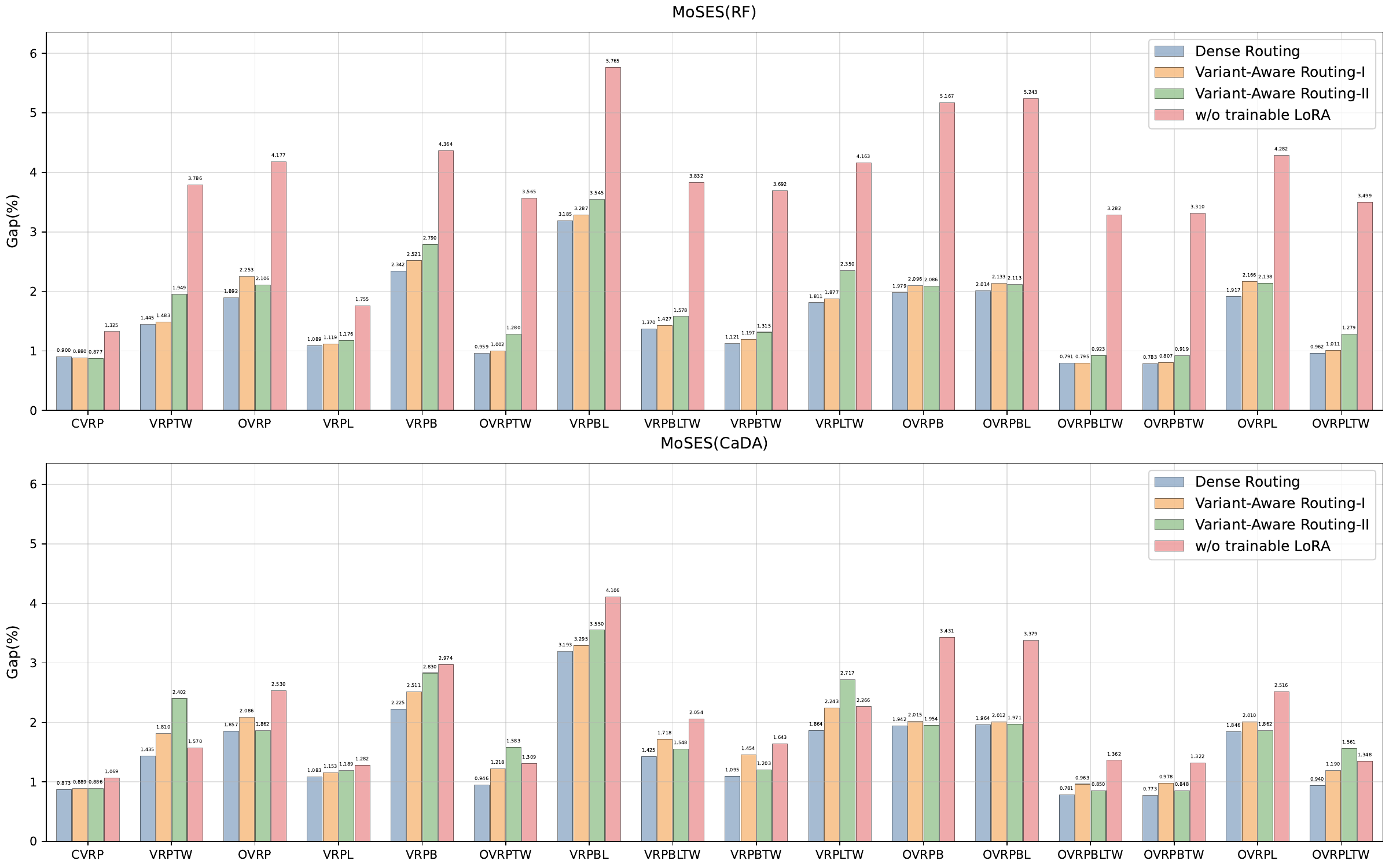}}
\subfigure[$N=100$]
{\label{appendix:fig:route_100_taskwise}\includegraphics[width=0.9\textwidth]{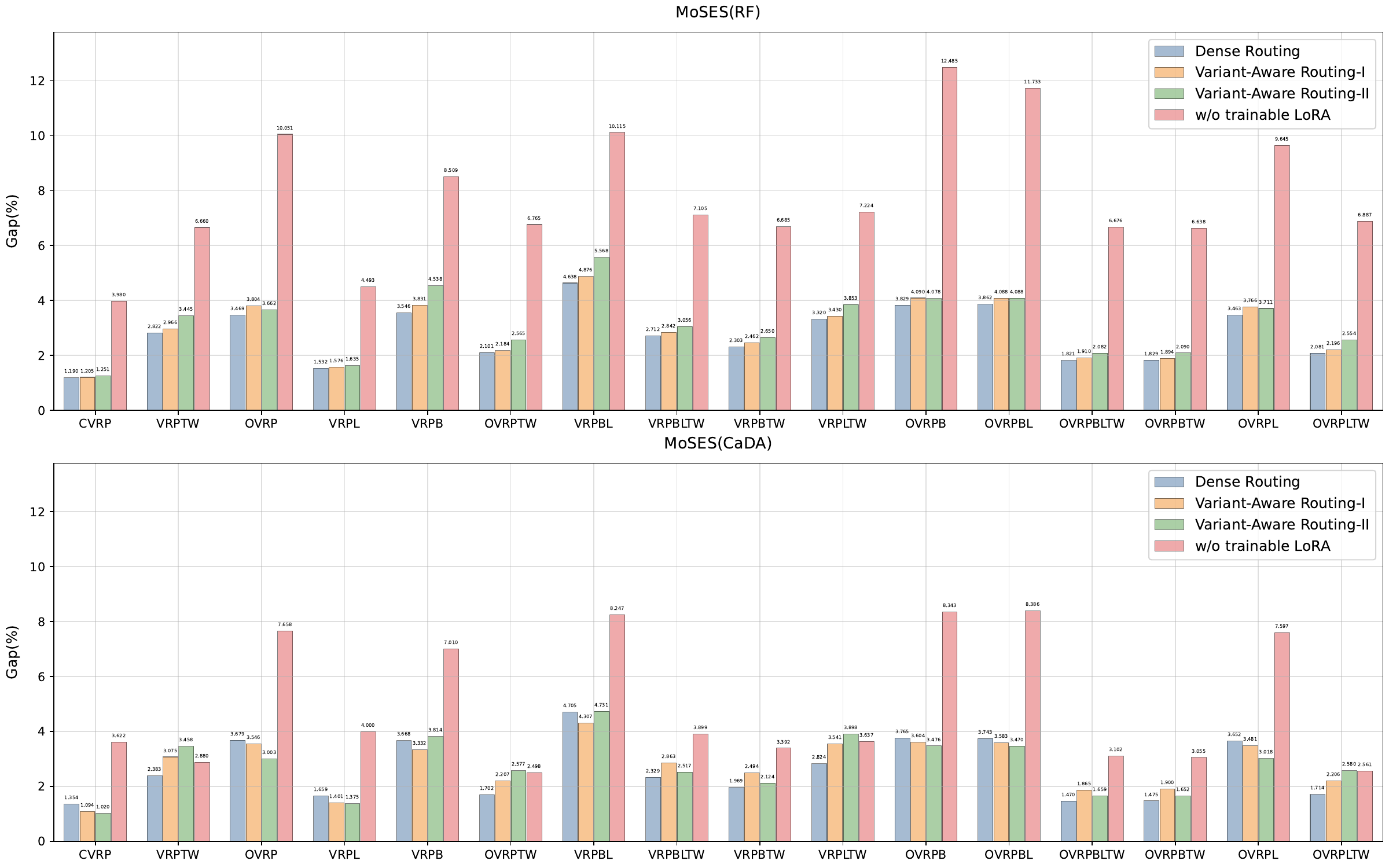}}
\caption{This figure illustrates the performance comparisons among different routing strategies used in the adaptive gating mechanism of both MoSES(RF) and MoSES(CaDA) across various VRP variants under the $N=50$ and $N=100$ settings.}
\label{appendix:fig:route_method_taskwise}
\end{figure*}

\begin{figure}[t]
    \centering
    \includegraphics[width=0.9\textwidth]{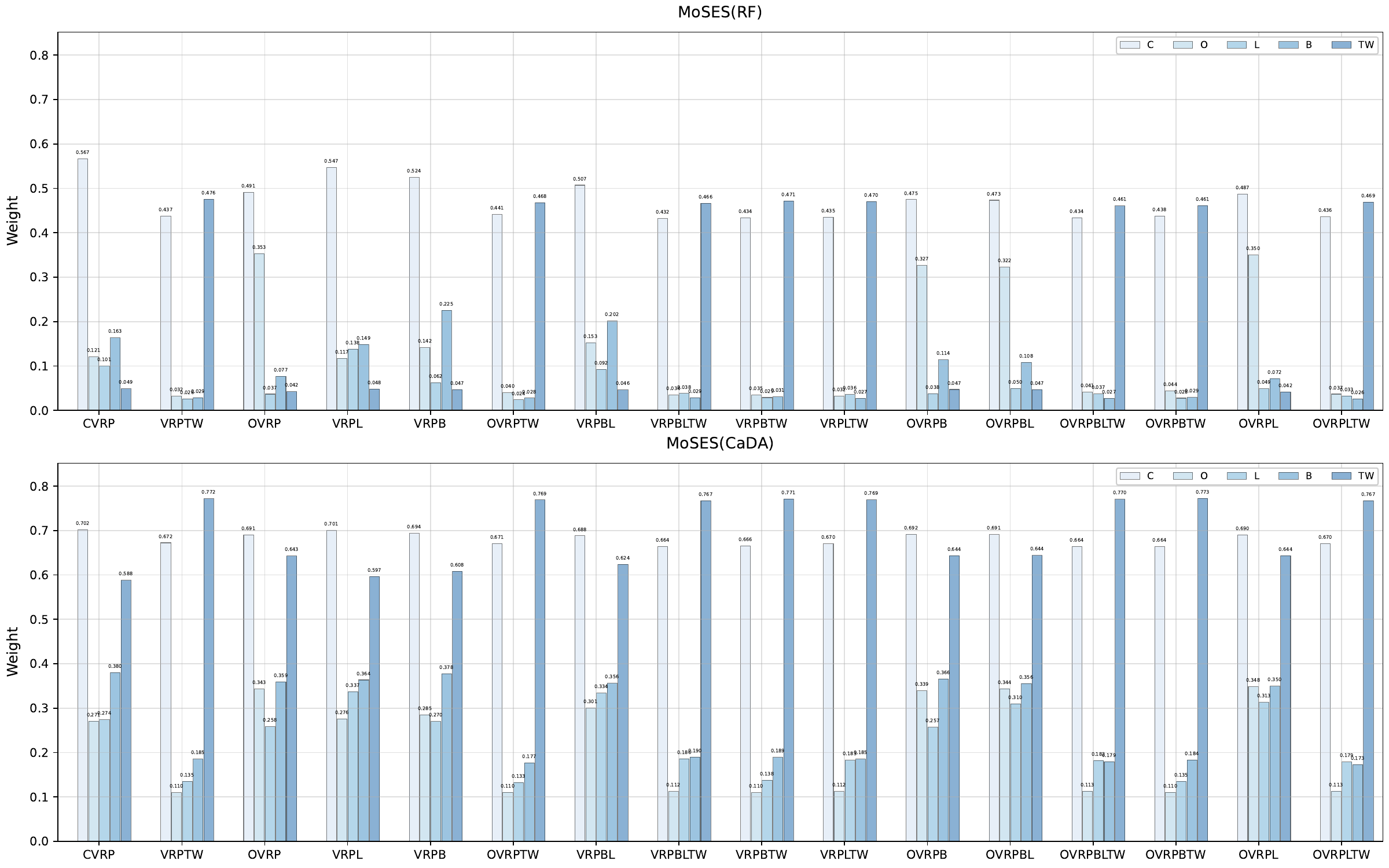}
    \caption{This figure illustrates the behavior of the adaptive gating mechanisms in both MoSES(RF) and MoSES(CaDA) under the setting $N=100$.}
    \label{appendix:fig:gates_visual}
\end{figure}

\clearpage

\section{Proofs}

\begin{theorem}
\label{appendix:thm:1}
The optimal unified policy $\pi^{\ast}$ and the $i$-th optimal basis policy $\pi^{(i)\ast}$ coincide in their value functions for each state $s_{t}$ associated with the $i$-th basis task: $V^{\pi^{\ast}, \mathcal{P}}(s_{t}) = V^{\pi^{(i)\ast},\mathcal{P}}(s_{t})$. Furthermore, if both the optimal unified policy and the optimal basis policy are unique, then for each state-action pair $(s_{t}, a_{t})$ corresponding to the $i$-th basis task, it holds that $\pi^{\ast}(a_{t}|s_{t}) = \pi^{(i)\ast}(a_{t}|s_{t})$. 
\end{theorem}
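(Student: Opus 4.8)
# Proof Proposal for Theorem~\ref{thm:1}

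The plan is to exploit the structural relationship between the $i$-th basis task and the full unified problem. The key observation is that a state $s_t$ "associated with the $i$-th basis task" means $s_t = \{s_t^{(b_0)}, s_t^{(b_i)}\}$ with $b_0 = 0$, i.e. it is composed of exactly the $0$-th basis state and the $i$-th basis state (the case $m=1$ with the sampled indices being $0$ and $i$). Because the transition probability factorizes as $\mathcal{P}(s_{t+1}|s_t,a_t) = \prod \mathcal{P}(s_{t+1}^{(b_j)}|s_t^{(b_j)},a_t)$ and the action space $\mathcal{A}(s_t)$ depends only on the basis states present, the entire sub-MDP reachable from such an $s_t$ — its states, its feasible actions, its transitions, and its rewards — coincides exactly with the MDP of the $i$-th basis task restricted to states reachable from $s_t$. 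I would first make this "restriction/embedding" precise: define the reachable set $\mathcal{R}(s_t)$ under any policy and show it lies entirely within states of the form $\{s^{(0)}, s^{(i)}\}$ (closure under the factorized transition), so that the dynamics never leave the $i$-th basis task's state space.

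Given that, the first claim follows from uniqueness of the optimal value function of a finite-horizon MDP. Concretely, I would argue by backward induction on $t$ from $T$ down to $0$: for $t=T$ both value functions are the terminal value (zero, or the terminal reward), and for the inductive step, the Bellman optimality operator applied to $V^{\pi^\ast,\mathcal{P}}$ on states of the $i$-th basis task involves only actions in $\mathcal{A}(s_t)$, rewards $\mathcal{R}(s_t,a_t)$, and successor states $s_{t+1}$ that again belong to the $i$-th basis task, where by induction the two value functions already agree. Since $V^{\pi^\ast,\mathcal{P}}$ restricted to the $i$-th basis task satisfies the Bellman optimality equation of the $i$-th basis task, and the optimal value function of a finite MDP is the unique fixed point of that equation, it must equal $V^{\pi^{(i)\ast},\mathcal{P}}$. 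The subtlety to handle carefully here is that $\pi^\ast$ is optimal for the \emph{initial distribution} $\mu$, not pointwise; I would invoke the standard fact that for finite-horizon MDPs an optimal policy for a full-support-on-reachable-states objective is also optimal from every reachable state (or, if $\mu$ does not have full support, restrict attention to states reachable with positive probability, which is exactly what "associated with the $i$-th basis task" should be taken to mean).

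For the second claim, assume the optimal policies are unique. Once the value functions agree on the $i$-th basis task, both $\pi^\ast$ and $\pi^{(i)\ast}$ must, on each such state-action region, be greedy with respect to the \emph{same} $Q$-function $Q(s_t,a_t) = \mathcal{R}(s_t,a_t) + \gamma \sum_{s_{t+1}} \mathcal{P}(s_{t+1}|s_t,a_t) V^\ast(s_{t+1})$. If the greedy action were non-unique one could have distinct optimal policies, contradicting the uniqueness hypothesis; hence the greedy action is unique at every such $(s_t, a_t)$, and both policies place all mass on it, giving $\pi^\ast(a_t|s_t) = \pi^{(i)\ast}(a_t|s_t)$.

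The main obstacle I anticipate is making the "restriction" rigorous rather than hand-wavy: one must verify (i) that the state space of states reachable from an $i$-th-basis-task state is genuinely closed under the unified transition $\mathcal{P}$ (this is where the conditional-independence factorization of $\mathcal{P}$ and the fact that $\mu$ over $\mathcal{S}_0$ is built by sampling whole basis state spaces are both essential), and (ii) that optimality of $\pi^\ast$ under $\mu$ transfers to pointwise optimality on the relevant reachable states — the usual argument is that a policy suboptimal at a reachable state can be strictly improved there without hurting other states, contradicting global optimality, but this needs the reachable state to have positive visitation probability, so the precise meaning of "state associated with the $i$-th basis task" has to be pinned down (presumably: reachable with positive probability in an episode where $\mathcal{S}_0 = \mathcal{S}_0^{(0)} \times \mathcal{S}_0^{(i)}$ was sampled). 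Everything else is routine finite-horizon dynamic programming.
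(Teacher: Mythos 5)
Your proposal is correct and rests on the same structural observation as the paper's proof: the states of the $i$-th basis task form a subset of the unified state space that is closed under the factorized dynamics, so the two optimal policies are compared on a common sub-MDP. The mechanics differ somewhat. The paper argues by two-sided domination — it asserts that $\pi^{\ast}$ is pointwise optimal over all of $\mathcal{S}_{t}$, that $\pi^{(i)\ast}$ is pointwise optimal over the basis-task states, notes the subset relation, and concludes equality in one step — whereas you establish the equality by backward induction, showing that $V^{\pi^{\ast},\mathcal{P}}$ restricted to the basis task satisfies that task's Bellman optimality equation and invoking uniqueness of its fixed point. Both are standard and equivalent for finite-horizon MDPs. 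Where your write-up adds genuine value is in flagging two points the paper passes over silently: (i) the closure of the reachable set under $\mathcal{P}$, which the paper uses implicitly when it treats the unified policy as a candidate basis policy and vice versa, and which is exactly where the factorization $\mathcal{P}(s_{t+1}|s_{t},a_{t})=\prod_{j}\mathcal{P}(s_{t+1}^{(b_{j})}|s_{t}^{(b_{j})},a_{t})$ is needed; and (ii) the gap between optimality in expectation over $\mu$ (how $\pi^{\ast}$ is defined in the paper) and the pointwise optimality $V^{\pi^{\ast},\mathcal{P}}(s_{t})\geq V^{\pi,\mathcal{P}}(s_{t})$ for every $s_{t}$ that the paper's first line simply asserts; your proposal to restrict attention to states reachable with positive probability is the right repair, and it also pins down what ``state associated with the $i$-th basis task'' must mean for the statement to hold. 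Your greedy-action argument for the second claim is likewise more substantive than the paper's one-sentence assertion, though note that under the stated uniqueness hypothesis the conclusion already follows from value-function equality plus the fact that any optimal basis policy extends to an optimal unified policy on the sub-MDP.
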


\begin{proof}
\label{appendix:proof:1}

By definition, the value function $V^{\pi^{\ast}, \mathcal{P}}$ induced by the optimal policy $\pi^{\ast}$ is greater than or equal to the value function $V^{\pi, \mathcal{P}}$ of any unified policy $\pi$ for all feasible states $s_{t}$ at each time step $t$. Formally, $\forall \pi, s_{t} \in \mathcal{S}_{t}, t\in\{0, \ldots, T\}$, it holds that $V^{\pi^{\ast}, \mathcal{P}}(s_{t}) \geq V^{\pi, \mathcal{P}}(s_{t})$. Likewise, the $i$-the optimal basis policy $\pi^{(i)\ast}$ $(i \geq 0)$ maximizes the value function $V^{\pi^{(i)},\mathcal{P}}$, evaluated on states induced by the initial state distribution defined over $\mathcal{S}_{0}^{(0)}$ for $i=0$, or over $\mathcal{S}_{0}^{(0)}\times\mathcal{S}_{0}^{(i)}$ for $i\geq1$, corresponding to the $i$-th basis task at each time step. That is, for all $\pi^{(0)}$, $s_{t}^{(0)} \in \mathcal{S}_{t}^{(0)}$, and $t\in\{0, \ldots, T\}$, we have $V^{\pi^{(0)\ast}, \mathcal{P}}(s_{t}^{(0)}) \geq V^{\pi^{(0)}, \mathcal{P}}(s_{t}^{(0)})$. Similarly, for all $\pi^{(i)}$, $(s_{t}^{(0)}, s_{t}^{(i)}) \in \mathcal{S}_{t}^{(0)}\times\mathcal{S}_{t}^{(i)}$, $t\in\{0, \ldots, T\}$, and $i\geq1$, it holds that $V^{\pi^{(i)\ast}, \mathcal{P}}((s_{t}^{(0)}, s_{t}^{(i)})) \geq V^{\pi^{(i)}, \mathcal{P}}((s_{t}^{(0)}, s_{t}^{(i)}))$.

According to the definitions of the SDMDP framework and the basis task, it is evident that, at each time step, the state space associated with a basis task is a subset of the state space defined for the unified policy. Specifically, at each time step $t$, for all $s_{t}^{(0)} \in \mathcal{S}_{t}^{(0)}$, it holds that $s_{t}^{(0)} \in \mathcal{S}_{t}$. Likewise, for all $(s_{t}^{(0)}, s_{t}^{(i)}) \in \mathcal{S}_{t}^{(0)} \times \mathcal{S}_{t}^{(i)}$, and $i\geq1$ we have $(s_{t}^{(0)}, s_{t}^{(i)}) \in \mathcal{S}_{t}$. As a result, at each time step $t$, for all $s_{t}^{(0)} \in \mathcal{S}_{t}^{(0)}$, it follows that $V^{\pi^{\ast}, \mathcal{P}}(s_{t}^{(0)}) = V^{\pi^{(0)\ast}, \mathcal{P}}(s_{t}^{(0)})$. Similarly, for all $(s_{t}^{(0)}, s_{t}^{(i)}) \in \mathcal{S}_{t}^{(0)}\times\mathcal{S}_{t}^{(i)}$, and $i\geq1$, we have $V^{\pi^{\ast}, \mathcal{P}}((s_{t}^{(0)}, s_{t}^{(i)})) = V^{\pi^{(i)\ast}, \mathcal{P}}((s_{t}^{(0)}, s_{t}^{(i)}))$. Moreover, if the $i$-the basis task ($i>0$) admits a unique optimal policy $\pi^{(i)\ast}$, and the optimal unified policy $\pi^{\ast}$ is also unique, then for each state $s_{t}$ associated with the $i$-th basis task, it holds that $\pi^{\ast}(a_{t}|s_{t}) = \pi^{(i)\ast}(a_{t}|s_{t})$, where $a_{t}\in \mathcal{A} (s_{t})$. 
\end{proof}

\begin{assumption}
\label{appendix:assup:1}
In the SDMDP framework, any state $s$ is composed of $m+1$ conditionally independent basis states, denoted as $s = \{s^{(b_{i})} \}_{i=0}^{m}$. 
Accordingly, we assume that any policy $\pi$ is capable of extracting the basis state embedding $z^{(b_{i})} \in \mathbb{R}^{d}$ for each $s^{(b_{i})}$, where $i=0, \ldots, m$. Under this assumption, we further posit that there exists a deterministic bijective mixture function $f_{\phi}:\mathcal{S}\times\prod_{i=0}^{m}\mathbb{R}^{d}\rightarrow\mathbb{R}^{d}$, parameterized by $\phi$, which maps the basis state embeddings 
$\{z^{(b_{i})} \}_{i=0}^{m}$ 
to the state embedding $z \in \mathbb{R}^{d}$ for the given state $s$, represented as 
$z=f_{\phi}(z^{(b_{0})}, \ldots, z^{(b_{m})};s)$. 
Thus, the policy defined over the action space can be rewritten as
\begin{equation}
\small
\label{appendix:equ:pi_decomp}
    \pi(a|s) = \sum_{z}\pi(a|z)\pi(z|s)=\sum_{z^{(b_{0})}, \dots, z^{(b_{m})}} \pi(a|f_{\phi}(z^{(b_{0})}, \dots, z^{(b_{m})}; s))\prod_{i=0}^{m}\pi(z^{(b_{i})}|s^{(b_{i})})
\end{equation}
\end{assumption}
where $\prod_{i=0}^{m}\pi(z^{(b_{i})}|s^{(b_{i})})=\pi(z^{(b_{0})}, \dots, z^{(b_{m})}|s)$. The second equivalence in Equation~\ref{appendix:equ:pi_decomp} holds because $f_{\phi}$ is assumed to be deterministic.

\begin{assumption}
\label{appendix:assup:2}
For any state $s$, and for any two policies $\pi$ and $\pi^{\prime}$, we assume that if $\forall a\in \mathcal{A}(s), \pi(a|s) = \pi^{\prime}(a|s)$, then $\forall z\in\mathbb{R}^{d}, \pi(z|s) = \pi^{\prime}(z|s)$, and conversely.
\end{assumption}

\begin{theorem}
\label{appendix:thm:2}
Let $J(\pi, \mathcal{P}, \mu)$ and $J(\pi_{f_{\phi}}, \mathcal{P}_{z}, \mu_{z})$ denote the objective functions (expected returns) in SDMDP and LS-SDMDP, respectively. By Theorem~\ref{appendix:thm:1} and Assumptions~\ref{appendix:assup:1}~\ref{appendix:assup:2}, it follows that the values of the objective functions are equal at their respective optimal policies, formally written as $J(\pi^{\ast}, \mathcal{P}, \mu) = J(\pi_{f_{\phi}}^{\ast}, \mathcal{P}_{z}, \mu_{z})$. Moreover, the value functions of SDMDP and LS-SDMDP at their respective optimal policies satisfy the following relationship $V^{\pi^{\ast}, \mathcal{P}}(s) = \mathbb{E}_{z^{(b_{0})} \sim \pi^{(b_{0})\ast}}\cdots \mathbb{E}_{z^{(b_{m})} \sim \pi^{(b_{m})\ast}}V^{\pi_{f_{\phi}}^{\ast}, \mathcal{P}_{z}}(z^{(b_{0})},\ldots,z^{(b_{m})};s)$.
\end{theorem}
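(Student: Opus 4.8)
\textbf{Proof proposal for Theorem~\ref{appendix:thm:2}.}

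The plan is to establish a tight coupling between a trajectory in the SDMDP and the corresponding ``lifted'' trajectory in the LS-SDMDP by exploiting the deterministic bijectivity of $f_{\phi}$, and then to transfer optimality back and forth between the two problems using Theorem~\ref{appendix:thm:1} together with Assumptions~\ref{appendix:assup:1}--\ref{appendix:assup:2}. First I would define, for any policy $\pi$ in the SDMDP that satisfies Assumption~\ref{appendix:assup:1}, an induced policy $\pi_{f_{\phi}}$ in the LS-SDMDP whose action distribution at the embedding $z_{t}=f_{\phi}(z_{t}^{(b_{0})},\dots,z_{t}^{(b_{m})};s_{t})$ is exactly $\pi(\cdot\mid z_{t})$, where the $z_{t}^{(b_{i})}$ are drawn from the optimal basis policies $\pi^{(b_{i})\ast}$ (consistent with $\mu_{z}$ and $\mathcal{P}_{z}$ in Equation~\ref{equ:dist_decomp}). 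The key observation is that, because $f_{\phi}$ is deterministic and bijective, the pushforward of $\prod_{i}\pi^{(b_{i})\ast}(z_{t}^{(b_{i})}\mid s_{t}^{(b_{i})})$ through $f_{\phi}(\cdot\,;s_{t})$ is a well-defined distribution over $z_{t}$, and marginalizing the LS-SDMDP action distribution over this pushforward recovers precisely the right-hand side of Equation~\ref{equ:pi_decomp}. Hence for the specific choice $\pi=\pi^{\ast}$, the induced policy $\pi_{f_{\phi}}$ produces, after marginalization, exactly $\pi^{\ast}(a\mid s)$ at every state $s$; by Assumption~\ref{appendix:assup:2} this equivalence at the action level also pins down the equivalence at the embedding level, so no information is lost in the lift.

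Next I would argue the two optimization problems have equal optimal value. One direction is immediate: the lifted policy $\pi^{\ast}_{f_{\phi}}$ built from $\pi^{\ast}$ is a feasible policy in the LS-SDMDP, and since rewards depend only on $(s_{t},a_{t})$ (the reward function is ``unchanged'') and the state-transition marginal $\mathcal{P}(s_{t+1}\mid s_{t},a_{t})$ is identical in both formulations, the expected return of $\pi^{\ast}_{f_{\phi}}$ in the LS-SDMDP equals $J(\pi^{\ast},\mathcal{P},\mu)$; therefore $J(\pi^{\ast}_{f_{\phi}},\mathcal{P}_{z},\mu_{z})\ge J(\pi^{\ast},\mathcal{P},\mu)$, which gives $J(\pi_{f_{\phi}}^{\ast},\mathcal{P}_{z},\mu_{z})\ge J(\pi^{\ast},\mathcal{P},\mu)$. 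For the reverse direction I would take any LS-SDMDP policy $\pi_{z}$ and ``project'' it down to an SDMDP policy $\pi$ via $\pi(a\mid s):=\sum_{z}\pi_{z}(a\mid z)\,\Pr[z\mid s]$, where $\Pr[z\mid s]$ is the pushforward of the optimal-basis-policy product distribution through $f_{\phi}$. Because the reward and the $s$-transition kernel are unchanged and the $z$-dynamics in $\mathcal{P}_{z}$ are entirely slaved to the $s$-dynamics plus the fixed basis policies, the expected return is preserved under this projection, so $J(\pi,\mathcal{P},\mu)=J(\pi_{z},\mathcal{P}_{z},\mu_{z})$; taking $\pi_{z}=\pi^{\ast}_{f_{\phi}}$ yields $J(\pi^{\ast},\mathcal{P},\mu)\ge J(\pi_{f_{\phi}}^{\ast},\mathcal{P}_{z},\mu_{z})$. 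Combining the two inequalities gives the claimed equality of objective values.

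For the value-function identity I would run the same coupling pointwise rather than in expectation over $\mu$. Fixing a state $s$ and conditioning on a draw $z^{(b_{i})}\sim\pi^{(b_{i})\ast}(\cdot\mid s^{(b_{i})})$ for each $i$, the LS-SDMDP trajectory started from $z=f_{\phi}(z^{(b_{0})},\dots,z^{(b_{m})};s)$ under $\pi^{\ast}_{f_{\phi}}$ has, at every subsequent step, the same induced distribution over $(s_{t},a_{t})$ as the SDMDP trajectory started from $s$ under $\pi^{\ast}$ — this is where the role of $f_{\phi}$ being a bijection is essential, since it guarantees the embedding carries no more and no less than the underlying state plus the basis embeddings, so the $\mathcal{P}_{z}$-transition never introduces extra randomness beyond what $\mathcal{P}$ and the $\pi^{(b_{i})\ast}$ already determine. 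Therefore $V^{\pi^{\ast}_{f_{\phi}},\mathcal{P}_{z}}(z^{(b_{0})},\dots,z^{(b_{m})};s)$ equals $V^{\pi^{\ast},\mathcal{P}}(s)$ for \emph{every} realization of the $z^{(b_{i})}$, and in particular its expectation over $z^{(b_{0})}\sim\pi^{(b_{0})\ast},\dots,z^{(b_{m})}\sim\pi^{(b_{m})\ast}$ equals $V^{\pi^{\ast},\mathcal{P}}(s)$, which is the stated relation; one then invokes the objective-value equality established above (plus uniqueness of the optimizers, as in Theorem~\ref{appendix:thm:1}) to identify $\pi^{\ast}_{f_{\phi}}$ with $\pi^{\ast}_{f_{\phi}}$ the actual optimal LS-SDMDP policy. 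The main obstacle, and the step I would be most careful about, is the reverse inequality / projection argument: one must verify that projecting an arbitrary LS-SDMDP policy down through $f_{\phi}^{-1}$ produces a valid SDMDP policy that is \emph{consistent} with Assumption~\ref{appendix:assup:1} (i.e.\ that the projected $\pi$ really does admit the decomposition in Equation~\ref{equ:pi_decomp} with the \emph{same} basis-embedding conditionals $\pi^{(b_{i})\ast}$), and that Assumption~\ref{appendix:assup:2} is genuinely strong enough to rule out two LS-SDMDP policies that agree on all actions but disagree on embeddings leading to different continuations — handling this cleanly requires stating precisely in what sense the $z$-marginal is pinned down by the fixed basis policies, which is the delicate part of the bookkeeping.
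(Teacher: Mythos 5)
Your overall architecture differs from the paper's: you prove the objective equality by a two-sided sandwich (lift $\pi^{\ast}$ into LS-SDMDP, project an arbitrary LS-SDMDP policy back down, and use that the $z$-dynamics in $\mathcal{P}_{z}$ are resampled at each step from a distribution depending only on $s_{t}$), whereas the paper proceeds by directly expanding $J(\pi^{\ast},\mathcal{P},\mu)$ via Equation~\ref{appendix:equ:pi_decomp}, substituting the basis-policy conditionals, and reading off the LS-SDMDP objective; it then gets the value-function identity by backward induction on the Bellman equation. Your sandwich argument for the objective part is sound in outline and arguably cleaner, but note that both your lifting step and the paper's substitution hinge on the same identification: Equation~\ref{appendix:equ:pi_decomp} involves the \emph{unified} policy's own embedding conditionals $\pi^{\ast}(z^{(b_{i})}\mid s^{(b_{i})})$, while $\mu_{z}$ and $\mathcal{P}_{z}$ are built from the \emph{basis} policies' conditionals $\pi^{(b_{i})\ast}(z^{(b_{i})}\mid s^{(b_{i})})$. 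The paper bridges this via Theorem~\ref{appendix:thm:1}, Assumption~\ref{appendix:assup:2}, and an additional assumption introduced inside the proof ($\pi^{\ast}(z_{t}^{(b_{i})}\mid s_{t}^{(b_{i})})=\pi^{(b_{i})\ast}(z_{t}^{(b_{i})}\mid s_{t}^{(b_{i})})$); you correctly flag this as the delicate point but leave it unresolved, and without it your claim that marginalizing the lifted policy ``recovers precisely the right-hand side of Equation~\ref{equ:pi_decomp}'' does not go through.

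The concrete gap is in your value-function argument. You claim $V^{\pi^{\ast}_{f_{\phi}},\mathcal{P}_{z}}(z^{(b_{0})},\dots,z^{(b_{m})};s)=V^{\pi^{\ast},\mathcal{P}}(s)$ for \emph{every} realization of the $z^{(b_{i})}$. This is too strong and is not implied by the assumptions: the LS-SDMDP policy's first action is drawn from $\pi^{\ast}_{f_{\phi}}(\cdot\mid z_{0})$ with $z_{0}=f_{\phi}(z_{0}^{(b_{0})},\dots,z_{0}^{(b_{m})};s)$, so the conditional value at a fixed embedding realization generally depends on that realization through the action distribution; only the SDMDP policy $\pi^{\ast}(a\mid s)$, which is the marginal over $z_{0}$, is realization-independent. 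The induced distribution over $(s_{t},a_{t})$ matches only \emph{after} averaging over the basis-policy draw at each step, which is exactly why the theorem states an equality in expectation and why the paper proves it by backward induction rather than by an almost-sure trajectory coupling. Your final conclusion is the correct statement, but the pointwise route you use to reach it would fail whenever $\pi^{\ast}_{f_{\phi}}(\cdot\mid z)$ genuinely varies with $z$ on the support of the basis policies; you need to carry the expectation over $z_{t}^{(b_{0})},\dots,z_{t}^{(b_{m})}$ through each step of the Bellman recursion instead.
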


\begin{proof}
\label{appendix:proof:2}

By definition, for any given policy $\pi$, the objective function $J(\pi, \mathcal{P}, \mu)$ within the SDMDP framework can be reformulated as shown in Equation~\ref{appendix:equ:sdmdp_obj}. In this reformulation, Equation \textcircled{1} is derived by incorporating Equation~\ref{appendix:equ:pi_decomp}. Subsequently, Equation \textcircled{1} is rearranged to yield Equation \textcircled{2}.

\begin{equation}
\label{appendix:equ:sdmdp_obj}
\begin{split}
J(\pi, \mathcal{P}, \mu) &= \mathbb{E}_{s\sim\mu} \mathbb{E}_{\tau \sim (\pi, \mathcal{P})}[\sum_{t=0}^{T-1} \gamma^{t} r_{t}|s_{0}=s] \\
&= \sum_{s_{0}, a_{0}, \ldots, s_{T}} \mu(s_{0})\prod_{t=0}^{T-1}\pi(a_{t}|s_{t})\mathcal{P}(s_{t+1}|s_{t},a_{t}) \sum_{t=0}^{T-1}\gamma^{t}r_{t} \\
& \stackrel{\text{\textcircled{1}}}{=} \sum_{s_{0}, a_{0}, \ldots, s_{T}} \mu(s_{0}) \prod_{t=0}^{T-1}\sum_{z_{t}^{(b_{0})},\ldots,z_{t}^{(b_{m})}}\pi(a_{t}|f_{\phi}(z_{t}^{(b_{0})},\ldots,z_{t}^{(b_{m})};s_{t})) \\
&\quad\quad\pi(z_{t}^{(b_{0})},\ldots,z_{t}^{(b_{m})}|s_{t})\mathcal{P}(s_{t+1}|s_{t},a_{t}) \sum_{t=0}^{T-1}\gamma^{t}r_{t} \\
& \stackrel{\text{\textcircled{2}}}{=} \sum_{s_{0}, a_{0}, \ldots, s_{T}} \sum_{z_{0}^{(b_{0})},\ldots,z_{0}^{(b_{m})}}\cdots\sum_{z_{T}^{(b_{0})},\ldots,z_{T}^{(b_{m})}} \mu(s_{0})\prod_{t=0}^{T-1}\pi(a_{t}|f_{\phi}(z_{t}^{(b_{0})},\ldots,z_{t}^{(b_{m})};s_{t})) \\
&\quad\quad\pi(z_{t}^{(b_{0})},\ldots,z_{t}^{(b_{m})}|s_{t})\mathcal{P}(s_{t+1}|s_{t},a_{t}) \sum_{t=0}^{T-1}\gamma^{t}r_{t}
\end{split}
\end{equation}

\begin{equation}
\label{appendix:equ:sdmdp_opt_obj}
\begin{split}
J(\pi^{\ast}, \mathcal{P}, \mu) &\stackrel{\text{\textcircled{1}}}{=} \sum_{s_{0}, a_{0}, \ldots, s_{T}} \sum_{z_{0}^{(b_{0})},\ldots,z_{0}^{(b_{m})}}\cdots\sum_{z_{T}^{(b_{0})},\ldots,z_{T}^{(b_{m})}} \mu(s_{0}) \pi^{\ast}(z_{0}^{(b_{0})},\ldots,z_{0}^{(b_{m})}|s_{0}) \\
&\quad\quad\prod_{t=0}^{T-1}\pi^{\ast}(a_{t}|f_{\phi}(z_{t}^{(b_{0})},\ldots,z_{t}^{(b_{m})};s_{t})) \pi^{\ast}(z_{t+1}^{(b_{0})},\ldots,z_{t+1}^{(b_{m})}|s_{t+1})\mathcal{P}(s_{t+1}|s_{t}, a_{t}) \sum_{t=0}^{T-1}\gamma^{t}r_{t} \\
&\stackrel{\text{\textcircled{2}}}{=} \sum_{s_{0}, a_{0}, \ldots, s_{T}} \sum_{z_{0}^{(b_{0})},\ldots,z_{0}^{(b_{m})}}\cdots\sum_{z_{T}^{(b_{0})},\ldots,z_{T}^{(b_{m})}} \mu(s_{0}) \prod_{i=0}^{m}\pi^{\ast}(z_{0}^{(b_{i})}|s_{0}^{(b_{i})}) \\
&\quad\quad\prod_{t=0}^{T-1}\pi^{\ast}(a_{t}|f_{\phi}(z_{t}^{(b_{0})},\ldots,z_{t}^{(b_{m})};s_{t})) \prod_{i=0}^{m} \pi^{\ast}(z_{t+1}^{(b_{i})}|s_{t+1}^{(b_{i})}) \mathcal{P}(s_{t+1}|s_{t}, a_{t}) \sum_{t=0}^{T-1}\gamma^{t}r_{t} \\
&\stackrel{\text{\textcircled{3}}}{=} \sum_{s_{0}, a_{0}, \ldots, s_{T}} \sum_{z_{0}^{(b_{0})},\ldots,z_{0}^{(b_{m})}}\cdots\sum_{z_{T}^{(b_{0})},\ldots,z_{T}^{(b_{m})}} \mu(s_{0}) \prod_{i=0}^{m}\pi^{(b_{i})\ast}(z_{0}^{(b_{i})}|s_{0}^{(b_{i})}) \\
&\quad\quad\prod_{t=0}^{T-1}\pi^{\ast}(a_{t}|f_{\phi}(z_{t}^{(b_{0})},\ldots,z_{t}^{(b_{m})};s_{t})) \prod_{i=0}^{m} \pi^{(b_{i})\ast}(z_{t+1}^{(b_{i})}|s_{t+1}^{(b_{i})}) \mathcal{P}(s_{t+1}|s_{t}, a_{t}) \sum_{t=0}^{T-1}\gamma^{t}r_{t} \\
&\stackrel{\text{\textcircled{4}}}{=} \sum_{s_{0}, a_{0}, \ldots, s_{T}} \sum_{z_{0}^{(b_{0})},\ldots,z_{0}^{(b_{m})}}\cdots\sum_{z_{T}^{(b_{0})},\ldots,z_{T}^{(b_{m})}} \mu_{z}(z_{0}) \\
&\quad\quad\prod_{t=0}^{T-1}\pi^{\ast}(a_{t}|f_{\phi}(z_{t}^{(b_{0})},\ldots,z_{t}^{(b_{m})};s_{t})) \mathcal{P}_{z}(z_{t+1}|s_{t}, a_{t}) \sum_{t=0}^{T-1}\gamma^{t}r_{t}
\end{split}
\end{equation}

As shown in Equation~\ref{appendix:equ:sdmdp_opt_obj}, the optimal policy $\pi^{\ast}$ is substituted into the expression. Equation \textcircled{1} holds because the action taken at the terminal state $s_{T}$, denoted as $\pi(z_{T}^{(b_{0})},\ldots,z_{T}^{(b_{m})}|s_{T})$, does not influence the value of the objective function $J(\pi, \mathcal{P}, \mu)$. Equation \textcircled{2} is derived based on the conditional independence of the basis states $(s_{t}^{(b_{0})}, \ldots, s_{t}^{(b_{m})})$. Based on the conclusion of Theorem~\ref{appendix:thm:1}, we derive the following results: 1) In the $b_{0}$-th basis task, for all $s_{t}^{(b_{0})} \in \mathcal{S}_{t}^{(b_{0})}$ and $a_{t} \in \mathcal{A}(s_{t}^{(b_{0})})$, where $0 \leq t \leq T$, it holds that $\pi^{\ast}(a_{t}|s_{t}^{(b_{0})}) = \pi^{(b_{0})\ast}(a_{t}|s_{t}^{(b_{0})})$. Please note that $b_{0}$ is fixed to $0$ within the SDMDP framework. 2) Similarly, in the $b_{i}$-th basis task ($i \geq 1$), for all $(s_{t}^{(b_{0})}, s_{t}^{(b_{i})})\in \mathcal{S}_{t}^{(b_{0})} \times \mathcal{S}_{t}^{(b_{i})}$ and $a_{t} \in \mathcal{A}((s_{t}^{(b_{0})}, s_{t}^{(b_{i})}))$, where $0 \leq t \leq T$, it holds that $\pi^{\ast}(a_{t}|(s_{t}^{(b_{0})}, s_{t}^{(b_{i})})) = \pi^{(b_{i})\ast}(a_{t}|(s_{t}^{(b_{0})}, s_{t}^{(b_{i})}))$. Furthermore, in accordance with Assumption~\ref{appendix:assup:2}, we have $\pi^{\ast}(z_{t}|s_{t}^{(b_{0})}) = \pi^{(b_{0})\ast}(z_{t}|s_{t}^{(b_{0})})$ and $\pi^{\ast}(z_{t}|(s_{t}^{(b_{0})}, s_{t}^{(b_{i})})) = \pi^{(b_{i})\ast}(z_{t}|(s_{t}^{(b_{0})}, s_{t}^{(b_{i})}))$ for all $i\geq1$. According to Assumption~\ref{appendix:assup:1}, for each policy, including the basis policy, there exists a mixture function $f_{\phi}$ that provides a bijective and deterministic mapping from the basis state embeddings to the corresponding state embedding. Thus, the state embedding $z_{t}$ is given by $z_{t} = f_{\phi}(z_{t}^{(b_{0})}, z_{t}^{(b_{i})}; (s_{t}^{(b_{0})}, s_{t}^{(b_{i})}))$ under both $\pi^{\ast}$ and $\pi^{(b_{i})\ast}$. Consequently, we conclude that $\pi^{\ast}(z_{t}^{(b_{0})}|s_{t}^{(b_{0})})\pi^{\ast}(z_{t}^{(b_{i})}|s_{t}^{(b_{i})}) = \pi^{(b_{i})\ast}(z_{t}^{(b_{0})}|s_{t}^{(b_{0})})\pi^{(b_{i})\ast}(z_{t}^{(b_{i})}|s_{t}^{(b_{i})})$. Given the preceding results, we additionally assume that $\pi^{\ast}(z_{t}^{(b_{i})}|s_{t}^{(b_{i})})=\pi^{(b_{i})\ast}(z_{t}^{(b_{i})}|s_{t}^{(b_{i})})$, which directly supports the Equation \textcircled{3}. Equation \textcircled{4} is obtained by incorporating the definitions of the initial state distribution and the transition probability function within the LS-SDMDP framework.

It is evident that the policy $\pi^{\ast}(a_{t}|f_{\phi}(z_{t}^{(b_{0})},\ldots,z_{t}^{(b_{m})};s_{t}))$ is the optimal policy for the objective function $J(\pi_{f_{\phi}}, \mathcal{P}_{z}, \mu_{z})$. This is because, if this were not the case, $\pi^{\ast}$ would not qualify as the optimal policy for the objective function $J(\pi, \mathcal{P}, \mu)$. Thus, we conclude that $\pi^{\ast}(a_{t}|f_{\phi}(z_{t}^{(b_{0})},\ldots,z_{t}^{(b_{m})};s_{t})) = \pi_{f_{\phi}}^{\ast}(a_{t}|z_{t}^{(b_{0})},\ldots,z_{t}^{(b_{m})};s_{t})$ and $J(\pi^{\ast}, \mathcal{P}, \mu) = J(\pi_{f_{\phi}}^{\ast}, \mathcal{P}_{z}, \mu_{z})$.

By the Bellman equation and the policy decomposition in Equation~\ref{appendix:equ:pi_decomp}, the value function of SDMDP at the optimal policy, for each time step $0 \leq t \leq T-1$, can be expressed as follows:
\begin{equation}
\label{appendix:equ:sdmdp_value}
\begin{split}
V^{\pi^{\ast}, \mathcal{P}}(s_{t}) &= \mathbb{E}_{a_{t}\sim\pi^{\ast}(a_{t}|s_{t})}[r_{t} + \gamma\mathbb{E}_{s_{t+1}\sim\mathcal{P}(s_{t+1}|s_{t},a_{t})}V^{\pi^{\ast},\mathcal{P}}(s_{t+1})] \\
&=\mathbb{E}_{a_{t}\sim\pi^{\ast}(a_{t}|f_{\phi}(z_{t}^{(b_{0})}, \dots, z_{t}^{(b_{m})};s_{t}))}\mathbb{E}_{z_{t}^{(b_{0})}, \dots, z_{t}^{(b_{m})}\sim\pi^{\ast}(z_{t}^{(b_{0})}, \dots, z_{t}^{(b_{m})}|s_{t})} \\
&\quad\quad[r_{t} + \gamma\mathbb{E}_{s_{t+1}\sim\mathcal{P}(s_{t+1}|s_{t},a_{t})}V^{\pi^{\ast},\mathcal{P}}(s_{t+1})] \\
&=\mathbb{E}_{z_{t}^{(b_{0})}\sim\pi^{\ast}(z_{t}^{(b_{0})}|s_{t})}\cdots\mathbb{E}_{z_{t}^{(b_{m})}\sim\pi^{\ast}(z_{t}^{(b_{m})}|s_{t})} \mathbb{E}_{a_{t}\sim\pi^{\ast}(a_{t}|f_{\phi}(z_{t}^{(b_{0})}, \dots, z_{t}^{(b_{m})};s_{t}))} \\
&\quad\quad[r_{t} + \gamma\mathbb{E}_{s_{t+1}\sim\mathcal{P}(s_{t+1}|s_{t},a_{t})}V^{\pi^{\ast},\mathcal{P}}(s_{t+1})]
\end{split}
\end{equation}

Likewise, the value function of LS-SDMDP at its optimal policy can be written as:
\begin{equation}
\label{appendix:equ:ls_sdmdp_value}
\begin{split}
V^{\pi_{f_{\phi}}^{\ast}, \mathcal{P}_{z}}(z_{t}^{(b_{0})}, \ldots, z_{t}^{(b_{m})};s_{t}) &= \mathbb{E}_{a_{t} \sim \pi_{f_{\phi}}^{\ast}(a_{t}|z_{t}^{(b_{0})}, \ldots, z_{t}^{(b_{m})};s_{t})}[r_{t} + \gamma \mathbb{E}_{s_{t+1}\sim\mathcal{P}(s_{t+1}|s_{t},a_{t})} \\
&\quad\quad\mathbb{E}_{z_{t+1}^{(b_{0})}\sim \pi^{(b_{0})\ast}}\cdots\mathbb{E}_{z_{t+1}^{(b_{m})}\sim \pi^{(b_{m})\ast}} V^{\pi_{f_{\phi}}^{\ast}, \mathcal{P}_{z}}(z_{t+1}^{(b_{0})}, \ldots, z_{t+1}^{(b_{m})};s_{t+1})]
\end{split}
\end{equation}

We use the inductive method to prove the relationship between the value functions at their respective optimal policies. At the time step $T-1$, since the value function at time step $T$ is equal to 0, the value functions defined in Equation~\ref{appendix:equ:sdmdp_value} and Equation~\ref{appendix:equ:ls_sdmdp_value} can be expressed as follows:
\begin{equation}
\begin{split}
&V^{\pi^{\ast}, \mathcal{P}}(s_{T-1}) =\mathbb{E}_{z_{T-1}^{(b_{0})}\sim\pi^{\ast}(z_{T-1}^{(b_{0})}|s_{T-1})}\cdots\mathbb{E}_{z_{T-1}^{(b_{m})}\sim\pi^{\ast}(z_{T-1}^{(b_{m})}|s_{T-1})} \\
&\quad\quad\quad\quad\quad\quad\quad\quad\mathbb{E}_{a_{T-1}\sim\pi^{\ast}(a_{T-1}|f_{\phi}(z_{T-1}^{(b_{0})}, \dots, z_{T-1}^{(b_{m})};s_{T-1}))}[r_{T-1}] \\
&V^{\pi_{f_{\phi}}^{\ast}, \mathcal{P}_{z}}(z_{T-1}^{(b_{0})}, \ldots, z_{T-1}^{(b_{m})};s_{T-1}) = \mathbb{E}_{a_{T-1} \sim \pi_{f_{\phi}}^{\ast}(a_{T-1}|z_{T-1}^{(b_{0})}, \ldots, z_{T-1}^{(b_{m})};s_{T-1})} [r_{T-1}]
\end{split}
\end{equation}

It is evident that the two value functions satisfy the relationship at time step $T-1$. We now assume that at any time step $t+1$, the relationship between these value functions holds. Substituting this assumption into the value function of SDMDP defined in Equation~\ref{appendix:equ:sdmdp_value}, yields the following:
\begin{equation}
\begin{split}
V^{\pi^{\ast}, \mathcal{P}}(s_{t}) &=\mathbb{E}_{z_{t}^{(b_{0})}\sim\pi^{\ast}(z_{t}^{(b_{0})}|s_{t})}\cdots\mathbb{E}_{z_{t}^{(b_{m})}\sim\pi^{\ast}(z_{t}^{(b_{m})}|s_{t})} \mathbb{E}_{a_{t}\sim\pi^{\ast}(a_{t}|f_{\phi}(z_{t}^{(b_{0})}, \dots, z_{t}^{(b_{m})};s_{t}))} \\
&\quad\quad[r_{t} + \gamma\mathbb{E}_{s_{t+1}\sim\mathcal{P}(s_{t+1}|s_{t},a_{t})}\mathbb{E}_{z_{t+1}^{(b_{0})}\sim \pi^{(b_{0})\ast}}\cdots\mathbb{E}_{z_{t+1}^{(b_{m})}\sim \pi^{(b_{m})\ast}} \\
&\quad\quad V^{\pi_{f_{\phi}}^{\ast}, \mathcal{P}_{z}}(z_{t+1}^{(b_{0})}, \ldots, z_{t+1}^{(b_{m})};s_{t+1})]
\end{split}
\end{equation}
We thus conclude that the relationship between the value functions at their optimal policies is given by $V^{\pi^{\ast}, \mathcal{P}}(s) = \mathbb{E}_{z^{(b_{0})} \sim \pi^{(b_{0})\ast}}\cdots \mathbb{E}_{z^{(b_{m})} \sim \pi^{(b_{m})\ast}}V^{\pi_{f_{\phi}}^{\ast}, \mathcal{P}_{z}}(z^{(b_{0})},\ldots,z^{(b_{m})};s)$.
\end{proof}

\section{Related Works}
\label{appendix:related_work}

In this section, we first survey advances in task-specific neural solvers, which form the foundation for multi-task approaches. Next, we review multi-task learning techniques for VRPs. Finally, we examine recent progress in mixture-of-specialized-experts (MoSE) methods, which inspire key implementation aspects of our method.

\textbf{Task-Specific Neural VRP Solvers.} Learning-based neural solvers, individually developed for each specific VRP, fall into three main categories: constructive methods, iterative methods, and divide-and-conquer methods. \emph{Constructive methods} craft end-to-end neural solvers to progressively infer solutions through autoregressive mechanisms. Pointer Network~\cite{vinyals2015pointer} pioneers this paradigm by effectively solving the small-scale traveling salesman problem (TSP), while Attention Model (AM)~\cite{kool2018attention} emerges as the dominant architecture for subsequent task-specific neural solvers trained via reinforcement learning (RL). Some approaches consider properties inherent in TSP and capacitated VRP (CVRP), including the multiple optima~\cite{kwon2020pomo} and the symmetry~\cite{kim2022sym}, to improve the solution quality. To further improve the cross-scale or cross-distribution generalization, advanced techniques, such as meta-learning~\cite{manchanda2022generalization,qiu2022dimes,zhou2023towards}, knowledge distillation~\cite{bi2022learning}, or ensemble learning~\cite{gao2023towards,grinsztajn2023winner,jiang2024ensemble}, have been successfully transferred from other domains. Additionally, neural solvers trained via supervised learning (SL) also exhibit strong generalization capabilities~\cite{drakulic2024bq,luo2024neural,luo2025boosting}. \emph{Iterative methods} leverage local search operators to consistently refine solutions until convergence. Specifically, L2I~\cite{lu2019learning} and NeuRewriter~\cite{chen2019learning} train RL policies to select among handcrafted operators for solution improvement. NLNS~\cite{hottung2020neural} alternates between heuristic destroy operators and learned repair policies to generate a new solution. DACT~\cite{ma2021learning} advances beyond prior works by learning expressive representations for RL policies, while Neural-LKH~\cite{xin2021neurolkh} and Neural k-opt~\cite{ma2023neuopt} specialize in k-opt algorithms by using RL policies to guide edge exchanges. However, these methods universally trade inference efficiency for solution quality with the aid of manually-designed operators. \emph{Divide-and-Conquer methods} decompose problem instances into smaller sub-instances that are solved independently. Prior works have attempted to solve larger instances using pretrained neural solvers on heuristically sampled sub-instances~\cite{fu2021generalize,kim2021learning,cheng2023select}. By comparison, L2D~\cite{li2021learning} and RGB~\cite{zong2022rbg} learn RL policies to select subgraphs from heuristic-generated candidates. Unlike these heuristic-based methods, TAM~\cite{hou2023generalize}, GLOP~\cite{ye2024glop}, UDC~\cite{zheng2024udc}, and HLGP~\cite{pan2025hlgp4cvrp} opt to learnable RL policies to globally partition entire instances into subproblems, which are then solved by pretrained local construction policies. However, these methods critically depend on the partition policy, where even minor performance degradation may significantly impair the overall VRP solution quality.

\textbf{Multi-Task Learning for VRPs.} To cope with practical scenarios involving multiple VRP variants, the efficient transfer learning has been leveraged to obtain specialized neural solvers by considering inherent similarities among these variants. Lin et al.~\cite{lin2024cross} propose a modular architecture consisting of a backbone pretrained on a canonical VRP and lightweight adapters inserted into the frozen backbone for the problem-specific fine-tuning. Likewise, GOAL~\cite{drakulic2024goal} extend this paradigm to general COPs, along with fewer adapters. Corr{\^e}a et al.~\cite{correa2025tunensearch} adopt full-parameter fine-tuning strategy for each downstream VRP variant, yielding multiple problem-specific but parameter-inefficient neural solvers. However, these methods struggle with the combinatorial explosion of VRP variants. Notably, although the LoRA adapter is used for problem-specific adaptations in~\cite{lin2024cross}, its potential as part of a unified solver to handle the exponential growth of variants remains unexplored.

As an alternative, unified solvers are designed to handle multiple VRP variants simultaneously while eliminating the need for adapters or backbone duplications. Liu et al.~\cite{liu2024multi} pioneer a single-solver framework that unifies variants via attribute compositions. MVMoE~\cite{zhou2024mvmoe} enhances capacity with mixture-of-expert (MoE) layers that implicitly specialize for different variants. RouteFinder~\cite{berto2024routefinder} employs modified Transformer model with mixed batch training for stable convergence. UNCO~\cite{jiang2024unco} resorts to the large language model (LLM) for the expressive instance and task embeddings. CaDA~\cite{li2024cada} utilizes a dual attention mechanism for superior cross-problem capabilities. Goh et al.~\cite{goh2025shield} considers the more practical multi-task, multi-distribution setting by using mixture-of-depths (MoD) and context-based clustering. Liu et al.~\cite{liu2025mixed} leverage mixed-curvature spaces in the feature fusion stage such that the model’s encoder can capture the geometric structures inherent in VRP instances. Lei et al.~\cite{lei2025boosting} strengthen cross-scale and cross-problem generalization capabilities of diffusion-based solvers during inference, which is orthogonal to our work. In contrast, Wang \& Yu~\cite{wangefficient} decompose the model into a shared encoder, and problem-specific headers and decoders, applying multi-armed bandits for dynamic task sampling during training. Li et al.~\cite{li2025toward} adopt the same architecture with aligned optimization directions across tasks. However, these methods fail to fully leverage the compositional structure inherent in VRP variants, each derived from a common set of basis VRP variants. Thus, the potential benefits of incorporating explicitly specialized basis solvers remain unexplored. Notably, while MoE layers are employed in~\cite{zhou2024mvmoe}, these learn implicit and less interpretable specialization rather than incorporating off-the-shelf basis solvers as experts.

\textbf{Mixture of Specialized Experts.} Prevailing methods focusing on mixture of specialized experts (MoSE) can be broadly categorized into two paradigms: merging entire models and module composition. Approaches based on merging entire models seek to combine independently trained models to efficiently achieve the performance comparable to model ensembling or multi-task learning. Most approaches are developed based on the shared model architecture. Wortsman et al.~\cite{wortsman2022modelsoups} exhibit averaging the parameters of models trained with different hyperparameter configurations improves accuracy. Matena \& Raffel~\cite{matena2022mergingmodels} develop a model aggregation framework guided by Fisher information. Yadav et al.~\cite{yadav2023tiesmerging} address interference effects in model merging to preserve critical knowledge. Tam et al.~\cite{tam2024merging} formulate model fusion as a linear equation system solved through conjugate gradient optimization. Daheim et al.~\cite{daheim2024model} introduce a uncertainty-based merging scheme to reduce the gradient mismatch. In contrast, several approaches enable merging across architecturally distinct models. Ainsworth et al.~\cite{ainsworth2023git} align weights of different models to enable the model merging. Stoica et al.~\cite{stoica2024zipit} introduce cross-task model fusion through feature-space integration. Additionally, weighted merging strategies have also been extensively explored. Ram\'{e} et al.~\cite{rame2023ratatouille} implement weighted model fusion to enhance out-of-distribution generalization, while Jin et al.~\cite{jin2023dataless} develop parameter-space merging optimized for multi-task performance through learned weighting schemes. Yang et al.~\cite{yang2024AdaMerging} further advance this paradigm by deriving merging coefficients via unsupervised entropy minimization. However, these approaches struggle with achieving precise layer-wise and token-wise aggregation within expert models, limiting their multi-task OOD generalization.

Our implementation aligns more closely with the module composition paradigm which supports the finer-grained aggregation. The most straightforward approach involves parameter averaging of task adapters. Chronopoulou et al.~\cite{chronopoulou2023adaptersoup} demonstrate test-time averaging of relevant adapters for task adaptation, while Ponti et al.~\cite{ponti2023combining} propose averaging adapter parameters selected by a routing function for new tasks. Beyond simple averaging, arithmetic operations have proven effective for the adapter composition. Chronopoulou et al.~\cite{chronopoulou2024language} demonstrate that basic arithmetic combinations enhance zero-shot cross-lingual transfer, while Zhang et al.~\cite{zhang2023composing} develop specialized arithmetic operations for lightweight adapters to boost generalization. Similarly, Ilharco et al.~\cite{ilharco2023editing} employ arithmetic combinations of task vectors to precisely steer model behavior for novel tasks. Task similarity further facilitates the effective module composition. Lv et al.~\cite{lv2023parameter} develop weighted aggregation of parameter-efficient adapters based on inter-task similarity measures for novel tasks. The MoCLE~\cite{gou2023mixture} framework addresses task conflicts and improves generalization by activating task-customized LoRA adapters based on clustered instructions and using a trainable universal adapter. Similarly, Wu et al.~\cite{wu2023pituning} propose modality-agnostic task similarity measures to combine lightweight adapters for enhanced performance on multimodal downstream tasks. Adaptive gating mechanisms have emerged as the predominant approach for dynamic model composition. LoRAHub~\cite{huang2024lorahub} employs few-shot examples to compute weighting coefficients for pre-trained LoRA modules, enabling competitive performance on novel tasks. Building on this, Ye et al.~\cite{ye2022eliciting} develop a hierarchical routing mechanism that selects optimal Transformer layers for enhanced task generalization. The MoLE~\cite{wu2024mixture} framework advances this paradigm through layer-wise gating functions that learn to optimally combine LoRA experts, with these operations being implemented via dedicated Transformer blocks. AdaMoLE~\cite{liu2024adamole} further incorporates dynamic threshold adaptation to handle varying task complexities. Caccia et al.~\cite{caccia2023multihead} introduce multi-head adapter routing, enabling fine-grained routing for the cross-task generalization. Module composition approaches also effectively address catastrophic forgetting. LoRAMoE~\cite{dou2024loramoe} mitigates world knowledge degradation in frozen backbone LLMs through strategic integration of multiple LoRA modules. Similarly, AdapterFusion~\cite{pfeiffer2021adapterfusion} combats catastrophic forgetting in multi-task learning scenarios by dynamically combining pretrained adapter modules. Furthermore, module composition techniques have also demonstrated significant potential for zero-shot generalization. PHATGOOSE~\cite{muqeeth2024learning} achieves this through inference-time aggregation of LoRA adapters via their pretrained gating functions. Similarly, Ostapenko et al.~\cite{ostapenko2024towards} develop a zero-shot routing mechanism that dynamically selects the most task-relevant adapters based on task similarity, eliminating the need for retraining. In addition, Zadouri et al.~\cite{zadouri2024pushing} advance efficient adaptation through fine-tuning of mixture of lightweight LoRA experts for limited computational cost scenarios. Gao et al.~\cite{gao2024higherlayersneedlora} uncover important architectural insights: different layers benefit from varying numbers of LoRA modules, and higher network layers particularly require more experts to maintain performance.


\end{document}